\theoremstyle{thmstyleone}%
\newtheorem{theorem}{Theorem}%  meant for continuous numbers
\newtheorem{proposition}[theorem]{Proposition}% 
\theoremstyle{thmstyletwo}%
\newtheorem{remark}{Remark}%
\theoremstyle{thmstylethree}%
\DeclareRobustCommand{\eg}{e.g.,\@\xspace}                 
\DeclareRobustCommand{\ie}{i.e.,\@\xspace}                    
\DeclareRobustCommand{\wrt}{w.r.t.\@\xspace}
\DeclareRobustCommand{\quotes}[1]{``#1''}
\DeclareRobustCommand{\algname}{Linear Correlated Features Aggregation\@\xspace}     
\DeclareRobustCommand{\algnameshort}{LinCFA\@\xspace}     
\DeclareMathOperator{\E}{\mathbb{E}}
\algnewcommand\algorithmicforeach{\textbf{for each}}
\newtheorem{lemma}{Lemma}
\begin{document}

\title[Interpretable Linear Dimensionality
Reduction]{Interpretable Linear Dimensionality Reduction based on Bias-Variance Analysis}

%%=============================================================%%
%% Prefix	-> \pfx{Dr}
%% GivenName	-> \fnm{Joergen W.}
%% Particle	-> \spfx{van der} -> surname prefix
%% FamilyName	-> \sur{Ploeg}
%% Suffix	-> \sfx{IV}
%% NatureName	-> \tanm{Poet Laureate} -> Title after name
%% Degrees	-> \dgr{MSc, PhD}
%% \author*[1,2]{\pfx{Dr} \fnm{Joergen W.} \spfx{van der} \sur{Ploeg} \sfx{IV} \tanm{Poet Laureate} 
%%                 \dgr{MSc, PhD}}\email{iauthor@gmail.com}
%%=============================================================%%
\author*[1]{\fnm{Paolo} \sur{Bonetti}}\email{paolo.bonetti@polimi.it}

\author[1]{\fnm{Alberto Maria} \sur{Metelli}}\email{ albertomaria.metelli@polimi.it}
%\equalcont{These authors contributed equally to this work.}

\author[1]{\fnm{Marcello} \sur{Restelli}}\email{marcello.restelli@polimi.it}
%\equalcont{These authors contributed equally to this work.}

\affil[1]{\orgdiv{DEIB}, \orgname{Politecnico di Milano}, \orgaddress{\city{Milan}, \postcode{20133}, \country{Italy}}}

%%==================================%%
%% sample for unstructured abstract %%
%%==================================%%

\abstract{One of the central issues of several machine learning applications on real data is the choice of the input features. Ideally, the designer should select only the relevant, non-redundant features to preserve the complete information contained in the original dataset, with little collinearity among features and a smaller dimension. This procedure helps mitigate problems like overfitting and the curse of dimensionality, which arise when dealing with high-dimensional problems. On the other hand, it is not desirable to simply discard some features, since they may still contain information that can be exploited to improve results.\\
Instead, \emph{dimensionality reduction} techniques are designed to limit the number of features in a dataset by projecting them into a lower-dimensional space, possibly considering all the original features. However, the projected features resulting from the application of dimensionality reduction techniques are usually difficult to interpret.\\
In this paper, we seek to design a principled dimensionality reduction approach that maintains the interpretability of the resulting features. Specifically, we propose a bias-variance analysis for linear models and we leverage these theoretical results to design an algorithm, \emph{\algname} (\algnameshort), which aggregates groups of continuous features with their average if their correlation is \quotes{sufficiently large}. In this way, all features are considered, the dimensionality is reduced and the interpretability is preserved. Finally, we provide numerical validations of the proposed algorithm both on synthetic datasets to confirm the theoretical results and on real datasets to show some promising applications.}

\keywords{Dimensionality reduction, linear regression, bias-variance tradeoff, feature aggregation.}

%%\pacs[JEL Classification]{D8, H51}

%%\pacs[MSC Classification]{35A01, 65L10, 65L12, 65L20, 65L70}

\maketitle

\section{Introduction}\label{intro}

Dimensionality reduction plays a crucial role in applying Machine Learning (ML) techniques in real-world datasets~\citep{Sorzano2014}. Indeed, in a large variety of scenarios, data are high-dimensional with a large number of correlated features. For instance, \emph{financial} datasets are characterized by time series representing the trend of stocks in the financial market, and \emph{climatological} datasets include several highly-correlated features that, for example, represent temperature value at different points on the Earth. On the other hand, only a small subset of features is usually significant for learning a specific task, and it should be identified to train a well-performing ML algorithm. In particular, considering many redundant features boosts the model complexity, which increases its variance and the risk of overfitting~\citep{hastie2009}. Furthermore, when the number of features is high, and comparable with the number of samples, the available data become sparse, leading to poor performance (\emph{curse of dimensionality}~\citep{bishop2006}). For this reason, \emph{dimensionality reduction} and \emph{feature selection} techniques are usually applied. Feature selection~\citep{chandrashekar2014survey} focuses on choosing a subset of features important for learning the target following a specific criterion (e.g., the most correlated with the target, the ones that produce the highest validation score), discarding the others. On the other hand, dimensionality reduction methods~\citep{Sorzano2014} maintain all the features projecting them in a (much) lower dimensional space, producing new features that are linear or non-linear combinations of the original ones. Compared to feature selection, this latter approach has the advantage of reducing the dimensionality without discarding any feature and exploiting all of their contributions to the projections. Moreover, recalling that the variance of a sum of random variables is smaller than or equal to the original one, the features computed with linear dimensionality reduction have smaller variance. However, the reduced features might be less interpretable since they are linear combinations of the original ones with different coefficients.
\\ \ \\ 
In this paper, we propose a novel dimensionality reduction method that exploits the information of each feature, without discarding any of them, while preserving the interpretability of the resulting feature set. To this end, we \emph{aggregate} features through their average, and we propose a criterion that aggregates two features when it is beneficial in terms of the bias-variance tradeoff. Specifically, we focus on linear regression, assuming a linear relationship between the features and the target. In this context, the main idea of this work is to identify a group of \emph{aggregable} features and substitute them with their average. Intuitively, in linear settings, two features should be aggregated if their correlation is \emph{large enough}. We identify a theoretical threshold on the minimum correlation for which it is profitable to unify the two features. This threshold is the minimum correlation value between two features for which, comparing the two linear regression models before and after the aggregation, the variance decrease is larger than the increase of bias. 
\\ \ \\
Choosing the average to aggregate the features is to preserve interpretability (the resulting reduced feature is just the average of $k$ original features). Another advantage is that the variance of the average is smaller than the variance of the original features if they are not perfectly correlated. Indeed, assuming that we unify $k$ standardized features, the variance of their average becomes $var(\bar{X})=\frac{1}{k}+\frac{k-1}{k}\rho$, with $\rho$ being the average correlation of distinct features~\citep{jacod2004}. The main restriction of choosing the average to aggregate is that we will only consider continuous features since the mean is not well-defined for categorical features. Moreover, it would be improper to evaluate the mean between heterogeneous features: interpretability is preserved only if the aggregation is meaningful. 
Indeed, the practical motivation behind this work relates to spatially distributed climatic datasets, where thousands of measurements of the same variable (\eg temperature) are available at different points on the Earth. In this context, feeding a supervised model directly with all the variables would be impractical, both in terms of the curse of dimensionality and collinearity, since measurements that are close in space have a large correlation. Aggregating these variables with their mean leads to a reduced dataset with less collinearity and preserving interpretability, since each variable represents the mean of the measurement of the considered quantity over an area identified in a data-driven way, with theoretical guarantees.\\ 
Another issue may arise when considering features with a different unit of measurement or scale, for this reason we will consider standardized variables.
\\ \ \\
\textbf{Outline:}
The paper is structured as follows. In Section \ref{sec:dimRed}, we formally define the problem, and we provide a brief overview of the main dimensionality reduction methods. Section \ref{sec:method} introduces the methodology that will be followed throughout the paper. In Section \ref{sec:2Dalgorithm}, the main theoretical result is presented for the bivariate setting, which is then generalized to $D$ dimensions in Section \ref{sec:3andD}. Finally, in Section \ref{sec:appl}, the proposed algorithm, \emph{\algname} (\algnameshort), is applied to synthetic and real-world datasets to experimentally confirm the result and lead to the conclusions of Section \ref{sec:conclusion}. The paper is accompanied by supplementary material. Specifically, Appendix \ref{app:twoDim} contains the proofs and technical results of the bivariate case that are not reported in the main paper, Appendix \ref{app:addRes} shows an additional finite-samples bivariate analysis, Appendix \ref{subsec:confInt} elaborates on the bivariate results to be composed only of theoretical or empirical quantities, Appendix \ref{app:proofs3D} contains the proofs and technical results of the three-dimensional setting, and Appendix \ref{app:exp} presents in more details the experiments performed.

\section{Preliminaries}\label{sec:dimRed}
In this section, we introduce the notation and assumptions employed in the paper (Section~\ref{sec:notation}) and we survey the main related works (Section~\ref{sec:relatedWorks}).

\subsection{Notation and Assumptions}\label{sec:notation}

Let $(X,Y)$ be random variables with joint probability distribution $P_{X,Y}$, where $X\in \mathbb{R}^D$ is the $D$-dimensional vector of features and $Y\in \mathbb{R}$ is the scalar target of a supervised learning regression problem.
Given $N$ data sampled from the distribution $P_{X,Y}$, we denote the corresponding feature matrix as $\mathbf{X}\in \mathbb{R}^{N\times D}$ and the target vector as $\mathbf{Y}\in \mathbb{R}^{N}$. Each element of the random vector $X$ is denoted with $x_i$ and it is called a \emph{feature} of the ML problem. We denote as $y$ the scalar target random variable and with $\sigma^2_y$ and $\hat{\sigma}^2_y$ its variance and sample variance. For each pair of random variables $a,b$ we denote with $\sigma^2_{a}$, $cov(a,b)$ and $\rho_{a,b}$ respectively the variance of the random variable $a$ and its covariance and correlation with the random variable $b$. Their estimators are $\hat{\sigma}^2_{a}$, $\hat{cov}(a,b)$ and $\hat{\rho}_{a,b}$. 
Finally, the expected value and the variance operators applied on a function $f(a)$ of a random variable $a$ \wrt its distribution are denoted with $\mathbb{E}_a[f(a)]$ and $var_a(f(a))$.

A dimensionality reduction method can be seen as a function $\bm{\phi}: \mathbb{R}^{N \times D} \rightarrow \mathbb{R}^{N \times d} $, mapping the original feature matrix $\mathbf{X}$  with dimensionality $D$ into a reduced dataset $\mathbf{U} = \bm{\phi}(\mathbf{X}) \in \mathbb{R}^{N\times d}$ with $d<D$. The goal of this projection is to reduce the (possibly huge) dimensionality of the original dataset while keeping as much information as possible in the reduced dataset. This is usually done by preserving a distance (e.g., Euclidean, geodesic) or the probability of a point to have the same neighbours after the projection~\citep{zaki2014}.

In this paper, we assume a \emph{linear} dependence between the features $X$ and the target $Y$, \ie $Y = w^{T} X + \epsilon$, where $\epsilon$ is a zero-mean noise, independent of $X$, and $w \in \mathbb{R}^{D}$ is the weight vector. Without loss of generality, the expected value of each feature is assumed to be zero, \ie $\mathbb{E}[x_i]=\mathbb{E}[Y]=0\ \forall i\in \{1,\dots,D\}$. Finally, we consider linear regression as ML method: the $i$-th estimated coefficient is denoted with $\hat{w}_i$, the estimated noise with $\hat{\epsilon}$ and the predicted (scalar) target with $\hat{y}$.

\subsection{Existing methods}\label{sec:relatedWorks}
\subsubsection{Unsupervised Dimensionality Reduction} Classical dimensionality reduction methods can be considered as \emph{unsupervised} learning techniques which, in general, do not take into account the target, but they focus on projecting the dataset $\mathbf{X}$ through the minimization of a given loss.

The most popular unsupervised linear dimensionality reduction technique is Principal Components Analysis (PCA)~\citep{pearson1901,Hotelling1933}, a linear method that embeds the data into a linear subspace of dimension $d$ describing as much as possible the variance in the original dataset. One of the main difficulties of applying PCA in real problems is that it performs linear combinations of possibly all the $D$ features, usually with different coefficients, losing the interpretability of each principal component and suffering the curse of dimensionality. To overcome this issue, there exist some variants like svPCA~\citep{Ulfarsson2011}, which forces most of the weights of the projection to be zero. This contrasts with the approach proposed in this paper, which aims to preserve interpretability while exploiting the information yielded by each feature. \\
There exist several variants to overcome different issues of PCA (e.g., out-of-sample generalization, linearity, sensitivity to outliers) and other methods that approach the problem from a different perspective (e.g., generative approach with Factor Analysis, independence-based approach with Independent Component Analysis, matrix factorization with SVD), an extensive overview can be found in~\citep{Sorzano2014}. A broader overview of linear dimensionality reduction techniques can be found in~\citep{cunningham2015}.
Specifically, SVD~\citep{Golub1970} leads to the same result of PCA from an algebraic perspective through matrix decomposition. Factor analysis~\citep{Thurstone1931} assumes that the features are generated from a smaller set of latent variables, called factors, and tries to identify them by looking at the covariance matrix. Both PCA and Factor Analysis can reduce through rotations the number of features that are combined for each reduced component to improve the interpretability, but their coefficients can still be different and hard to interpret. Finally, Independent Component Analysis~\citep{Hyvarinen1999} is an information theory approach that looks for independent components (not only uncorrelated as PCA) that are not constrained to be orthogonal. This method is more focused on splitting different signals mixed between features than on reducing their dimensionality, which can be done as a subsequent step with feature selection, which would be simplified from the fact that the new features are independent.

Differently from the linear nature of PCA, many non-linear approaches exist, following the idea that the data can be projected onto non-linear manifolds. Some of them optimize a convex objective function (usually solvable through a generalized eigenproblem) trying to preserve global similarity of data (e.g., Isomap~\citep{Tenenbaum2000}, Kernel PCA~\citep{shawe2004}, Kernel Entropy Component Analysis~\citep{Jenssen2010}, MVU~\citep{Weinberger2004}, Diffusion Maps~\citep{Lafon2006}) or local similarity of data (LLE~\citep{Roweis2000}, Laplacian Eigenmaps~\citep{Belkin2001}, LTSA~\citep{Zhang2004}). Other methods optimize a non-convex objective function with the purpose of rescaling Euclidean distance (Sammon Mapping~\citep{Sammon1969}) introducing more complex structures like neural networks (Multilayer Autoencoders~\citep{Hinton2006}) or aligning mixtures of models (LLC~\citep{teh2002}). Since in this paper we assume linearity, non-linear techniques for dimensionality reduction cannot outperform traditional linear techniques such as PCA and its variants in linear contexts~\citep{vandermaaten2009} or with real data~\citep{Espadoto2021}, therefore they will not be compared to the proposed method. On the contrary, classical PCA is one of the most applied linear unsupervised dimensionality reduction techniques in ML applications, therefore, it will be compared with the proposed algorithm \algnameshort in the experimental section.
\\ 

\subsubsection{Supervised Dimensionality Reduction} Supervised dimensionality reduction is a less-known but powerful approach when the main goal is to perform classification or regression rather than learn a data projection into a lower dimensional space. The methods of this subfield are usually based on classical unsupervised dimensionality reduction, adding the regression or classification loss in the optimization phase. In this way, the reduced dataset $\mathbf{U}$ is the specific projection that allows maximizing the performance of the considered supervised problem. This is usually done in classification settings, minimizing the distance within the same class and maximizing
the distance between different classes in the same fashion as Linear Discriminant Analysis~\citep{fisher1936}. The other possible approach is to directly integrate the loss function for classification or regression. Following the taxonomy presented in~\citep{chao2019}, these supervised approaches can be divided into PCA-based, NMF-based (mostly linear), and manifold-based (mostly non-linear).

A well-known PCA-based algorithm is Supervised PCA. The most straightforward approach of this kind has been proposed in~\citep{Bair2006}, which is a heuristic that applies classical PCA only to the subset of features mostly related to the target. A more advanced approach can be found in~\citep{Barshan2011}, where the original dataset is orthogonally projected onto a space where the features are uncorrelated, simultaneously maximizing the dependency between the reduced dataset and the target by exploiting Hilbert–Schmidt independence criterion.
The goal of Supervised PCA is similar to that of the algorithm proposed in this paper. The main difference is that we are not looking for an orthogonal projection, but we aggregate features by computing their means (thus, two projected features can be correlated) to preserve interpretability. Many variants of Supervised PCA exist, e.g., to make it a non-linear projection or to make it able to handle missing values~\citep{yu2006}. Since it is defined in the same context (linear) and has the same final purpose (minimize the mean squared regression error), supervised-PCA will be compared with the approach proposed by this paper in the experimental section.
NMF-based algorithms~\citep{Jing2012,lu2017} have better interpretability than PCA-based, but they focus on the non-negativity property of features, which is not a general property of linear problems. Manifold-based methods~\citep{ribeiro2008,Zhang2018,Zhang2009,raducanu2012}, on the other hand, perform non-linear projections with higher computational costs. Therefore, both families of techniques will not be considered in this linear context.

\section{Proposed Methodology}
\label{sec:method}
In this section, we introduce the proposed dimensionality reduction algorithm, named \emph{\algname} (\algnameshort), from a general perspective. The approach is based on the following simple idea. Starting from the features $x_i$ of the $D$-dimensional vector $X$, we build the aggregated features $u_k$ of the $d$-dimensional vector $U$. The dimensionality reduction function $\bm{\phi}$ is fully determined by a partition  $\bm{\mathcal{P}}=\{\mathcal{P}_1,\dots,\mathcal{P}_d\}$ of the set of features $\{x_1,\dots,x_D\}$. In particular,  each feature $x_i$ is assigned to a set $\mathcal{P}_k\in \bm{\mathcal{P}}$ and each feature $u_k$ is computed as the average of the features in the $k$-th set of $\bm{\mathcal{P}}$:

\begin{align}
    u_k = \frac{1}{\lvert\mathcal{P}_k\rvert} \sum_{i \in \mathcal{P}_k} x_i.
\end{align}

In the following sections, we will focus on finding theoretical guarantees to determine how to build the partition $\bm{\mathcal{P}}$. Intuitively, two features will belong to the same element of the partition $\bm{\mathcal{P}}$ if their correlation is larger than a threshold. This threshold is formalized as the minimum correlation for which the Mean Squared Error (\textit{MSE}) of the regression with a single aggregated feature (i.e., the average) is not worse than the \textit{MSE} with the two separated features.\footnote{For this reason, the approach can be considered \emph{supervised}.} In particular, it is possible to decompose the \textit{MSE} as follows (bias-variance decomposition~\citep{hastie2009}):
\begin{equation}\label{eq:BiasVarDec}
\begin{gathered}
	      \underbrace{\mathbb{E}_{x,y,\mathcal{T}}[(h_\mathcal{T}(x)-y)^2]}_{\text{MSE}}
	      = \underbrace{\mathbb{E}_{x,\mathcal{T}}[(h_\mathcal{T}(x)-\bar{h}(x))^2]}_{\text{variance}}
	     \\\qquad+\underbrace{\mathbb{E}_{x}[(\bar{h}(x)-\bar{y}(x))^2]}_{\text{bias}}
	      +\underbrace{\mathbb{E}_{x,y}[(\bar{y}(x)-y)^2]}_{\text{noise}},
	\end{gathered}
\end{equation}
where $x,y$ are the features and the target of a test sample, $\mathcal{T}$ is the training set, $h_\mathcal{T}(\cdot)$ is the ML model trained on dataset $\mathcal{T}$, $\bar{h}(\cdot)$ is its expected value \wrt\ the training set $\mathcal{T}$ and $\bar{y}$ is the expected value of the test output target $y$ \wrt\ the input features $x$.
Decreasing model complexity leads to a decrease in variance and an increase in bias. Therefore, in the analysis, we will compare these two variations and identify a threshold as the minimum value of correlation for which, after the aggregation, the decrease of variance is greater or equal than the increase of bias, so that the \textit{MSE} will be greater or equal than the original one.

\section{Two-dimensional Analysis}
\label{sec:2Dalgorithm}
This section introduces the theoretical analysis, performed in the bivariate setting, that identifies the minimum value of the correlation between the two features for which it is convenient to aggregate them with their mean. In particular, Subsection \ref{subsec:setting} introduces the assumptions under which the analysis is performed. Subsection \ref{subsec:var} computes the amount of variance decreased when performing the aggregation. Then, Subsection \ref{subsec:bias} evaluates the amount of bias increased due to the aggregation. Finally, Subsection \ref{subsec:corr2D} combines the two results identifying the minimum amount of correlation for which it is profitable to aggregate the two features. In addition, Appendix \ref{app:twoDim} contains the proofs and technical results that are not reported in the main paper, Appendix \ref{app:addRes} includes an additional finite-sample analysis, and Appendix \ref{subsec:confInt} computes confidence intervals that allow stating the results with only theoretical or empirical quantities.

\subsection{Setting}\label{subsec:setting}
In the two-dimensional case ($D=2$), we consider the relationship between the two features $x_1$, $x_2$ and the target $y$ to be linear and affected by Gaussian noise: $y=w_1x_1+w_2x_2+\epsilon$, with $\epsilon\sim \mathcal{N}(0,\sigma^2)$. As usually done in linear regression~\citep{johnson2007}, we assume the training dataset $\mathbf{X}$ to be known. Moreover, recalling the zero-mean assumption ($\E[x_1]=\E[x_2]=0$), it follows $\mathbb{E}[y]=w_1\mathbb{E}[x_1]+w_2\mathbb{E}[x_2]=0$ and $\sigma^2_y = \sigma^2$.

We compare the performance (in terms of bias and variance) of the two-dimensional linear regression $\hat{y}=\hat{w}_1x_1+\hat{w}_2x_2$ with the one-dimensional linear regression, which takes as input the average between the two features $\hat{y}=\hat{w}\frac{x_1+x_2}{2}=\hat{w}\Bar{x}$. As a result of this analysis, we will define conditions under which aggregating features $x_1$ and $x_2$ in the feature $\Bar{x}$ is convenient.

\subsection{Variance Analysis}\label{subsec:var}
In this subsection, we compare the variance of the two models with both an asymptotic and a finite-samples analysis. Since the two-dimensional model estimates two coefficients, it is expected to have a larger variance. Instead, aggregating the two features reduces the variance of the model.

\subsubsection{Variance of the estimators}\label{subsubsec:varEst}
A quantity, necessary to compute the variance of the models that will be compared throughout this subsection, is the covariance matrix of the vector $\hat{w}$ of the estimated regression coefficients \wrt\ the training set. Given the training features $\mathbf{X}$, a known result in a general linear problem with $n$ samples and $D$ features~\citep{johnson2007} (see Appendix \ref{app:twoDim} for the computations) is:
\begin{equation}
\label{eq:VarEst}
var_{\mathcal{T}}(\hat{w}\lvert\mathbf{X}) = (\mathbf{X}^T\mathbf{X})^{-1}\sigma^2.
\end{equation}
The following lemma shows the variance of the weights for the two specific models that we are comparing.
\begin{lemma}\label{lemma:varianceEstimator}
Let the real model be linear with respect to the features $x_1$ and $x_2$ ($y=w_1x_1+w_2x_2+\epsilon$). In the one-dimensional case $\hat{y}=\hat{w}\bar{x}$, we have:
\begin{equation}
 var_{\mathcal{T}}(\hat{w}\lvert\mathbf{X}) = \frac{\sigma^2}{(n-1)\hat{\sigma}^2_{\bar{x}}}.
\label{eq:varEst1d}
\end{equation}
In the two-dimensional case $\hat{y}=\hat{w}_1x_1+\hat{w}_2x_2$, we have:
\begin{equation}
\begin{aligned}
var_{\mathcal{T}}(\hat{w}\lvert\mathbf{X}) & = \frac{\sigma^2}{(n-1)(\hat{\sigma}^2_{x_1}\hat{\sigma}^2_{x_2} - \hat{cov}(x_1,x_2)^2)} \\ & \quad \times \begin{bmatrix} \hat{\sigma}^2_{x_2} & -\hat{cov}(x_1,x_2) \\ -\hat{cov}(x_1,x_2) & \hat{\sigma}^2_{x_1} \end{bmatrix}.
\label{eq:varEst2d}
\end{aligned}
\end{equation}
\end{lemma}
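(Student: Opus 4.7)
The plan is to apply Equation \eqref{eq:VarEst} directly in each of the two regimes, i.e., to evaluate $(\mathbf{X}^T\mathbf{X})^{-1}\sigma^2$ by computing the Gram matrix of the relevant design matrix and inverting it. Because the features are assumed to have expected value zero, so that centering plays no role, I will use the identifications $\sum_{i=1}^n x_{ji}^2 = (n-1)\hat{\sigma}^2_{x_j}$ and $\sum_{i=1}^n x_{ji}x_{ki} = (n-1)\hat{cov}(x_j,x_k)$, which are exactly what produces the $(n-1)$ factors appearing in the stated expressions. I would state this convention once at the top of the proof and then carry out the two cases separately.

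For the one-dimensional model $\hat{y}=\hat{w}\bar{x}$, the design matrix $\mathbf{X}\in\mathbb{R}^{n\times 1}$ has entries $\bar{x}_i=(x_{1i}+x_{2i})/2$, so $\mathbf{X}^T\mathbf{X}=\sum_{i=1}^n \bar{x}_i^2=(n-1)\hat{\sigma}^2_{\bar{x}}$ is a scalar whose reciprocal multiplied by $\sigma^2$ is exactly \eqref{eq:varEst1d}.

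For the two-dimensional model $\hat{y}=\hat{w}_1x_1+\hat{w}_2x_2$, the design matrix has two columns and $\mathbf{X}^T\mathbf{X}$ is the $2\times 2$ symmetric matrix with diagonal entries $(n-1)\hat{\sigma}^2_{x_1}$ and $(n-1)\hat{\sigma}^2_{x_2}$ and off-diagonal entry $(n-1)\hat{cov}(x_1,x_2)$. Applying the standard cofactor formula for $2\times 2$ inversion, the determinant is $(n-1)^2\bigl(\hat{\sigma}^2_{x_1}\hat{\sigma}^2_{x_2}-\hat{cov}(x_1,x_2)^2\bigr)$ and the adjugate swaps the diagonal entries while negating the off-diagonal ones. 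One factor of $(n-1)$ cancels between the adjugate and the determinant, and multiplying by $\sigma^2$ delivers \eqref{eq:varEst2d}.

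There is no genuine obstacle: the argument is a direct $1\times 1$ and $2\times 2$ inversion of the Gram matrix together with the bookkeeping above. The only place where care is required is precisely the correspondence between the uncentered sums $\sum_i x_{ji}x_{ki}$ and the sample (co)variances, which depends on the zero-mean assumption $\E[x_1]=\E[x_2]=0$ and on adopting the unbiased normalization $1/(n-1)$; once this convention is fixed, everything else is routine algebra that fits naturally in the appendix.
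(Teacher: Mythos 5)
Your proposal matches the paper's own proof in Appendix \ref{app:twoDim}: both start from Equation \eqref{eq:VarEst}, form the $1\times 1$ and $2\times 2$ Gram matrices, invert them, and use the zero-mean assumption to identify $\sum_i x_{ji}x_{ki}$ with $(n-1)\hat{cov}(x_j,x_k)$ (and likewise for the variances). The argument is correct and essentially identical to the one in the paper.
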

\begin{proof}
The proof of the two results follows from Equation \eqref{eq:VarEst}, see Appendix
\ref{app:twoDim} for the computations. 
\end{proof}

\subsubsection{Variance of the model}\label{subsubsec:varMod}
Recalling the general definition of variance of the model from Equation \eqref{eq:BiasVarDec}, in the specific case of linear regression it becomes:
\begin{equation}
\E_{x,{\mathcal{T}}}[(h_{\mathcal{T}}(x)-\Bar{h}(x))^2] = \E_{x,{\mathcal{T}}}[(\hat{w}^T x-\E_{\mathcal{T}}[\hat{w}^Tx])^2].
\label{eq:modelVar}
\end{equation}

The following result shows the variance of the two specific models (univariate and bivariate) considered in this section.

\begin{theorem}
\label{thm:variance2D}
Let the real model be linear with respect to the two features $x_1$ and $x_2$ ($y=w_1x_1+w_2x_2+\epsilon$). Then, in the one dimensional case $y=\hat{w}\frac{x_1+x_2}{2}=\hat{w}\bar{x}$, we have:
\begin{equation}
\label{eq:variance1D}
\begin{gathered}
\E_{x,{\mathcal{T}}}[(h_{\mathcal{T}}(x)-\Bar{h}(x))^2\lvert \mathbf{X}] = \sigma_{x_1+x_2}^2\frac{\sigma^2}{(n-1)\hat{\sigma}^2_{x_1+x_2}}.
\end{gathered}
\end{equation}
In the two dimensional case $y=\hat{w}_1x_1+\hat{w}_2x_2$, we have:
\begin{equation}
\label{eq:variance2D}
\begin{aligned}
& \E_{x,{\mathcal{T}}}[(h_{\mathcal{T}}(x)-\Bar{h}(x))^2\lvert \mathbf{X}] \\ 
& \quad = \frac{\sigma^2(\sigma^2_{x_1}\hat{\sigma}^2_{x_2}+\sigma^2_{x_2}\hat{\sigma}^2_{x_1}-2cov(x_1,x_2)\hat{cov}(x_1,x_2))}{(n-1)(\hat{\sigma}^2_{x_1}\hat{\sigma}^2_{x_2} - \hat{cov}(x_1,x_2)^2)}. 
\end{aligned}
\end{equation}

\end{theorem}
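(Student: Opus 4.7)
The core observation is that for a linear predictor $h_{\mathcal{T}}(x)=\hat{w}^{T}x$, the quantity $h_{\mathcal{T}}(x)-\bar{h}(x)=(\hat{w}-\E_{\mathcal{T}}[\hat{w}])^{T}x$ is a bilinear form in the training randomness (through $\hat w$) and the test point $x$. My plan is to factorize the double expectation in Equation~\eqref{eq:modelVar} by first conditioning on the test point $x$ and the training design $\mathbf{X}$, which reduces the inner expectation over $\mathcal{T}$ to a quadratic form $x^{T}\,var_{\mathcal{T}}(\hat w\lvert\mathbf{X})\,x$. Then, I would push the outer expectation $\E_{x}[\cdot]$ inside and use the standard identity $\E_{x}[x^{T}A x]=\mathrm{tr}(A\,\E[xx^{T}])$. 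Since by the zero-mean assumption $\E[xx^{T}]$ is exactly the (population) covariance matrix of the features, the problem reduces to plugging in the estimator covariance from Lemma~\ref{lemma:varianceEstimator} and computing a trace.

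For the one-dimensional case, the predictor is the scalar $\hat{w}\bar{x}$, so the quadratic form becomes simply $\bar{x}^{2}\,var_{\mathcal{T}}(\hat w\lvert\mathbf{X})$. Taking expectation over $x$ yields $\E_{x}[\bar{x}^{2}]=\sigma^{2}_{\bar{x}}$, and combining with Equation~\eqref{eq:varEst1d} gives
\begin{equation*}
\sigma^{2}_{\bar{x}}\cdot\frac{\sigma^{2}}{(n-1)\hat{\sigma}^{2}_{\bar{x}}}.
\end{equation*}
The final cosmetic step is to rewrite this ratio in the $x_{1}+x_{2}$ form stated in Equation~\eqref{eq:variance1D}, which follows because both $\sigma^{2}_{\bar{x}}$ and $\hat{\sigma}^{2}_{\bar{x}}$ carry the same $1/4$ factor relative to $\sigma^{2}_{x_1+x_2}$ and $\hat{\sigma}^{2}_{x_1+x_2}$, so the factors cancel.

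For the two-dimensional case, I set $\Sigma\coloneqq var_{\mathcal{T}}(\hat w\lvert\mathbf{X})$ as given by Equation~\eqref{eq:varEst2d} and $C\coloneqq \E[xx^{T}]$, the latter being the symmetric $2\times 2$ matrix with diagonal $(\sigma^{2}_{x_1},\sigma^{2}_{x_2})$ and off-diagonal $cov(x_1,x_2)$. The quantity of interest becomes $\mathrm{tr}(\Sigma C)$. The scalar prefactor $\sigma^{2}/\bigl[(n-1)(\hat{\sigma}^{2}_{x_1}\hat{\sigma}^{2}_{x_2}-\hat{cov}(x_1,x_2)^{2})\bigr]$ pulls out of the trace; the trace of the remaining $2\times 2$ product is
\begin{equation*}
\hat{\sigma}^{2}_{x_2}\sigma^{2}_{x_1}+\hat{\sigma}^{2}_{x_1}\sigma^{2}_{x_2}-2\,\hat{cov}(x_1,x_2)\,cov(x_1,x_2),
\end{equation*}
which directly matches Equation~\eqref{eq:variance2D}.

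The main obstacle is not any single step but bookkeeping: I have to be careful that the conditioning $\lvert\mathbf{X}$ is preserved throughout (the sample variances and covariances in $\Sigma$ are deterministic given $\mathbf{X}$, while the population quantities in $C$ come from the test-point expectation), and that the bias term of the estimator does not enter---which is the case precisely because $\hat w$ is unbiased under the linear-model assumption, so centering by $\E_{\mathcal{T}}[\hat w]$ is what produces the clean covariance matrix. Once these points are tracked, both identities reduce to Lemma~\ref{lemma:varianceEstimator} combined with one trace computation.
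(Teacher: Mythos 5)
Your proposal is correct and is essentially the paper's own argument: the appendix proof expands $(x_1(\hat{w}_1-\E_{\mathcal{T}}[\hat{w}_1])+x_2(\hat{w}_2-\E_{\mathcal{T}}[\hat{w}_2]))^2$ term by term and uses train/test independence plus the zero-mean assumption to obtain $var_x(x_1)var_{\mathcal{T}}(\hat{w}_1)+var_x(x_2)var_{\mathcal{T}}(\hat{w}_2)+2cov_x(x_1,x_2)cov_{\mathcal{T}}(\hat{w}_1,\hat{w}_2)$, which is exactly your $\mathrm{tr}(\Sigma C)$ written out, and the one-dimensional case including the $1/4$ cancellation is identical. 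One small remark: your appeal to unbiasedness of $\hat{w}$ is unnecessary (and in the one-dimensional, misspecified case $\hat{w}$ is not unbiased); the centering by $\E_{\mathcal{T}}[\hat{w}]$ comes for free from the definition of the variance term, so the argument goes through regardless.
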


\begin{proof}
The proof combines the results of Lemma \ref{lemma:varianceEstimator} with the definition of variance for a linear model given in Equation \eqref{eq:modelVar}. The detailed proof can be found in Appendix \ref{app:twoDim}.
\end{proof}

\subsubsection{Comparisons}\label{subsubsec:varComp}
In this subsection, the difference between the variance of the linear regression with two features $x_1$ and $x_2$ and the variance of the linear regression with one feature $\bar{x}=\frac{x_1+x_2}{2}$ is shown. We will prove that, as expected, this difference is positive and it represents the reduction of variance when substituting a two-dimensional random vector with the average of its components.

First, the \emph{asymptotic} analysis is performed, obtaining a result that can be applied with good approximation when a large number of samples $n$ is available. Then, the analysis is repeated in the \emph{finite-samples} setting, with an additional assumption on the variance and sample variance of the features $x_1$ and $x_2$, that simplify the computations.\footnote{The assumption that we will introduce for the finite-samples setting might be restrictive.  However, it allows simplifying the computations. A more general finite-sample analysis has also been performed, only assuming unitary variances. This more general analysis leads to more convolute expressions and for this reason it is reported in Appendix \ref{app:addRes}.} 

\paragraph*{Case I: asymptotic analysis}
In the limit of the number of samples $n$ of the training dataset $\mathcal{T}$ approaching infinity, the estimators that we are considering are \emph{consistent}, \ie they converge in probability to the real values of the parameters (\eg $\mathrm{plim}_{n\to \infty}\hat{\sigma}^2_{x_1}=\sigma^2_{x_1}$). Therefore the following result can be proved.
\begin{theorem}\label{thm:asymVar2D}
If the number of samples $n$ tends to infinity, let $\Delta_{var}^{n\to \infty}$ be the difference between the variance of the two-dimensional and the one-dimensional linear models, it is equal to:
%\am{Il simbolo $\Delta_{var}^{n\to \infty}$ non e' definito \color{green} l'ho esplicitato nel thm}
\begin{equation}\label{eq:asymDiffVar}
\Delta_{var}^{n\to \infty}=\frac{\sigma^2}{n-1} \ge 0, 
\end{equation}
that is a positive quantity and tends to zero when the number of samples tends to infinity.
\end{theorem}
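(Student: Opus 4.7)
The plan is to directly apply the consistency of the sample-variance and sample-covariance estimators to the exact expressions for the model variances given in Theorem~\ref{thm:variance2D}, then subtract. Since the estimators $\hat{\sigma}^2_{x_1}, \hat{\sigma}^2_{x_2}, \hat{cov}(x_1,x_2)$ are consistent, we have $\mathrm{plim}_{n\to\infty}\hat{\sigma}^2_{x_i}=\sigma^2_{x_i}$ and $\mathrm{plim}_{n\to\infty}\hat{cov}(x_1,x_2)=cov(x_1,x_2)$, and likewise $\hat{\sigma}^2_{x_1+x_2}\to\sigma^2_{x_1+x_2}$. By the continuous mapping theorem (since all denominators are nonzero generically, as long as $x_1,x_2$ are not perfectly collinear), we may substitute the population quantities inside Equations \eqref{eq:variance1D} and \eqref{eq:variance2D} when taking the asymptotic limit.

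For the one-dimensional model, the ratio $\sigma^2_{x_1+x_2}/\hat{\sigma}^2_{x_1+x_2}$ collapses to $1$, so the variance of the aggregated model converges to $\sigma^2/(n-1)$. For the two-dimensional model I would observe that, after substitution, the numerator inside the parentheses becomes $2\sigma^2_{x_1}\sigma^2_{x_2} - 2\,cov(x_1,x_2)^2 = 2(\sigma^2_{x_1}\sigma^2_{x_2} - cov(x_1,x_2)^2)$, which is exactly twice the denominator, yielding an asymptotic variance of $2\sigma^2/(n-1)$.

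Subtracting the two limits gives
\begin{equation*}
\Delta_{var}^{n\to\infty} \;=\; \frac{2\sigma^2}{n-1} \;-\; \frac{\sigma^2}{n-1} \;=\; \frac{\sigma^2}{n-1},
\end{equation*}
which is manifestly non-negative and vanishes as $n\to\infty$. The only mildly delicate point is justifying that the substitution of population quantities into the ratio is valid in the limit: this is the standard consistency-plus-continuous-mapping argument, and it requires $\sigma^2_{x_1}\sigma^2_{x_2}-cov(x_1,x_2)^2>0$ (\ie\ $|\rho_{x_1,x_2}|<1$) so that no denominator degenerates. Under the implicit non-degeneracy assumption of the bivariate regression setting, the remaining manipulations are purely algebraic and yield the claimed identity.
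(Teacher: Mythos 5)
Your proposal is correct and follows essentially the same route as the paper: the paper's own proof simply states that the result follows by subtracting Equation \eqref{eq:variance1D} from Equation \eqref{eq:variance2D} and exploiting the consistency of the estimators, which is exactly the substitution-and-cancellation you carry out explicitly. Your added remarks on the continuous mapping theorem and the non-degeneracy condition $|\rho_{x_1,x_2}|<1$ are a welcome (and harmless) refinement of the paper's terser argument.
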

\begin{proof}
The result follows from the difference between Equation \ref{eq:variance2D} and \ref{eq:variance1D}, exploiting the consistency of the estimators.
%\begin{equation*}
%\begin{gathered}
%\mathrm{plim}_{n \to \infty}\Bigg[ \frac{\sigma^2(\sigma^2_{x_1}\hat{\sigma}^2_{x_2}+\sigma^2_{x_2}\hat{\sigma}^2_{x_1}-2cov(x_1,x_2)\hat{cov}(x_1,x_2))}{(n-1)(\hat{\sigma}^2_{x_1}\hat{\sigma}^2_{x_2} - \hat{cov}(x_1,x_2)^2)}\\
%- \sigma_{x_1+x_2}^2\frac{\sigma^2}{(n-1)\hat{\sigma}^2_{x_1+x_2}}\Bigg]  \\ 
%=\mathrm{plim}_{n \to \infty}\Bigg[ \frac{\sigma^2(\sigma^2_{x_1}\sigma^2_{x_2}+\sigma^2_{x_2}\sigma^2_{x_1}-2cov(x_1,x_2)cov(x_1,x_2))}{(n-1)(\sigma^2_{x_1}\sigma^2_{x_2}cov(x_1,x_2)^2)}
%\\- \sigma_{x_1+x_2}^2\frac{\sigma^2}{(n-1)\sigma^2_{x_1+x_2}}\Bigg]  \\ 
%=\mathrm{plim}_{n \to \infty}\Bigg[ \frac{2\sigma^2}{(n-1)} - \frac{\sigma^2}{(n-1)} \Bigg] = \mathrm{plim}_{n \to \infty} \frac{\sigma^2}{(n-1)} = 0^+.
%\end{gathered}
%\end{equation*}
\end{proof}
 
\paragraph*{Case II: finite-samples analysis with equal variance and sample variance}
For the finite-samples analysis, we add the following assumption to simplify the computations:
\begin{equation}
\label{eq:sameVar}
    \begin{cases} \sigma_{x_1}=\sigma_{x_2}\eqqcolon\sigma_x \\ \hat{\sigma}_{x_1}=\hat{\sigma}_{x_2}\eqqcolon\hat{\sigma}_x \end{cases}.
\end{equation}
\begin{theorem}\label{thm:diffFinite}
If the conditions of Equation \eqref{eq:sameVar} hold, let $\Delta_{var}$ be the difference between the variance of the two-dimensional and the one-dimensional linear models, it is always non-negative and it is equal to: \begin{equation}\label{eq:finDiffVar}
\Delta_{var}=\frac{\sigma^2}{n-1}\cdot\frac{\sigma^2_x(1-\rho_{x_1,x_2})}{\hat{\sigma}^2_x(1-\hat{\rho}_{x_1,x_2})}.
\end{equation}
\end{theorem}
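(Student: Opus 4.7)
The plan is to start directly from the two variance formulas provided by Theorem~\ref{thm:variance2D}, substitute the equal-variance assumption of Equation~\eqref{eq:sameVar}, rewrite covariances as $\mathrm{cov}(x_1,x_2)=\sigma_x^2\rho_{x_1,x_2}$ and $\hat{\mathrm{cov}}(x_1,x_2)=\hat{\sigma}_x^2\hat{\rho}_{x_1,x_2}$, and then reduce the difference algebraically.

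First I would simplify the one-dimensional expression. Under \eqref{eq:sameVar}, $\sigma_{x_1+x_2}^2 = 2\sigma_x^2(1+\rho_{x_1,x_2})$ and likewise $\hat{\sigma}_{x_1+x_2}^2 = 2\hat{\sigma}_x^2(1+\hat{\rho}_{x_1,x_2})$, so Equation~\eqref{eq:variance1D} collapses to
\begin{equation*}
V_1 \;=\; \frac{\sigma^2}{n-1}\cdot\frac{\sigma_x^2(1+\rho_{x_1,x_2})}{\hat{\sigma}_x^2(1+\hat{\rho}_{x_1,x_2})}.
\end{equation*}
Next I would simplify the two-dimensional expression. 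The numerator of \eqref{eq:variance2D} becomes $2\sigma_x^2\hat{\sigma}_x^2(1-\rho_{x_1,x_2}\hat{\rho}_{x_1,x_2})$ and the denominator becomes $(n-1)\hat{\sigma}_x^4(1-\hat{\rho}_{x_1,x_2}^2)$, giving
\begin{equation*}
V_2 \;=\; \frac{2\sigma^2}{n-1}\cdot\frac{\sigma_x^2(1-\rho_{x_1,x_2}\hat{\rho}_{x_1,x_2})}{\hat{\sigma}_x^2(1-\hat{\rho}_{x_1,x_2})(1+\hat{\rho}_{x_1,x_2})}.
\end{equation*}

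The heart of the argument is then the algebraic reduction of $\Delta_{var}=V_2-V_1$. Factoring out the common quantity $\tfrac{\sigma^2\sigma_x^2}{(n-1)\hat{\sigma}_x^2(1+\hat{\rho}_{x_1,x_2})}$, the remaining bracket is
\begin{equation*}
\frac{2(1-\rho_{x_1,x_2}\hat{\rho}_{x_1,x_2})}{1-\hat{\rho}_{x_1,x_2}} \;-\;(1+\rho_{x_1,x_2}).
\end{equation*}
Putting over the common denominator $1-\hat{\rho}_{x_1,x_2}$, the numerator expands as $2-2\rho\hat{\rho}-(1+\rho)(1-\hat{\rho})=1+\hat{\rho}-\rho-\rho\hat{\rho}=(1+\hat{\rho})(1-\rho)$ (with the indices $x_1,x_2$ suppressed). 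The factor $(1+\hat{\rho}_{x_1,x_2})$ then cancels with the one I factored out, and I arrive at the claimed identity \eqref{eq:finDiffVar}. Non-negativity follows immediately because $\rho_{x_1,x_2},\hat{\rho}_{x_1,x_2}\in[-1,1]$, so both $1-\rho_{x_1,x_2}\ge 0$ and $1-\hat{\rho}_{x_1,x_2}\ge 0$ (the degenerate case $\hat{\rho}_{x_1,x_2}=1$ corresponds to a singular design matrix and is excluded).

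I expect no conceptual obstacle: Theorem~\ref{thm:variance2D} already does all of the statistical work, and the only thing left is the bookkeeping in the bracketed subtraction. The one spot where a mistake is easy to make is the cross term $2\,\mathrm{cov}(x_1,x_2)\hat{\mathrm{cov}}(x_1,x_2)$ in the numerator of $V_2$, which must be written as $2\sigma_x^2\hat{\sigma}_x^2\rho_{x_1,x_2}\hat{\rho}_{x_1,x_2}$ (not $\rho^2$ or $\hat{\rho}^2$) so that the factorization $(1+\hat{\rho})(1-\rho)$ appears cleanly; this is what makes the common factor $1+\hat{\rho}_{x_1,x_2}$ cancel and produces the asymmetric ratio $(1-\rho)/(1-\hat{\rho})$ in the final answer.
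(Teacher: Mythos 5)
Your proposal is correct and follows essentially the same route as the paper's proof: both start from the two variance expressions of Theorem~\ref{thm:variance2D}, impose the equal-variance assumption of Equation~\eqref{eq:sameVar}, and reduce the difference algebraically to the factored form $\tfrac{\sigma^2}{n-1}\cdot\tfrac{\sigma_x^2(1-\rho_{x_1,x_2})}{\hat{\sigma}_x^2(1-\hat{\rho}_{x_1,x_2})}$. The only cosmetic difference is that you convert covariances to correlations before combining terms, whereas the paper combines over a common denominator first and factors the numerator as $(\sigma_x^2-\mathrm{cov}(x_1,x_2))(\hat{\sigma}_x^2+\hat{\mathrm{cov}}(x_1,x_2))$; the cancellations are identical.
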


\begin{proof}
The proof starts again from the variances of the two models found in Theorem \ref{thm:variance2D} and it performs algebraic computations exploiting the assumption stated in Equation \eqref{eq:sameVar}. All the steps can be found in Appendix \ref{app:twoDim}.
\end{proof}

\begin{remark}
When the number of samples $n$ tends to infinity, the result of Equation~\eqref{eq:finDiffVar} reduces to the asymptotic case, as in Equation \eqref{eq:asymDiffVar}.
\end{remark}

\begin{remark}
The quantities found in Theorem \ref{thm:asymVar2D} and \ref{thm:diffFinite} are always non-negative, meaning that the variance of the two-dimensional case is always greater or equal than the corresponding one-dimensional version, as expected.
\end{remark}

\subsection{Bias Analysis}\label{subsec:bias}
In this subsection, we compare the (squared) bias of the two models under examination with both an asymptotic and a finite-samples analysis, as done in the previous subsection for the variance. Since the two-dimensional model corresponds to a larger hypothesis space it is expected to have a lower bias w.r.t. the one-dimensional.

The procedure to derive the difference between biases is similar to the one followed for the variance. The first step is to compute the expected value \wrt the training set $\mathcal{T}$ of the vector $\hat{w}$ of the regression coefficients estimates, given the training features $\mathbf{X}$. This is used to compute the bias of the models. In particular, in Equation \eqref{eq:BiasVarDec}, we defined the (squared) bias as follows:
\begin{equation}
\E_x[(\Bar{h}(x)-\bar{y})^2]= \E_x[(\E_{\mathcal{T}}[h(x)]-\E_{y\lvert x}[y])^2].
\label{eq:modelBias}
\end{equation}
Starting from this definition, the bias of the one-dimensional case $\hat{y}=\hat{w}\bar{x}$ is computed. Moreover, for the two dimensional case $y=\hat{w}_1x_1+\hat{w}_2x_2$ the model is clearly unbiased.
Detailed computations can be found in Appendix \ref{app:twoDim}.

After the derivation of the bias of the models, the same asymptotic and finite-samples analysis performed on the variance is repeated in this section for the (squared) bias. Since the two-dimensional model is unbiased, we can conclude that the increase of the bias component of the loss, when the two features are substitute by their mean, is equal to the bias of the one-dimensional model.

\paragraph*{Case I: asymptotic analysis}
When the number of samples $n$ of the training dataset $\mathcal{T}$ approaches infinity, recalling that the estimators considered converge in probability to the expected values of the parameters, the following result holds.
\begin{theorem}\label{thm:asym2dBias}
If the number of samples $n$ tends to infinity, let $\Delta_{bias}^{n\to \infty}$ be the difference between the bias of the one-dimensional and the two-dimensional models, it is equal to:
%\am{Anche qui il simbolo $\Delta_{bias}^{n\to \infty}$ non e' definito \color{green}fatto}
\begin{align}\label{eq:asymDiffBias}
\Delta_{bias}^{n\to \infty}&=\frac{\sigma^2_{x_1}\sigma^2_{x_2}(1-\rho_{x_1,x_2}^2)(w_1-w_2)^2}{\sigma^2_{x_1+x_2}}\\&=\frac{(1-\rho_{x_1,x_2})(w_1-w_2)^2}{2}, 
\end{align}
where the second equality holds if $\sigma_{x_1}=\sigma_{x_2}=1$.
\end{theorem}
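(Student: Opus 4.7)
The plan is to combine consistency of the OLS estimator with a convenient change of variables. First, since the two-dimensional model is unbiased, $\Delta_{bias}^{n\to\infty}$ coincides with the bias of the one-dimensional model. As $n\to\infty$, $\hat{w}$ converges in probability to the population-optimal slope $w^{\star}=cov(y,\bar{x})/\sigma^2_{\bar{x}}$, so the general formula in Equation~\eqref{eq:modelBias} specialises to $\E_x[(w^{\star}\bar{x}-w_1x_1-w_2x_2)^2]$, using that $\E_{y\mid x}[y]=w_1x_1+w_2x_2$ under the linear model with independent noise.

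The key simplification comes from the change of variables $u=\bar{x}=(x_1+x_2)/2$ and $v=(x_1-x_2)/2$, which gives $x_1=u+v$, $x_2=u-v$, and $y=(w_1+w_2)u+(w_1-w_2)v+\epsilon$. In these coordinates $w^{\star}=(w_1+w_2)+(w_1-w_2)\,cov(u,v)/\sigma^2_u$, so the residual inside the squared bias collapses to $(w_1-w_2)(cu-v)$ with $c=cov(u,v)/\sigma^2_u=(\sigma^2_{x_1}-\sigma^2_{x_2})/\sigma^2_{x_1+x_2}$. This immediately exposes the expected $(w_1-w_2)^2$ prefactor. Expanding the square produces a linear combination of $\sigma^2_u=\sigma^2_{x_1+x_2}/4$, $\sigma^2_v=\sigma^2_{x_1-x_2}/4$ and $cov(u,v)=(\sigma^2_{x_1}-\sigma^2_{x_2})/4$.

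The final algebraic step is the identity $\sigma^2_{x_1+x_2}\sigma^2_{x_1-x_2}-(\sigma^2_{x_1}-\sigma^2_{x_2})^2 = 4\sigma^2_{x_1}\sigma^2_{x_2}-4\,cov(x_1,x_2)^2 = 4\sigma^2_{x_1}\sigma^2_{x_2}(1-\rho_{x_1,x_2}^2)$, obtained by writing both sides out and using $(a+b)^2-(a-b)^2=4ab$. Dividing by $\sigma^2_{x_1+x_2}$ recovers Equation~\eqref{eq:asymDiffBias}. The second equality then follows by plugging in $\sigma_{x_1}=\sigma_{x_2}=1$: one has $\sigma^2_{x_1+x_2}=2(1+\rho_{x_1,x_2})$ and $1-\rho_{x_1,x_2}^2=(1-\rho_{x_1,x_2})(1+\rho_{x_1,x_2})$, so the $(1+\rho_{x_1,x_2})$ factors cancel to give $(1-\rho_{x_1,x_2})(w_1-w_2)^2/2$.

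The main obstacle is purely algebraic: a direct attack in the $(x_1,x_2)$ coordinates produces a four-term quadratic in $(w_1,w_2)$ that is awkward to factor as $(w_1-w_2)^2$ times the stated expression. The $(u,v)$ rotation is what makes the $(w_1-w_2)^2$ structure structurally evident, because $u$ carries exactly the component of the signal that the one-dimensional model can represent while $v$ captures the orthogonal $(w_1-w_2)$ direction that averaging unavoidably discards; the remaining computation is routine bookkeeping with the standard covariance identities.
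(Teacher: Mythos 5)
Your proof is correct, and it reaches the stated formula by a genuinely different algebraic route than the paper. The paper first derives the finite-sample bias of the univariate model in the original $(x_1,x_2)$ coordinates (Theorem \ref{thm:bias2D}), obtaining a three-term quadratic in $(w_1,w_2)$ involving $\hat{\sigma}^2_{x_1},\hat{\sigma}^2_{x_2},\hat{cov}(x_1,x_2)$, then replaces every sample quantity by its population counterpart using consistency and grinds the expression down to $(\sigma^2_{x_1}\sigma^2_{x_2}-cov(x_1,x_2)^2)(w_1-w_2)^2/\sigma^2_{x_1+x_2}$ by direct expansion. You instead pass to the limit at the outset, identifying $w^{\star}=cov(y,\bar{x})/\sigma^2_{\bar{x}}$ (which is indeed the limit of $\E_{\mathcal{T}}[\hat{w}\mid\mathbf{X}]$ from Lemma \ref{lemma:expvalEstimator}), and then perform the rotation $u=(x_1+x_2)/2$, $v=(x_1-x_2)/2$, under which the residual factors as $(w_1-w_2)(cu-v)$ with $c=cov(u,v)/\sigma^2_u$; the bias becomes $(w_1-w_2)^2\bigl(\sigma^2_u\sigma^2_v-cov(u,v)^2\bigr)/\sigma^2_u$, and the identity $\sigma^2_{x_1+x_2}\sigma^2_{x_1-x_2}-(\sigma^2_{x_1}-\sigma^2_{x_2})^2=4\sigma^2_{x_1}\sigma^2_{x_2}(1-\rho_{x_1,x_2}^2)$ finishes the computation (I verified each of these steps; the factors of $4$ cancel as you claim). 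What your approach buys is that the $(w_1-w_2)^2$ prefactor appears structurally rather than emerging from a lengthy cancellation, and it makes transparent why the bias vanishes when $w_1=w_2$ or when $v$ is a deterministic multiple of $u$ (perfect correlation); what the paper's approach buys is that the same intermediate finite-sample expression is reused for the finite-sample Theorem \ref{thm:diffFiniteBias}, whereas your argument as written is intrinsically asymptotic.
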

\begin{proof}
The proof starts from the bias of the two models computed in Appendix \ref{app:twoDim} and exploits the fact that
in the limit $n \rightarrow \infty$, it is possible to substitute every sample estimator with the real quantity of the parameters because they are consistent estimators. Details can be found in Appendix \ref{app:twoDim}.
\end{proof}

\paragraph*{Case II: finite-samples analysis with equal variance and sample variance}
In the finite-samples case, we provide the same analysis performed for variance, \ie with the assumptions of Equation \eqref{eq:sameVar}.

\begin{theorem}\label{thm:diffFiniteBias}
If the conditions of Equation \eqref{eq:sameVar} hold, let $\Delta_{bias}$ be the difference between the (squared) bias of the one-dimensional and the two-dimensional linear models, then it has value: 
\begin{equation}\label{eq:finDiffBias}
\Delta_{bias}=\frac{\sigma^2_x(1-\rho_{x_1,x_2})(w_1-w_2)^2}{2}.
\end{equation}
\end{theorem}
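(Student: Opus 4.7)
The two-dimensional OLS model is unbiased (as noted at the beginning of Section~\ref{subsec:bias}), so $\Delta_{bias}$ reduces to the squared bias of the one-dimensional model, and my plan is to compute this directly from the definition in Equation~\eqref{eq:modelBias}. The task therefore splits into two pieces: find $\E_{\mathcal{T}}[\hat{w}\mid \mathbf{X}]$ for the regressor $\hat{y}=\hat{w}\bar{x}$, and then integrate the resulting squared residual against the test distribution of $x$.

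Writing $\hat{w}=\sum_i \bar{x}_i y_i / \sum_i \bar{x}_i^2$ and substituting $\E[y_i\mid\mathbf{X}] = w_1 x_{1i}+w_2 x_{2i}$, both sums expand into linear combinations of $\hat{\sigma}_{x_1}^2$, $\hat{\sigma}_{x_2}^2$, and $\hat{cov}(x_1,x_2)$, giving
\begin{equation*}
\E_{\mathcal{T}}[\hat{w}\mid\mathbf{X}] \;=\; \frac{2\bigl[w_1\hat{\sigma}_{x_1}^2 + w_2\hat{\sigma}_{x_2}^2 + (w_1+w_2)\hat{cov}(x_1,x_2)\bigr]}{\hat{\sigma}_{x_1}^2+\hat{\sigma}_{x_2}^2+2\hat{cov}(x_1,x_2)}.
\end{equation*}
Imposing $\hat{\sigma}_{x_1}=\hat{\sigma}_{x_2}=\hat{\sigma}_x$ from~\eqref{eq:sameVar}, the common factor $\hat{\sigma}_x^2(1+\hat{\rho}_{x_1,x_2})$ appears in both numerator and denominator, collapsing the expression to $\E_{\mathcal{T}}[\hat{w}\mid\mathbf{X}]=w_1+w_2$ with no residual dependence on any sample quantity. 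This cancellation is the one genuine step of the proof; it is exactly where the equal-sample-variance assumption in~\eqref{eq:sameVar} earns its keep.

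With $\bar{h}(x)=(w_1+w_2)\bar{x}$ and $\bar{y}(x)=w_1x_1+w_2x_2$, a one-line manipulation gives
\begin{equation*}
\bar{h}(x)-\bar{y}(x) \;=\; (w_1+w_2)\tfrac{x_1+x_2}{2} - w_1x_1 - w_2x_2 \;=\; \tfrac{1}{2}(w_1-w_2)(x_2-x_1).
\end{equation*}
Squaring and taking $\E_x$, and using the population half of~\eqref{eq:sameVar} to write $\E_x[(x_2-x_1)^2] = \sigma_{x_1}^2+\sigma_{x_2}^2-2\,cov(x_1,x_2) = 2\sigma_x^2(1-\rho_{x_1,x_2})$, I obtain
\begin{equation*}
\Delta_{bias} \;=\; \frac{(w_1-w_2)^2}{4}\cdot 2\sigma_x^2(1-\rho_{x_1,x_2}) \;=\; \frac{\sigma_x^2(1-\rho_{x_1,x_2})(w_1-w_2)^2}{2},
\end{equation*}
which is the stated identity.

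The main (and only) obstacle is the cancellation that reduces $\E_{\mathcal{T}}[\hat{w}\mid\mathbf{X}]$ to $w_1+w_2$; after that, everything is routine algebra. As a consistency check, the resulting finite-sample formula exactly matches the asymptotic expression of Theorem~\ref{thm:asym2dBias} after substituting $\sigma_{x_1}=\sigma_{x_2}=\sigma_x$. This coincidence is natural rather than accidental: under assumption~\eqref{eq:sameVar} the one-dimensional model's expected weight becomes independent of the sample estimators, so the bias inherits no finite-sample correction and the asymptotic formula is already exact.
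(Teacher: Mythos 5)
Your proof is correct, and every computation checks out: the expression you derive for $\E_{\mathcal{T}}[\hat{w}\mid\mathbf{X}]$ agrees with Lemma~\ref{lemma:expvalEstimator}, the cancellation to $w_1+w_2$ under the equal-sample-variance half of Equation~\eqref{eq:sameVar} is valid, and the final second-moment computation uses the zero-mean assumption correctly. Your route differs from the paper's in a worthwhile way. The paper first establishes the fully general finite-sample bias of the univariate model (Theorem~\ref{thm:bias2D}), a three-term expression mixing sample and population variances and covariances, and only then substitutes Equation~\eqref{eq:sameVar} and grinds through the resulting algebra. You instead push the assumption into $\E_{\mathcal{T}}[\hat{w}\mid\mathbf{X}]$ \emph{first}, observe that the expected weight collapses to the purely population quantity $w_1+w_2$, and then the bias is a two-line pointwise calculation of $\bar{h}(x)-\bar{y}(x)=\tfrac{1}{2}(w_1-w_2)(x_2-x_1)$ followed by a second moment. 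What this buys, beyond shorter algebra, is an explanation the paper's proof leaves implicit: under Equation~\eqref{eq:sameVar} the one-dimensional model's expected prediction carries no dependence on the training sample at all, which is exactly why the finite-sample bias difference coincides with the asymptotic one in Theorem~\ref{thm:asym2dBias} rather than merely converging to it. The paper's longer route has the compensating advantage that Theorem~\ref{thm:bias2D} is reused in Appendix~\ref{app:addRes} for the relaxed setting where only the population variances are assumed equal (unitary), in which the cancellation you rely on does not occur and the general formula is genuinely needed.
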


\begin{proof}
The proof starts from the bias of the two models and performs algebraic computations exploiting the assumptions of Equation \eqref{eq:sameVar}. All the steps can be found in Appendix \ref{app:twoDim}.
\end{proof}

\begin{remark}
When the number of samples $n$ tends to infinity, the result in Equation~\eqref{eq:finDiffBias} reduces to the asymptotic case as in Theorem \ref{thm:asym2dBias}.
\end{remark}

\begin{remark}
Some observations are in order:
\begin{itemize}
    \item As expected, the quantities found in Theorem \ref{thm:asym2dBias}, \ref{thm:diffFiniteBias} are always non-negative, since the hypothesis space of the univariate model is a subset of the one of the bivariate model.
    \item We observe that $\Delta_{bias}=0$ if $\rho_{x_1,x_2}=1$. Indeed, when the two variables are perfectly (positively) correlated their coefficients in the linear regression are equal, therefore there is no loss of information in their aggregation.
    \item Finally, when the two regression coefficients are equal $w_1=w_2$ there is no increase of bias due to the aggregation, since it is enough to learn a single coefficient $\bar{w}$ to have the same performance of the bivariate model.
\end{itemize}
\end{remark}

\subsection{Correlation Threshold}\label{subsec:corr2D}
This subsection concludes the analysis with two features by comparing the reduction of variance with the increase of bias when aggregating the two features $x_1$ and $x_2$ with their average $\bar{x}=\frac{x_1+x_2}{2}$. In conclusion, the result shows when it is convenient to aggregate the two features with their mean, in terms of mean squared error.

Considering the asymptotic case, the following theorem compares bias and variance of the models.
\begin{theorem}\label{thm:asympBound2D}
When the number of samples $n$ tends to infinity and the relationship between the features and the target is linear with Gaussian noise, the decrease of variance is greater than the increase of (squared) bias when the two features $x_1$ and $x_2$ are aggregated with their average if and only if:
\begin{equation}\label{eq:2DasymBound}
    \rho^2_{x_1,x_2} \geq 1-\frac{\sigma^2\sigma^2_{x_1+x_2}}{(n-1)\sigma^2_{x_1}\sigma^2_{x_2}(w_1-w_2)^2},
\end{equation}
that, for $\sigma_{x_1}=\sigma_{x_2}=1$ becomes:
    \begin{equation}\label{eq:2DasymBuondRed}
        \rho_{x_1,x_2} \geq 1-\frac{2\sigma^2}{(n-1)(w_1-w_2)^2}.
    \end{equation}
\end{theorem}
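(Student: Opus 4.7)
The plan is straightforward: combine the asymptotic variance gap from Theorem~\ref{thm:asymVar2D} with the asymptotic bias gap from Theorem~\ref{thm:asym2dBias}, and then solve the resulting inequality for $\rho_{x_1,x_2}$. Since the aggregation is profitable (in terms of MSE) exactly when $\Delta_{var}^{n\to\infty} \geq \Delta_{bias}^{n\to\infty}$, the first step is to write down this inequality explicitly:
\begin{equation*}
\frac{\sigma^2}{n-1} \;\geq\; \frac{\sigma^2_{x_1}\sigma^2_{x_2}(1-\rho_{x_1,x_2}^2)(w_1-w_2)^2}{\sigma^2_{x_1+x_2}}.
\end{equation*}

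Next I would isolate the correlation term. Dividing by the bias prefactor (which is non-negative, and strictly positive except in the degenerate subcase $w_1=w_2$ or $\sigma_{x_1}\sigma_{x_2}=0$, where both sides vanish and the statement is trivial) and then moving the constant to the other side yields precisely the first displayed bound~\eqref{eq:2DasymBound}. The equivalence is a chain of reversible algebraic manipulations, so the ``if and only if'' is immediate.

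For the standardized case $\sigma_{x_1}=\sigma_{x_2}=1$, the key observation is that $\sigma^2_{x_1+x_2} = 2(1+\rho_{x_1,x_2})$, so the inequality becomes
\begin{equation*}
1-\rho_{x_1,x_2}^2 \;\leq\; \frac{2\sigma^2(1+\rho_{x_1,x_2})}{(n-1)(w_1-w_2)^2}.
\end{equation*}
I would then factor the left-hand side as $(1-\rho_{x_1,x_2})(1+\rho_{x_1,x_2})$ and cancel the common factor $(1+\rho_{x_1,x_2})$, which is strictly positive unless $\rho_{x_1,x_2}=-1$ (a degenerate case that can be handled separately, since then the average collapses to a zero-variance feature). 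Rearranging the resulting linear inequality produces~\eqref{eq:2DasymBuondRed}.

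There is no real obstacle here; the argument is essentially the subtraction $\Delta_{var}-\Delta_{bias}\geq 0$ followed by elementary algebra. The only care needed is to flag the edge case $\rho_{x_1,x_2}=-1$ when dividing by $1+\rho_{x_1,x_2}$, and to note that the squared-correlation form~\eqref{eq:2DasymBound} under unit variances is equivalent to (not strictly stronger than) the signed-correlation form~\eqref{eq:2DasymBuondRed} thanks to this cancellation.
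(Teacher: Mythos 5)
Your proposal is correct and follows essentially the same route as the paper, which simply forms the difference $\Delta_{var}^{n\to\infty}-\Delta_{bias}^{n\to\infty}\geq 0$ from Theorems~\ref{thm:asymVar2D} and~\ref{thm:asym2dBias} and rearranges; your use of $\sigma^2_{x_1+x_2}=2(1+\rho_{x_1,x_2})$ and the cancellation of $(1+\rho_{x_1,x_2})$ is exactly the computation left implicit there. Your explicit flagging of the degenerate cases ($w_1=w_2$ and $\rho_{x_1,x_2}=-1$) is a small added care the paper omits.
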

\begin{proof}
Computing the difference between Equation \eqref{eq:asymDiffVar} and \eqref{eq:asymDiffBias} the result follows.
\end{proof}

In the finite-samples setting, with the additional assumptions of Equation \eqref{eq:sameVar}, the following theorem shows the result of the comparison between bias and variance of the two models.

\begin{theorem}\label{thm:finiteBound2D}
Let the variance and sample variance of the features $x_1$ and $x_2$ be equal (Equation \eqref{eq:sameVar}) and the relationship between the features and then target be linear with Gaussian noise. The decrease of variance is greater than the increase of (squared) bias when the two features $x_1$ and $x_2$ are aggregated with their average if and only if:
\begin{equation}\label{eq:2DfinBound}
    \hat{\rho}_{x_1,x_2} \geq 1-\frac{2\sigma^2}{(n-1)\hat{\sigma}^2_x(w_1-w_2)^2},
\end{equation}
that, for $\hat{\sigma}_x=1$ becomes:
    \begin{equation}\label{eq:2DfinBuondRed}
        \hat{\rho}_{x_1,x_2} \geq 1-\frac{2\sigma^2}{(n-1)(w_1-w_2)^2}.
    \end{equation}
\end{theorem}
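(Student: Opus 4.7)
The strategy is to set $\Delta_{var} \ge \Delta_{bias}$ using the explicit finite-samples expressions already derived in Theorem \ref{thm:diffFinite} and Theorem \ref{thm:diffFiniteBias}, and then solve algebraically for $\hat{\rho}_{x_1,x_2}$. Specifically, I would start from
\begin{equation*}
\frac{\sigma^2}{n-1}\cdot\frac{\sigma^2_x(1-\rho_{x_1,x_2})}{\hat{\sigma}^2_x(1-\hat{\rho}_{x_1,x_2})} \;\ge\; \frac{\sigma^2_x(1-\rho_{x_1,x_2})(w_1-w_2)^2}{2}.
\end{equation*}
Both sides share the factor $\sigma^2_x(1-\rho_{x_1,x_2})$. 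Assuming $\rho_{x_1,x_2}<1$ (and therefore this common factor is strictly positive), I would divide it out to obtain
\begin{equation*}
\frac{2\sigma^2}{(n-1)\hat{\sigma}^2_x(w_1-w_2)^2} \;\ge\; 1-\hat{\rho}_{x_1,x_2},
\end{equation*}
which rearranges directly to the claimed bound \eqref{eq:2DfinBound}. Substituting $\hat{\sigma}_x = 1$ recovers \eqref{eq:2DfinBuondRed}.

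The ``if and only if'' direction is immediate since the algebraic manipulation is an equivalence: dividing by the positive quantity $\sigma^2_x(1-\rho_{x_1,x_2})$ preserves the inequality both ways, and rearranging $1-\hat{\rho}_{x_1,x_2}$ to the other side is a sign flip against a constant.

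The only subtlety (the ``main obstacle,'' though it is a mild one) is the boundary case $\rho_{x_1,x_2}=1$: there $\Delta_{var}$ has an indeterminate $0/0$ form in \eqref{eq:finDiffVar}, but since the two features are perfectly correlated, under the equal-variance assumption \eqref{eq:sameVar} the average $\bar{x}$ equals $x_1=x_2$ almost surely, so the one- and two-dimensional models coincide and both $\Delta_{var}$ and $\Delta_{bias}$ vanish; the threshold inequality then holds trivially because the right-hand side of \eqref{eq:2DfinBound} is strictly less than $1$. I would flag this degenerate case in a short remark at the end of the proof and otherwise refer the reader to Appendix \ref{app:twoDim} for the routine algebra.
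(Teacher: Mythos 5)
Your proposal is correct and follows essentially the same route as the paper: it forms the difference between the finite-sample variance gap of Theorem \ref{thm:diffFinite} and the bias gap of Theorem \ref{thm:diffFiniteBias}, cancels the common factor $\sigma^2_x(1-\rho_{x_1,x_2})$, and rearranges to isolate $\hat{\rho}_{x_1,x_2}$. Your explicit handling of the degenerate case $\rho_{x_1,x_2}=1$ (and, implicitly, the need for $\hat{\rho}_{x_1,x_2}<1$ and $w_1\neq w_2$ to make the manipulations genuine equivalences) is a small but welcome addition that the paper's one-line proof omits.
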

\begin{proof}
Computing the difference between Equation \eqref{eq:finDiffVar} and \eqref{eq:finDiffBias} the result follows.
\end{proof}

\begin{remark}
The results of Theorem \ref{thm:asympBound2D} and \ref{thm:finiteBound2D} comply with the intuition that, in a linear setting with two features, they should be aggregated if their correlation is \emph{large enough}.
\end{remark}

\begin{remark}
Theorem \ref{thm:asympBound2D} and \ref{thm:finiteBound2D} with unitary sample variances produce the same threshold both in the finite and the asymptotic settings.
\end{remark}

In conclusion, the thresholds found in Theorem \ref{thm:asympBound2D} and \ref{thm:finiteBound2D} show that it is profitable in terms of $MSE$ to aggregate two variables in a bivariate linear setting with Gaussian noise if: 
\begin{itemize}
    \item the variance of the noise $\sigma^2$ is large, which means that the process is noisy and the variance should be reduced;
    \item the number of samples $n$ is small, indeed in this case there is little knowledge about the actual model, therefore it is better to learn one parameter rather than two;
    \item the difference between the two coefficients  $w_1-w_2$ is small, which implies that they are similar, and learning a single coefficient introduces a little loss of information.
\end{itemize}

\section{Generalization: three-dimensional and D-dimensional analysis}\label{sec:3andD}
In the previous section, we focused on aggregating two features in a bivariate setting. In this section, we extend that approach to three features. Starting from the related results, we will straightforwardly extend them to a general problem with $D$ features. Because of the complexity of the computations, we focus on asymptotic analysis only. After the analysis, we conclude this section with the main algorithm proposed in this paper: \emph{\algname} (\algnameshort).

\subsection{Three-dimensional case}\label{subsec:threeDim}

In the three-dimensional case ($D=3$), we consider the relationship between the three features and the target to be linear with Gaussian noise: $y=w_1x_1+w_2x_2+w_3x_3+\epsilon$, $\epsilon\sim \mathcal{N}(0,\sigma^2)$. In accordance with the previous analysis, we assume the training dataset $\mathbf{X}=[\mathbf{x_1}\ \mathbf{x_2}\ \mathbf{x_3}]$ to be known and recalling the zero-mean assumption ($\E[x_1]=\E[x_2]=\E[x_3]=0$) it follows $\mathbb{E}[y]=w_1\mathbb{E}[x_1]+w_2\mathbb{E}[x_2]+w_3\mathbb{E}[x_3]=0$, $\sigma^2_y = \sigma^2$.
%\am{Queste assunzioni non discendono da II-A? \color{green} si, le ribadirei ma ho semplificato}

In this setting and for the general $D$ dimensional setting of the next subsection, which will be a direct application of this, we compare the performance of the bivariate linear regression $\hat{y}=\hat{w}_ix_i+\hat{w}_jx_j$ of each pair of features $x_i,x_j$ with the univariate linear regression that considers their average $\hat{y}=\hat{w}\frac{x_i+x_j}{2}=\hat{w}\Bar{x}$, to decide whether it is convenient to aggregate them or not in terms of $MSE$. Indeed, extending the dimension from $D=2$ to a general dimension $D$ and comparing all the possible models where groups of variables are aggregated is combinatorial in the number of features and it would be impractical. Also, comparing the full $D$ dimensional regression model with the $D-1$ dimensional model where two variables are aggregated is impractical. Indeed, when the number of features is huge, in addition to a polynomial computational cost, both models suffer issues like the curse of dimensionality and risk of overfitting.

To simplify the exposition, for the theoretical analysis, we will consider $x_i=x_1,x_j=x_2$.

\subsubsection{Variance}
The following theorem shows the asymptotic difference of variance between the two considered linear regression models.

\begin{theorem}
\label{thm:3var}
Let $\mathbf{X}=[\mathbf{x_1}\ \mathbf{x_2}\ \mathbf{x_3}]$, $\hat{\mathbf{X}}=[\mathbf{x_1}\ \mathbf{x_2}]$ and $\bar{\mathbf{X}}=[\mathbf{\bar{x}}]$, with $\bar{x}=\frac{x_1+x_2}{2}$.
Then, for the one-dimensional linear regression $\hat{y}=\hat{w}\frac{x_1+x_2}{2}$, we have:
\begin{equation}\label{eq:var3D1}
var_{\mathcal{T}}(\hat{w}\lvert\mathbf{X}) = (\bar{\mathbf{X}}^T\bar{\mathbf{X}})^{-1}\sigma^2,
\end{equation}
and for the two-dimensional linear regression $\hat{y}=\hat{w}_1x_1+\hat{w}_2x_2$, we have:
\begin{equation}
    \label{eq:var3D2}
var_{\mathcal{T}}(\hat{w}\lvert\mathbf{X}) = (\hat{\mathbf{X}}^T\hat{\mathbf{X}})^{-1}\sigma^2.
\end{equation}
\end{theorem}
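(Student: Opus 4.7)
The plan is to recognize that this theorem is essentially a direct specialization of Equation~\eqref{eq:VarEst}, applied once with design matrix $\bar{\mathbf{X}}$ and once with design matrix $\hat{\mathbf{X}}$. The only subtlety, which distinguishes this statement from Lemma~\ref{lemma:varianceEstimator}, is that the fitted regression is \emph{misspecified}: the true generating model involves three features, but we only regress on $\bar{x}$ (respectively $x_1,x_2$). I would address this point head-on before invoking the formula.

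Concretely, I would first write out the OLS estimators explicitly. In the univariate case $\hat{w} = (\bar{\mathbf{X}}^T \bar{\mathbf{X}})^{-1} \bar{\mathbf{X}}^T \mathbf{Y}$, and in the bivariate case $\hat{\mathbf{w}} = (\hat{\mathbf{X}}^T \hat{\mathbf{X}})^{-1} \hat{\mathbf{X}}^T \mathbf{Y}$. Applying the linear propagation of variance $var_{\mathcal{T}}(A \mathbf{Y} \lvert \mathbf{X}) = A \, var_{\mathcal{T}}(\mathbf{Y}\lvert\mathbf{X}) \, A^T$ reduces both expressions to the computation of $var_{\mathcal{T}}(\mathbf{Y} \lvert \mathbf{X})$. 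Here is the critical step: since we condition on the \emph{full} three-column matrix $\mathbf{X} = [\mathbf{x_1}\ \mathbf{x_2}\ \mathbf{x_3}]$, the term $w_3 \mathbf{x_3}$ in $\mathbf{Y} = w_1 \mathbf{x_1} + w_2 \mathbf{x_2} + w_3 \mathbf{x_3} + \boldsymbol{\epsilon}$ is deterministic and only shifts the mean, leaving $var_{\mathcal{T}}(\mathbf{Y}\lvert\mathbf{X}) = \sigma^2 \mathbf{I}_n$. Plugging this into the sandwich form collapses it to $\sigma^2 (\mathbf{Z}^T\mathbf{Z})^{-1}$ for each design matrix $\mathbf{Z} \in \{\bar{\mathbf{X}}, \hat{\mathbf{X}}\}$, giving both Equations~\eqref{eq:var3D1} and~\eqref{eq:var3D2}.

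I do not foresee a genuine obstacle; the proof is a careful application of the formula already established in Equation~\eqref{eq:VarEst}. The only place one might be tempted to slip is attributing part of the omitted-variable contribution $w_3 \mathbf{x_3}$ to noise, which would inflate the variance spuriously. Emphasising that the conditioning event fixes $\mathbf{x_3}$ (and hence $w_3 \mathbf{x_3}$) as a non-random offset avoids this pitfall, and makes it clear that misspecification shows up in the \emph{bias} analysis (deferred to the subsequent subsection) rather than in the variance computed here.
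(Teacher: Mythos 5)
Your proposal is correct and follows essentially the same route as the paper, which likewise just instantiates Equation~\eqref{eq:VarEst} with the design matrices $\bar{\mathbf{X}}$ and $\hat{\mathbf{X}}$. Your additional observation --- that conditioning on the full matrix $\mathbf{X}$ renders $w_3\mathbf{x_3}$ a deterministic mean shift, so $var_{\mathcal{T}}(\mathbf{Y}\lvert\mathbf{X})=\sigma^2\mathbf{I}_n$ despite the misspecification --- is a valid and welcome justification of a step the paper leaves implicit.
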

\begin{proof}
The results follow from the general expression of variance of the estimators from Equation \eqref{eq:VarEst} and substituting respectively $\bar{\mathbf{X}}$ and $\hat{\mathbf{X}}$ for the two considered models.
\end{proof}
\begin{remark}\label{rem:var3D}
Since the linear regression models are the same of the bivariate case, starting from the result of Theorem~\ref{thm:3var}, the variance of the estimators in the two cases remains the same of Lemma \ref{lemma:varianceEstimator} and the asymptotic difference of variances remains the one of Theorem \ref{thm:asymVar2D} ($\Delta_{var}^{n\to \infty}=\frac{\sigma^2}{(n-1)}$).
\end{remark}

\subsubsection{Bias}\label{subsubsec:bias}
This subsection introduces the asymptotic difference of bias of the two considered linear regression models in the three dimensional setting.

As in the bivariate setting, the first step is to calculate the bias for each of the two considered models.
%%\label{thm:3bias}
In the asymptotic case, assuming unitary variances of the features $\sigma_{x_1}=\sigma_{x_2}=\sigma_{x_3}=1$, for the one-dimensional regression $\hat{y}=\hat{w}\frac{x_1+x_2}{2}$ and for the two-dimensional regression $\hat{y}=\hat{w_1}x_1+\hat{w_2}x_2$, the (squared) bias $\E_{x}[(\bar{h}(x)-\bar{y})^2]$ can be expressed with two functions, that will be denoted respectively with $\mathcal{F(\cdot)}$ and $\mathcal{G(\cdot)}$. These functions depend on the three features, their coefficients and their correlations. The exact expressions of the two biases and the related proofs can be found in Appendix \ref{app:proofs3D}.

For the extension to the $D$-dimensional case, it will be necessary to keep the feature $x_3$ having general variance $\sigma_{x_3}^2$. With little changes in the algebraic computations of the proof, the bias of the two models can be easily extended (see Appendix \ref{app:proofs3D} for the details).

From the results obtained, it is possible to compute the increase of bias due to the aggregation of the two variables $x_1,x_2$ with their average $\bar{x}=\frac{x_1+x_2}{2}$.

\begin{theorem}\label{thm:bias3dDiff}
In the asymptotic setting, let the relationship between the features and the target be linear with Gaussian noise. Assuming unitary variances of the features $\sigma_{x_1}=\sigma_{x_2}=\sigma_{x_3}=1$, the increase of bias due to the aggregation of the features $x_1$ and $x_2$ with their average is given by:
\begin{equation}
\begin{aligned}
\Delta_{Bias}^{n\to \infty} & = \frac{1}{2}(1-\rho_{x_1,x_2})(w_1-w_2)^2\\
& \quad +(w_1w_3-w_2w_3)(\rho_{x_1,x_3}-\rho_{x_2,x_3})\\
& \quad  +w_3^2\frac{(\rho_{x_1,x_3}-\rho_{x_2,x_3})^2}{2(1-\rho_{x_1,x_2})}.
\end{aligned}
\end{equation}
\end{theorem}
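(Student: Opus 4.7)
In the asymptotic regime, the sample OLS coefficients converge to their population counterparts (the consistency argument already used in Theorems~\ref{thm:asymVar2D} and \ref{thm:asym2dBias}), so the expected prediction $\bar h$ of each model is the best linear predictor of $y$ restricted to the corresponding feature subspace, i.e.\ the $L^2$-projection onto it. Letting $V_{2D}=\mathrm{span}(x_1,x_2)$ and $V_{1D}=\mathrm{span}(\bar x)\subset V_{2D}$, and noting that $\epsilon$ has zero mean and is uncorrelated with the features, one obtains $\bar h^{2D}=P_{V_{2D}}y_{\mathrm{true}}$ and $\bar h^{1D}=P_{V_{1D}}y_{\mathrm{true}}$ with $y_{\mathrm{true}}=w_1x_1+w_2x_2+w_3x_3$. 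The squared bias of either model is then the squared $L^2$-distance from $\bar h$ to $y_{\mathrm{true}}$.

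The key observation is that $V_{1D}\subset V_{2D}$, so the Pythagorean identity yields
\[
\Delta_{Bias}^{n\to\infty}
=\|y_{\mathrm{true}}-P_{V_{1D}}y_{\mathrm{true}}\|^2-\|y_{\mathrm{true}}-P_{V_{2D}}y_{\mathrm{true}}\|^2
=\|P_{V_{2D}}y_{\mathrm{true}}-P_{V_{1D}}y_{\mathrm{true}}\|^2,
\]
reducing the task to a single squared $L^2$-norm. The projection onto $V_{2D}$ leaves $w_1x_1+w_2x_2$ unchanged and only rotates the $x_3$ component: $P_{V_{2D}}x_3=\alpha_1x_1+\alpha_2x_2$, where $(\alpha_1,\alpha_2)$ are the population OLS coefficients of $x_3$ on $(x_1,x_2)$ under unit variances. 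The projection onto $V_{1D}$ reduces to the univariate regression of $y_{\mathrm{true}}$ on $\bar x$, giving $\beta^{1D}\bar x$ with $\beta^{1D}=\mathrm{cov}(\bar x,y_{\mathrm{true}})/\mathrm{var}(\bar x)$ and $\mathrm{var}(\bar x)=(1+\rho_{x_1,x_2})/2$.

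Exploiting the symmetry $x_1\leftrightarrow x_2$, the coefficients of $x_1$ and $x_2$ in $P_{V_{2D}}y_{\mathrm{true}}-P_{V_{1D}}y_{\mathrm{true}}$ turn out to be exactly opposite, so the difference takes the clean form $\gamma(x_1-x_2)$ with
\[
\gamma=\frac{w_1-w_2}{2}+\frac{w_3(\rho_{x_1,x_3}-\rho_{x_2,x_3})}{2(1-\rho_{x_1,x_2})}.
\]
This $\gamma$ emerges from the algebraic identity
\[
\frac{\rho_{x_1,x_3}-\rho_{x_1,x_2}\rho_{x_2,x_3}}{1-\rho_{x_1,x_2}^2}-\frac{\rho_{x_1,x_3}+\rho_{x_2,x_3}}{2(1+\rho_{x_1,x_2})}=\frac{\rho_{x_1,x_3}-\rho_{x_2,x_3}}{2(1-\rho_{x_1,x_2})},
\]
obtained by putting the two fractions on the common denominator $2(1-\rho_{x_1,x_2})(1+\rho_{x_1,x_2})$ and factoring out $(1+\rho_{x_1,x_2})$. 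Multiplying $\gamma^2$ by $\E[(x_1-x_2)^2]=2(1-\rho_{x_1,x_2})$ and expanding the perfect square produces exactly the three summands in the statement.

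The main obstacle is the algebraic bookkeeping: carefully combining fractions with denominators $1\pm\rho_{x_1,x_2}$ and $1-\rho_{x_1,x_2}^2$, and verifying that the expansion produces the precise cross-term $(w_1w_3-w_2w_3)(\rho_{x_1,x_3}-\rho_{x_2,x_3})$ with the correct sign. Two consistency checks support the final form: (i) setting $w_3=0$ collapses $\Delta_{Bias}^{n\to\infty}$ to $\tfrac12(1-\rho_{x_1,x_2})(w_1-w_2)^2$, recovering the bivariate Theorem~\ref{thm:asym2dBias}; (ii) a fully symmetric configuration with $w_1=w_2$ and $\rho_{x_1,x_3}=\rho_{x_2,x_3}$ makes the bias increment vanish, as expected.
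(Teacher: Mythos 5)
Your proof is correct, and it takes a genuinely different and arguably cleaner route than the paper. The paper's proof (Appendix~\ref{app:proofs3D}) first derives the full closed-form expressions $\mathcal{F}$ and $\mathcal{G}$ for the squared bias of each model --- including the common term $\E_x[(w_1x_1+w_2x_2+w_3x_3)^2]$ that only cancels at the end --- and then subtracts them through a fairly heavy algebraic computation. You instead exploit the nesting $V_{1D}\subset V_{2D}$ and the Pythagorean identity for orthogonal projections to collapse the difference of biases into the single quantity $\|P_{V_{2D}}y_{\mathrm{true}}-P_{V_{1D}}y_{\mathrm{true}}\|^2$, and the $x_1\leftrightarrow x_2$ symmetry reduces this to $2(1-\rho_{x_1,x_2})\gamma^2$. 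I checked the key steps: your population projection coefficients $a,b$ agree with those in the paper's appendix, the fraction identity producing $\gamma$ is correct (the numerator factors as $(1+\rho_{x_1,x_2})(\rho_{x_1,x_3}-\rho_{x_2,x_3})$), and expanding $2(1-\rho_{x_1,x_2})\gamma^2$ reproduces the three summands of the statement with the right signs. Your route buys two things the paper's direct computation obscures: it makes manifest that $\Delta_{Bias}^{n\to\infty}\ge 0$ (it is a squared norm, consistent with the univariate hypothesis space being contained in the bivariate one), and it exposes the result as a perfect square in $\gamma$, which immediately explains the structure of the three terms and the two degenerate cases ($w_3=0$ recovering Theorem~\ref{thm:asym2dBias}, and $\rho_{x_1,x_3}=\rho_{x_2,x_3}$ with $w_1=w_2$ giving zero). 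The paper's approach, on the other hand, yields the individual biases $\mathcal{F}$ and $\mathcal{G}$ as standalone quantities, which it reuses elsewhere.
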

\begin{proof}
The result follows from the difference of the biases of the two models, after algebraic computations.
\end{proof}

\begin{remark}
Letting the feature $x_3$ having general variance $\sigma_{x_3}^2$, with little changes in the algebraic computations of the proof, the difference of biases is given by:
\begin{equation}
\begin{aligned}
\Delta_{Bias}^{n\to \infty} & = \frac{1}{2}(1-\rho_{x_1,x_2})(w_1-w_2)^2\\
&\quad+\sigma_{x_3}(w_1w_3-w_2w_3)(\rho_{x_1,x_3}-\rho_{x_2,x_3})\\
&\quad+w_3^2\sigma_{x_3}^2\frac{(\rho_{x_1,x_3}-\rho_{x_2,x_3})^2}{2(1-\rho_{x_1,x_2})}.
\end{aligned}
\end{equation}
\end{remark}

\subsubsection{Correlation threshold}
The result of the following theorem extends the result of Theorem \ref{thm:asympBound2D} for the three-dimensional setting.

\begin{theorem}\label{thm:3Dbound}
In the asymptotic setting, let the relationship between the features and the target be linear with Gaussian noise. Assuming unitary variances of the features $\sigma_{x_1}=\sigma_{x_2}=\sigma_{x_3}=1$, the decrease of variance is greater than the increase of (squared) bias due to the aggregation of the features $x_1$ and $x_2$ with their average if and only if:
{\medmuskip=1mu
\thinmuskip=1mu
\thickmuskip=1mu
\begin{equation}\label{eq:bound3D}
\begin{gathered}
1-(a-b)-\sqrt{a(a-2b)} \leq\rho_{x_1,x_2}\leq 1-(a-b)+\sqrt{a(a-2b)},\\
\text{with } \begin{cases}
a=\frac{\sigma^2}{(n-1)(w_1-w_2)^2}\\
b=\frac{(\rho_{x_1,x_3}-\rho_{x_2,x_3})w_3}{(w_1-w_2)}.
\end{cases}
\end{gathered}
\end{equation}
}
\end{theorem}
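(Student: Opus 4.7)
The plan is to set up the inequality $\Delta_{var}^{n\to\infty}\ge \Delta_{bias}^{n\to\infty}$ using the quantities already computed in Remark \ref{rem:var3D} and Theorem \ref{thm:bias3dDiff}, and then reduce it to a quadratic in the variable $t\coloneqq 1-\rho_{x_1,x_2}$.

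\textbf{Step 1: Write the governing inequality.} From Remark \ref{rem:var3D}, the asymptotic decrease in variance is $\frac{\sigma^2}{n-1}$, independent of the correlation structure involving $x_3$. From Theorem \ref{thm:bias3dDiff}, the increase in squared bias equals
\begin{equation*}
\tfrac{1}{2}(1-\rho_{x_1,x_2})(w_1-w_2)^2+(w_1-w_2)w_3(\rho_{x_1,x_3}-\rho_{x_2,x_3})+w_3^2\frac{(\rho_{x_1,x_3}-\rho_{x_2,x_3})^2}{2(1-\rho_{x_1,x_2})}.
\end{equation*}
Imposing $\Delta_{var}^{n\to\infty}\ge \Delta_{bias}^{n\to\infty}$ and dividing both sides by $\tfrac{1}{2}(w_1-w_2)^2$ (the degenerate case $w_1=w_2$ will be treated separately, since then there is no bias penalty for aggregation), I would introduce the shorthand $a=\frac{\sigma^2}{(n-1)(w_1-w_2)^2}$ and $b=\frac{(\rho_{x_1,x_3}-\rho_{x_2,x_3})w_3}{w_1-w_2}$, which are precisely the quantities defined in the theorem. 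The inequality becomes
\begin{equation*}
2a \;\ge\; (1-\rho_{x_1,x_2})+2b+\frac{b^2}{1-\rho_{x_1,x_2}}.
\end{equation*}

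\textbf{Step 2: Reduce to a quadratic in $t=1-\rho_{x_1,x_2}$.} Assuming $\rho_{x_1,x_2}<1$ (so $t>0$), multiplying by $t$ preserves the direction of the inequality and yields $t^{2}+2(b-a)t+b^{2}\le 0$. Solving this quadratic in $t$ gives the roots $t_{\pm}=(a-b)\pm\sqrt{(b-a)^{2}-b^{2}}=(a-b)\pm\sqrt{a(a-2b)}$, and the inequality holds iff $t_{-}\le t\le t_{+}$. Translating back via $\rho_{x_1,x_2}=1-t$ flips the bounds and reproduces exactly the two-sided envelope in \eqref{eq:bound3D}.

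\textbf{Step 3: Consistency checks.} I would verify two sanity conditions: (i) when $w_3=0$ or $\rho_{x_1,x_3}=\rho_{x_2,x_3}$, one has $b=0$, so the interval collapses to $\rho_{x_1,x_2}\ge 1-2a$, recovering Theorem \ref{thm:asympBound2D} with unitary variances; and (ii) the discriminant $a(a-2b)$ must be nonnegative for a nonempty feasibility interval to exist, which is a natural necessary condition ($2b>a$ would mean the interaction term through $x_3$ dominates and aggregation is never beneficial, regardless of $\rho_{x_1,x_2}$).

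\textbf{Main obstacle.} The algebra is routine, so no single step is genuinely hard; the only point requiring care is the sign analysis when multiplying through by $t=1-\rho_{x_1,x_2}$ and, relatedly, the need to confirm that the resulting admissible interval for $\rho_{x_1,x_2}$ lies in $[-1,1)$ and that $t$ remains strictly positive. These are bookkeeping concerns rather than mathematical obstructions, and should be addressed explicitly before concluding the ``if and only if''.
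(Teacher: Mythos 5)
Your proposal is correct and follows essentially the same route as the paper, which simply states that the result follows from algebraic manipulation of $\Delta_{var}^{n\to\infty}-\Delta_{Bias}^{n\to\infty}\geq 0$ using Remark \ref{rem:var3D} and Theorem \ref{thm:bias3dDiff}; your reduction to the quadratic $t^{2}+2(b-a)t+b^{2}\le 0$ in $t=1-\rho_{x_1,x_2}$ supplies exactly the omitted algebra and arrives at the stated interval. The sign caveats you flag (dividing by $(w_1-w_2)^2$, multiplying by $t>0$) are legitimate and are not addressed in the paper either, so your treatment is, if anything, slightly more careful.
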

\begin{proof}
Recalling the asymptotic difference of variances from Remark \ref{rem:var3D}
and the difference of biases from Theorem \ref{thm:bias3dDiff}
the result follows after algebraic computations on the difference $\Delta_{var}^{n\to \infty} - \Delta_{Bias}^{n\to \infty} \geq 0.$
\end{proof}

\begin{remark}\label{rem:gen3D}
Equation \eqref{eq:bound3D} holds also in the case of generic variance $\sigma^2_{x_3}$ of the feature $x_3$, with the only difference that $b$ becomes:
\begin{equation}\label{eq:general3D}
    b=\frac{\sigma_{x_3}(\rho_{x_1,x_3}-\rho_{x_2,x_3})w_3}{(w_1-w_2)}.
\end{equation}
\end{remark}
\begin{remark}\label{rem:sameCorr}
The result obtained in this section with three features is more difficult to interpret than the bivariate one. However, if the two features $x_1$ and $x_2$ are uncorrelated with the third feature $x_3$ or they have the same correlation with it ($\rho_{x_1,x_3}=\rho_{x_2,x_3}$), then Equation \eqref{eq:bound3D} is equal to the one found in the bivariate asymptotic analysis (Equation \eqref{eq:2DasymBuondRed}).
\end{remark}
\begin{remark}
Since the analysis is asymptotic, the theoretical quantities in Equation \eqref{eq:bound3D} can be substituted with their consistent estimators when the number of samples $n$ is \emph{large}.
\end{remark}

\subsection{D-dimensional case}\label{sub:Ddim}

This last subsection of the analysis shows the generalization from three to $D$ dimensions. In particular, we assume the relationship between the $D$ features $x_1,...,x_D$ and the target to be linear with Gaussian noise $y=w_1x_1+...+w_Dx_D+\epsilon$, with $\epsilon\sim \mathcal{N}(0,\sigma^2)$. As done throughout the paper, we assume the training dataset $\mathbf{X}=[\mathbf{x_1}\ ...\ \mathbf{x_D}]$ to be known and from the zero-mean assumption $\mathbb{E}[y]=0$ and $\sigma^2_y = \sigma^2$.

As discussed for the three-dimensional case, we compare the performance (in terms of bias and variance) of the two-dimensional linear regression $\hat{y}=\hat{w}_ix_i+\hat{w}_jx_j$ with the one-dimensional linear regression $\hat{y}=\hat{w}\frac{x_i+x_j}{2}=\hat{w}\Bar{x}$ and in the computations we consider $x_i=x_1,x_j=x_2$ without loss of generality.

It is possible to directly extend the three-dimensional analysis of the previous subsection to this general case considering the model to be $y=w_1x_1+w_2x_2+wx+\epsilon$, with $w=1$ and $x=w_3x_3+...+w_Dx_D$. Recalling that in this case the third feature $x$ has general variance $\sigma^2_{x}$, the following lemma holds.

\begin{lemma}
Let $y=w_1x_1+...+w_Dx_D+\epsilon=w_1x_1+w_2x_2+wx+\epsilon$ with $\sigma^2_{x_1}=\sigma^2_{x_2}=1$ and $\sigma^2_x=\sigma^2_{w_3x_3+...+w_Dx_D}$. Then, performing linear regression in the asymptotic setting, the decrease of variance is greater than the increase of bias when aggregating the two features $x_1$ and $x_2$ with their average if and only if the condition on the correlation of Equation \eqref{eq:bound3D} holds (with the parameter $b$ expressed like in Equation \eqref{eq:general3D} as $b=\frac{\sigma_{x}(\rho_{x_1,x}-\rho_{x_2,x})w}{(w_1-w_2)}$).
\end{lemma}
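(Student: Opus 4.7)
The plan is a direct reduction to the three-dimensional result of Theorem \ref{thm:3Dbound}, in the general-variance form given by Remark \ref{rem:gen3D}. First, I would rewrite the $D$-dimensional linear model as a three-term model,
\[
y = w_1 x_1 + w_2 x_2 + w\,x + \epsilon,\qquad w = 1,\quad x = w_3 x_3 + \dots + w_D x_D,
\]
and stress that both candidate regressions (the bivariate $\hat{y}=\hat{w}_1 x_1+\hat{w}_2 x_2$ and the univariate $\hat{y}=\hat{w}\bar{x}$) use only $x_1$ and $x_2$ as regressors; the remaining features $x_3,\dots,x_D$ enter the analysis exclusively through their effect on the target $y$.

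Next I would verify that the two ingredients of the bias--variance comparison depend on $(x_3,\dots,x_D)$ only through the composite variable $x$. On the variance side, Theorem \ref{thm:3var} gives $var_\mathcal{T}(\hat{w}\mid\mathbf{X})=(\mathbf{X}_{\mathrm{reg}}^T\mathbf{X}_{\mathrm{reg}})^{-1}\sigma^2$, which depends only on the regressor columns and on the noise variance, so Remark \ref{rem:var3D} still yields $\Delta_{var}^{n\to\infty}=\sigma^2/(n-1)$. On the bias side, the expected OLS estimator conditional on $\mathbf{X}$ equals $(w_1,w_2)^T$ plus the omitted-variable term $(\mathbf{X}_{\mathrm{reg}}^T\mathbf{X}_{\mathrm{reg}})^{-1}\mathbf{X}_{\mathrm{reg}}^T\sum_{i\geq 3} w_i\mathbf{x}_i$, which in the asymptotic limit $n\to\infty$ depends on $\{x_i\}_{i\geq 3}$ only through $cov(x_j,\sum_{i\geq 3} w_i x_i)=cov(x_j,x)$ for $j=1,2$ and through $\sigma_x^2$. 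The same collapse occurs for the univariate model. Consequently, the entire bias derivation of Subsection \ref{subsubsec:bias} transfers verbatim under the substitution $x_3\leftarrow x$, $w_3\leftarrow w=1$, $\sigma_{x_3}\leftarrow \sigma_x$, and $\rho_{x_j,x_3}\leftarrow \rho_{x_j,x}$.

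Once this reduction is in place, the lemma follows by applying the threshold inequality of Remark \ref{rem:gen3D}: imposing $\Delta_{var}^{n\to\infty}\geq \Delta_{Bias}^{n\to\infty}$ on the reduced three-variable model produces exactly the quadratic condition on $\rho_{x_1,x_2}$ displayed in Equation \eqref{eq:bound3D}, with parameter $b=\sigma_x(\rho_{x_1,x}-\rho_{x_2,x})w/(w_1-w_2)$ and $w=1$, as claimed.

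The main obstacle is the second step: justifying that the omitted-variable contribution $\sum_{i\geq 3} w_i x_i$ influences the bias--variance comparison solely through the joint distribution of $(x_1,x_2,x)$. Linearity of OLS in the target, combined with linearity of the true regression function, makes this plausible, but one must carefully chase the $n\to\infty$ limits through the explicit expressions for $\bar h(x)$ and $\bar y(x)$ in both models to confirm that every dependence on the individual features $x_i$, $i\geq 3$, indeed aggregates into $cov(\cdot,x)$ and $\sigma_x^2$. Once this collapse is established, the final inequality is literally the same algebraic manipulation already carried out in the proof of Theorem \ref{thm:3Dbound}.
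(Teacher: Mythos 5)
Your proposal is correct and follows essentially the same route as the paper: the paper's own proof is precisely the reduction to the three-dimensional result with general variance of the third feature (Theorem \ref{thm:3Dbound} together with Remark \ref{rem:gen3D}), obtained by treating $x=w_3x_3+\dots+w_Dx_D$ as a single composite regressor with coefficient $w=1$. Your additional verification that the omitted features influence the comparison only through $cov(x_j,x)$ and $\sigma_x^2$ makes explicit a step the paper leaves implicit, but it is the same argument.
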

\begin{proof}
The lemma follows by applying the three-dimensional analysis with general variance of the third feature $\sigma^2_x$ (Theorem \ref{thm:3Dbound} and Remark \ref{rem:gen3D}).
\end{proof}

\subsection{D-dimensional algorithm}\label{subs:NdimAlgo}

For the general $D$-dimensional case, as explained in the previous subsection, the three-dimensional results can be extended considering as third feature the linear combination of the $D-2$ features not currently considered for the aggregation. A drawback of applying the obtained result in practice is that it requires the knowledge of all the coefficients $w_1,...,w_D$, which is unrealistic, or to approximate them through an estimate, performing linear regression on the complete $D$-dimensional dataset. In this case, the computational cost is $\mathcal{O}(n\cdot D^2 + D^3)$---which becomes $\mathcal{O}(n\cdot D^2 + D^{2.37})$ if using the Coppersmith–Winograd algorithm~\citep{COPPERSMITH1990}---and it is impractical with a huge number of features. Therefore, since the equation in the three dimensional asymptotic analysis becomes equal to the bivariate one if the two features have the same correlation with the third (Remark \ref{rem:sameCorr}), it is reasonable, if they are highly correlated, to assume this to be valid and to apply the asymptotic bivariate result shown in Equation \eqref{eq:2DasymBuondRed} to decide whether the two features should be aggregated or not. 
%\am{Questo e' un punto critico, secondo me. Perchè stiamo attuando una semplificazione per ottenere un algoritmo utilizzabile. Ragioniamo su come riusciamo a giustificarlo al meglio \color{green}lasciamo così come detto}
In this way, we iteratively try all combinations of two features, with complexity $\mathcal{O}(n+D^2)$ in the worst case, in order to choose the groups of features that is convenient to aggregate with their mean.

\begin{algorithm}[ht]
\caption{\algnameshort: \algname}\label{alg:dimRed}
\begin{algorithmic}
\Require{$D$ features $\{x_1,\dots,x_D\}$; target $y$; $n$ samples}
\Ensure{reduced features $\{\bar{x}_1,\dots,\bar{x}_d\}$, with $d \leq D$}
\State{$\bm{\mathcal{P}} \leftarrow \{\}$}\Comment{Partition of the features}
\State{$\mathcal{V} \leftarrow \{\}$} \Comment{Set of already considered features}
\ForEach {$i \in \{ 1,\dots,D\}$}
\If{$i\not \in \mathcal{V}$}
    \State $\mathcal{P} \leftarrow \{i\}$
    \State $ \mathcal{V} \leftarrow  \mathcal{V} \cup \{i\}$  
    \ForEach{$j \in \{ i+1,\dots,D\}$}
    \State{$c \leftarrow \text{correlation}(x_i, x_j)$}
    \State{$b \leftarrow \text{threshold}(x_i, x_j, y)$}
    \If{$c \geq b$} \Comment{Aggregate the features}
    \State{$\mathcal{P} \leftarrow \mathcal{P}  \cup \{j\}$}
    \State{$\mathcal{V} \leftarrow  \mathcal{V} \cup \{j\}$  }
    \EndIf
    \EndFor
    \State{$\bm{\mathcal{P}} \leftarrow \bm{\mathcal{P}} \cup \{\mathcal{P}\}$}
\EndIf
\EndFor   
\State{$d \gets \lvert\bm{\mathcal{P}}\rvert$}
\ForEach{$k \in \{ 1,\dots,d\}$}
\State{$\bar{x}_k = \frac{1}{\lvert\mathcal{P}_k\rvert} \sum_{i \in \mathcal{P}_k} x_{i}$}
\EndFor\\
\Return{$\{\bar{x}_1,\dots,\bar{x}_d\}$}
\end{algorithmic}
\end{algorithm}

In Algorithm \ref{alg:dimRed} the pseudo-code of the proposed algorithm \emph{\algname} (\algnameshort) can be found. The proposed dimensionality reduction algorithm creates a $d$ dimensional partition of the indices of the features $\{1,\dots,D\}$ by iteratively comparing couples of features and adding them to the same subset if their correlation ($\text{correlation}(x_i,x_j)$) is greater than the threshold ($\text{threshold}(x_i,x_j,y)$), obtained from Equation \eqref{eq:2DasymBuondRed}. Then, it aggregates the features in each set $k$ of the partition ($\bm{\mathcal{P}}$) with their average, producing each output $\bar{x}_k$.

\section{Numerical Validation}\label{sec:appl}
In this section, the theoretical results obtained in Section \ref{sec:2Dalgorithm} and \ref{sec:3andD} are exploited to perform dimensionality reduction on synthetic datasets of two, three and $D$ dimensions. Furthermore, the proposed dimensionality reduction approach \emph{\algnameshort} is applied to four real datasets and compared with PCA and Supervised-PCA.\footnote{Code and datasets can be found at the following link: \url{https://github.com/PaoloBonettiPolimi/PaperLinCFA}.}
To evaluate the performance of the computed linear regressions, the results will be evaluated in terms of Mean Squared Error ($MSE$) and R-squared ($R^2$).

\subsection{Two-dimensional application}

\begin{table}[ht]
%\begin{sidewaystable}
\sidewaystablefn%
%\begin{minipage}{\textheight}
\caption{Experiment on synthetic bivariate data for two combinations of weights and three different values of variance of the noise. 
\label{tab:bidimSmall}}
\centering 
\resizebox{\textwidth}{!}
{\begin{tabular}{@{}ccccccc @{}} \hline 
    & \multicolumn{3}{c}{95\% Confidence Interval ($w_1=0.2,w_2=0.8$)}& \multicolumn{3}{c}{95\% Confidence Interval ($w_1=0.47,w_2=0.52$ )}\\
\hline 
Quantity & $\sigma=0.5$ & $\sigma=1$ & $\sigma=10$ & $\sigma=0.5$ & $\sigma=1$ & $\sigma=10$ \\
\hline 
\# aggregations (theo) & $\mathbf{0}$ & $\mathbf{0}$ & $\mathbf{500}$ & $\mathbf{500}$ & $\mathbf{500}$ & $\mathbf{500}$ \\
\# aggregations (emp) & $\mathbf{0}$ & $\mathbf{24}$ & $\mathbf{332}$ & $\mathbf{314}$ & $\mathbf{339}$ & $\mathbf{346}$ \\
$R^2$ full & $0.781\pm5.2\mathrm{e}{-5}$ & $0.487\pm1.23\mathrm{e}{-4}$ & $0.010\pm2.64\mathrm{e}{-4}$ & $0.794\pm6.4\mathrm{e}{-5}$ & $0.505\pm9.1\mathrm{e}{-5}$ & $0.012\pm2.31\mathrm{e}{-4}$ \\
$R^2$ aggregate & $0.765\pm3.0\mathrm{e}{-5}$ & $0.486\pm7.8\mathrm{e}{-5}$ & $0.011\pm1.69\mathrm{e}{-4}$ & $0.794\pm6.0\mathrm{e}{-5}$ & $0.506\pm7.5\mathrm{e}{-5}$ & $0.015\pm2.0\mathrm{e}{-5}$ \\
\textit{MSE} full & $0.275\pm6.5\mathrm{e}{-5}$ & $1.000\pm2.40\mathrm{e}{-4}$ & $103.021\pm0.027$ & $0.263\pm8.2\mathrm{e}{-5}$ & $0.949\pm1.75\mathrm{e}{-4}$ & $94.573\pm2.209\mathrm{e}{-3}$\\
\textit{MSE} aggregate & $0.295\pm3.7\mathrm{e}{-5}$ & $1.004\pm1.52\mathrm{e}{-4}$ & $102.977\pm0.018$ & $0.262\pm7.7\mathrm{e}{-5}$ & $0.947\pm1.45\mathrm{e}{-4}$ & $94.363\pm1.914\mathrm{e}{-3}$\\
\hline 
\end{tabular}}
%\end{minipage}{\textheight}
%\end{sidewaystable}
\end{table}

In the bivariate setting, according to Equation \eqref{eq:2DasymBuondRed} and \eqref{eq:2DfinBuondRed}, it is convenient to aggregate the two features with a small number of samples $n$, with a small absolute value of the difference between the coefficients of the linear model $w_1,w_2$ or with a large variance of the noise $\sigma^2$. The synthetic experiments (full description in Appendix \ref{app:exp}) confirm with data the theoretical result. In particular, they are performed with a fixed number of samples $n=500$, a fixed correlation between the features $\rho_{x_1,x_2}\approx 0.9$, comparing two combinations of weights (at small and large distances) and three different variances of the noise (small, normal, large).

Table \ref{tab:bidimSmall} shows the results of the experiments (more detailed results can be found in Table \ref{tab:2dimSyn},\ref{tab:2dimSynSmall} of Appendix \ref{app:exp}). In line with the theory, when the weights in the linear model are consistently distant, only with a huge variance of the noise the threshold is far from 1 and the two features are aggregated, while for a reasonably small amount of variance in the noise they are kept separated. On the other hand, when the weights in the linear model are similar, the threshold of Equation \eqref{eq:2DasymBuondRed} is small and the conclusion is to aggregate the two features also with a small amount of variance in the noise. 
The confidence intervals on the $R^2$ and on the \textit{MSE} confirm that, when the correlation is above the threshold, the performance of the linear model when the two features are aggregated with their average is statistically not worse than the bivariate model where they are kept separate. It is finally important to notice that, knowing the coefficients of the regression, always leads to aggregate the two features or not in all the $500$ repetitions of the experiment (row \emph{\# aggregations (theo)}). On the contrary, estimating the coefficients from data leads to the same action in most repetitions but not always (row \emph{\# aggregations (emp)}), since the limited amount of data introduces noise into the estimates.

\subsection{Three-dimensional application}

\begin{table}[h]
\caption{Synthetic experiment in the three dimensional setting comparing the full model with three variables with the bivariate model where $x_1,x_2$ are aggregated with their mean.
\label{tab:3dimSyn}}
\centering 
{\begin{tabular}{@{}cc@{}} \hline 
Quantity & 95\% Confidence Interval \\\hline 
\# Aggregations (theo) & $\mathbf{500}$ \\
\# Aggregations (emp) & $\mathbf{335}$ \\
$R^2$ full & $0.825\pm6\mathrm{e}{-6}$\\
$R^2$ aggregate & $0.825\pm5\mathrm{e}{-6}$\\
\textit{MSE} full & $0.285\pm9\mathrm{e}{-6}$\\
\textit{MSE} aggregate & $0.286\pm8\mathrm{e}{-6}$\\
\hline 
\end{tabular}}
\end{table}

Equation \eqref{eq:bound3D} expresses the interval for which it is convenient to aggregate the two features $x_1$and $x_2$ in the three-dimensional setting. As in the bivariate case, it is related to the number of samples, the difference between weights, and the variance of the noise. In addition, it also depends on the difference of the correlations between each of the two features with the third one $x_3$ and on the weight $w_3$.\\ 
The experiment performed in this setting is based on synthetic data, computed with the following realistic setting: the weights $w_1=0.4,\ w_2=0.6$ are closer than $w_3=0.2$. Moreover, the two features are significantly correlated: $\rho_{x_1,x_2}\approx0.88$ (more details can be found in Appendix \ref{app:exp}).

In this setting, as shown in Table \ref{tab:3dimSyn}, it is convenient to aggregate the two features $x_1,x_2$ with their average both in terms of \textit{MSE} and $R^2$, since the aggregation does not worsen the performances. In particular, the aggregation is already convenient with a small standard deviation of the noise ($\sigma=0.5$).

\subsection{$D$-dimensional application}

\begin{table}[th]
\caption{Synthetic experiment in the $D$ dimensional setting. The experiment has been repeated twice: considering the theoretical threshold with the exact coefficients (\textit{theo}) and with coefficients estimated from data (\textit{emp}). \label{tab:NdimSyn}}
\centering 
{\begin{tabular}{@{}cc@{}} \hline 
Quantity & 95\% Confidence Interval \\\hline 
$R^2$ full & $0.828\pm1.46\mathrm{e}{-4}$\\
$R^2$ aggregate (theo) & $0.890\pm4.8\mathrm{e}{-5}$\\
$R^2$ aggregate (emp) & $0.881\pm1.07\mathrm{e}{-4}$\\
\textit{MSE} full & $157.346\pm0.120$\\
\textit{MSE} aggregate (theo) & $100.536\pm0.040$\\
\textit{MSE} aggregate (emp) & $108.725\pm0.088$\\
\hline 
Number of reduced variables (theo) & $\mathbf{4}$ \\
Number of reduced variables (emp) & $\mathbf{15}$ \\
\hline
\end{tabular}}
\end{table}

This subsection introduces the $D$-dimensional synthetic experiment performed $500$ times with $n=500$ samples and $D=100$ features, reduced with the proposed algorithm \emph{\algnameshort} (more details can be found in Appendix \ref{app:exp}).

The test results shown in Table \ref{tab:NdimSyn} underline that knowing the real values of the coefficients of the linear model would lead to a reduced dataset of $d=4$ features and a significant increase of performance (\emph{$R^2$ aggregate (theo), $MSE$ aggregate (theo)}), while using the empirical coefficients the dimension is reduced to $d=15$, still with a significant increase of performance both in terms of $ \text{MSE}$ and $R^2$ (\emph{$R^2$ aggregate (emp), $MSE$ aggregate (emp)}). This is a satisfactory result and it is confirmed by the real dataset application described below.

\begin{figure*}[h]
\centering
\subfloat[]{\includegraphics[width=2.3in]{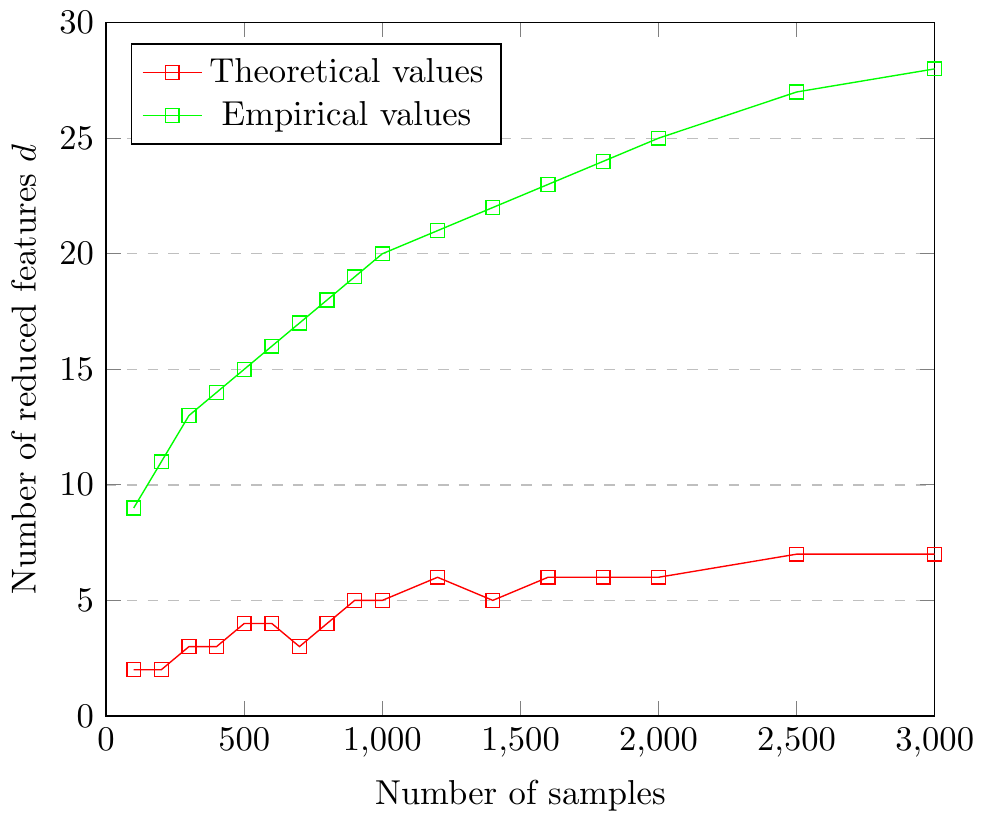}%
\label{fig:nFeatures}}
\hfil
\subfloat[]{\includegraphics[width=2.3in]{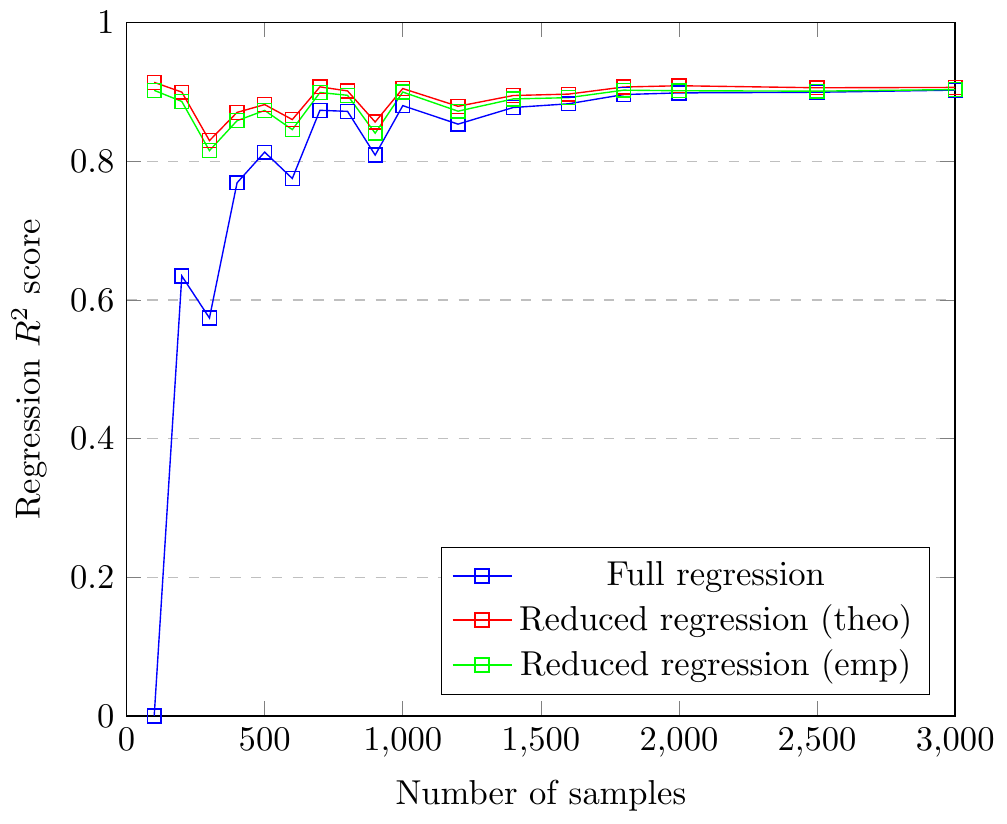}%
\label{fig:R2}}
\hfil
\subfloat[]{\includegraphics[width=2.3in]{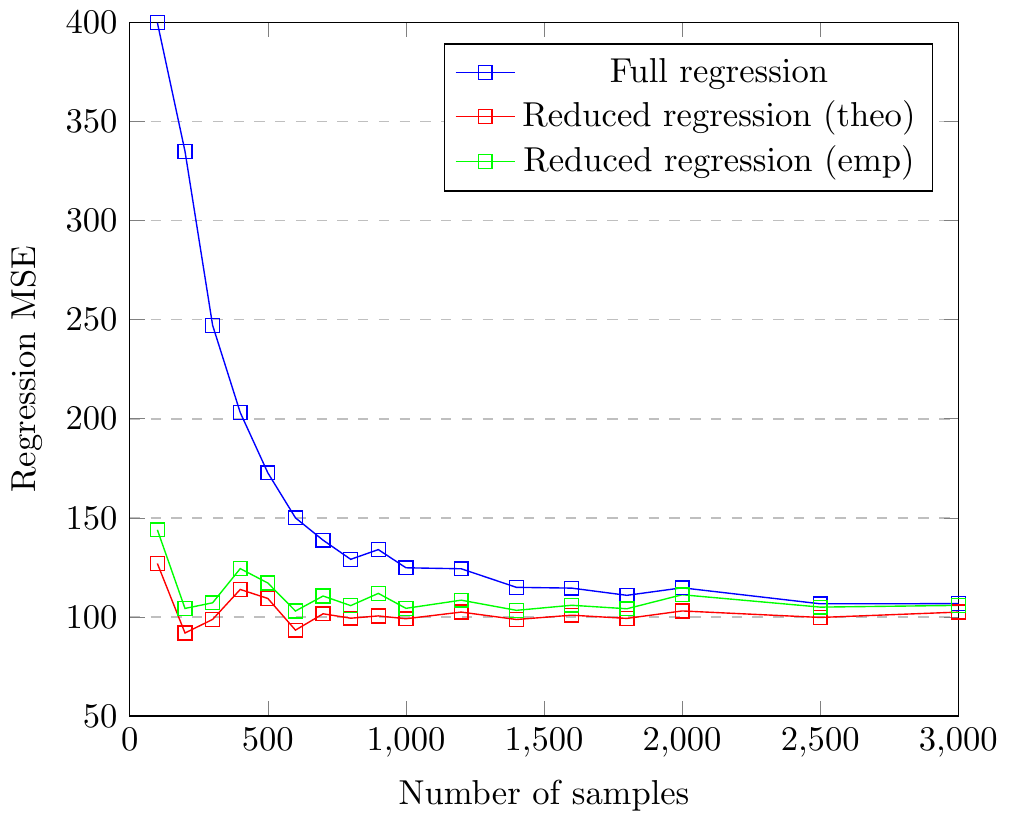}%
\label{fig:MSE}}
\caption{Figure \ref{fig:nFeatures} shows the number of reduced features for a different number of samples. Figure \ref{fig:R2},\ref{fig:MSE} show the regression performance in terms of $R^2$ and \textit{MSE} for different number of samples. Blue lines refer to the linear regression with all the original features, while red and green lines respectively refer to linear regression on the features reduced by applying the proposed algorithm considering theoretical and empirical quantities.}
\label{fig:nSamples}
\end{figure*}

To better understand the performance of the algorithm, in Figure \ref{fig:nSamples} we consider the number of selected features and the regression scores. From Figure \ref{fig:nFeatures} it is clear that with a small number of samples, both considering theoretical and empirical quantities, the number of reduced features $d$ becomes smaller to prevent overfitting. Moreover, considering the empirical quantities, which are the only ones available in practice, lead to a larger number of reduced features (but still significantly smaller than the original dimension $D$). Figure \ref{fig:R2},\ref{fig:MSE} show the performance of the linear regression considering the reduced features compared with the full dataset. When the number of samples is significantly larger than the number of features, the performance of the reduced datasets is only slightly better but, when the number of samples is of the same order of magnitude as the number of features, the reduced datasets (both considering empirical and theoretical quantities) significantly outperform the regression over the full dataset. Moreover, the regression performed with reduced datasets is much robust, since it has a score that is stable for different numbers of samples.

The main practical result introduced in this paper (the algorithm \textit{\algnameshort}) has been also tested on four real datasets. In particular, the results of the application of the dimensionality reduction method introduced in this paper are discussed in comparison with the chosen baselines: PCA and Supervised PCA\footnote{The code for Supervised PCA has been adapted starting from the implementation provided at the following link:\\
\url{https://github.com/kumarnikhil936/supervised_pca}}.

\begin{table}[]
\caption{Experiments on real datasets. The total number of samples $n$ has been divided into train (66\% of data) and test (33\% of data) sets.
\label{tab:realRes}}
\centering 
{\begin{tabular}{@{}ccccc@{}} \hline 
Quantity & Life Exp & Financial & Climatological I & Climatological II \\\hline 
\# samples $n$ & $1649$ & $1299$ & $1038$ & $981$\\
\hline
Full dim (\# features $D$) & $18$ & $75$ & $136$ & $1991$\\
Reduced dim PCA  & $13$ & $29$ & $13$ & $20$\\
Reduced dim supervised PCA  & $4$ & $4$ & $43$ & $40$\\
Reduced dim \algnameshort & $14$ & $12$ & $38$ & $37$\\
\hline
$R^2$ full & $0.834$ & $-1.441$ & $0.298$ & $-1.402$\\
$R^2$ PCA & $0.831$ & $0.846$ & $0.486$ & $0.775$\\
$R^2$ supervised PCA & $0.809$ & $0.863$ & $0.585$ & $0.867$\\
$R^2$ \algnameshort & $0.835$ & $0.885$ & $0.604$ & $0.922$\\
\hline
\textit{MSE} full & $0.180$ & $3.702$ & $0.286$ & $1.277$\\
\textit{MSE} PCA & $0.192$ & $0.234$ & $0.209$ & $0.205$\\
\textit{MSE} supervised PCA & $0.207$ & $0.208$ & $0.168$ & $0.120$\\
\textit{MSE} \algnameshort & $0.179$ & $0.162$ & $0.145$ & $0.070$\\
\hline 
\end{tabular}}
\end{table}

Specifically, the algorithm has been applied to a dataset of $18$ features predicting life expectancy and three large dimensional datasets related to finance and climate (the datasets are described in more detail in Appendix \ref{app:exp}). Table \ref{tab:realRes} shows the $MSE$ and $R^2$ coefficients obtained with linear regression applied on the full dataset, on the dataset reduced by \algnameshort, and on the dataset reduced by PCA and Supervised-PCA. The number of components selected for PCA is set to explain $95\%$ of variance, while for supervised PCA it is reported the best result (evaluating from $d=1$ to $d=50$ principal components).

It is possible to notice that in the first case, when the number of features is low, the results are similar between the full regression and the regression on the reduced dataset applying supervised PCA, PCA or \emph{\algnameshort}. On the other hand, when the algorithms are applied to the large dimensional data, the algorithm that we propose obtains slightly better performances than the other two methods. Therefore, the main result of this paper is able, in linear settings, to reduce the dimensionality of the input features improving (or not-worsening) the performance of the linear model and, most importantly, preserving the interpretability of the reduced features.

\section{Conclusion and Future Work}\label{sec:conclusion}
This paper presents a dimensionality reduction algorithm in linear settings with the theoretical guarantee to produce a reduced dataset that does not perform worse than the full dataset in terms of $MSE$, with a decrease of variance that is larger than the increase of bias due to the aggregation of features. The main strength of the proposed approach is that it aggregates features through their mean, which reduces the dimension meanwhile preserving the interpretability of each feature, which is not common in traditional dimensionality reduction approaches like PCA. Moreover, the complexity of the proposed algorithm is lower than performing a linear regression on the full original dataset. The main weaknesses of the proposed method are that all the computations have been done assuming the features to be continuous and the relationship between the target and the features to be linear, which is a strong assumption in real-world applications. However, the empirical results show an increase in performance and a significant reduction of dimensionality when applying the proposed algorithm to real-world datasets on which linear regression has good performances.

In future work it may be interesting to relax the linearity assumption, considering the target as a general function of the input features and applying a general machine learning method to the data. Another possible way to enrich the results obtained in this paper is to consider structured data, where prior knowledge of their relationship can be useful to identify the most suitable features for the aggregation (e.g., on climatological data, features that are registered by two adjacent sensors are more likely to be aggregated).

\section*{Declarations and Acknowledgments}
This work has been supported by the CLINT research project funded by the H2020 Programme of the European Union under Grant Agreement No 101003876.\\
The authors have no competing interests to declare that are relevant to the content of this article.

\setlength{\bibsep}{0pt plus 0.3ex}
\bibliography{ms}

\begin{thebibliography}{44}
\providecommand{\natexlab}[1]{#1}
\providecommand{\url}[1]{{#1}}
\providecommand{\urlprefix}{URL }
\providecommand{\doi}[1]{\url{https://doi.org/#1}}
\providecommand{\eprint}[2][]{\url{#2}}
 \bibcommenthead

\bibitem[{Bair et~al(2006)Bair, Hastie, Paul, and Tibshirani}]{Bair2006}
Bair E, Hastie T, Paul D, et~al (2006) Prediction by supervised principal
  components. Journal of the American Statistical Association
  101(473):119--137. \doi{10.1198/016214505000000628}

\bibitem[{Barshan et~al(2011)Barshan, Ghodsi, Azimifar, and
  Jahromi}]{Barshan2011}
Barshan E, Ghodsi A, Azimifar Z, et~al (2011) Supervised principal component
  analysis: Visualization, classification and regression on subspaces and
  submanifolds. Pattern Recognition 44:1357--1371.
  \doi{10.1016/j.patcog.2010.12.015}

\bibitem[{Belkin and Niyogi(2001)}]{Belkin2001}
Belkin M, Niyogi P (2001) Laplacian eigenmaps and spectral techniques for
  embedding and clustering. In: Dietterich T, Becker S, Ghahramani Z (eds)
  Advances in Neural Information Processing Systems, vol~14. MIT Press

\bibitem[{Bishop(2006)}]{bishop2006}
Bishop CM (2006) Pattern Recognition and Machine Learning (Information Science
  and Statistics). Springer-Verlag, Berlin, Heidelberg

\bibitem[{Chandrashekar and Sahin(2014)}]{chandrashekar2014survey}
Chandrashekar G, Sahin F (2014) A survey on feature selection methods.
  Computers \& Electrical Engineering 40(1):16--28

\bibitem[{Chao et~al(2019)Chao, Luo, and Ding}]{chao2019}
Chao G, Luo Y, Ding W (2019) Recent advances in supervised dimension reduction:
  A survey. Machine Learning and Knowledge Extraction 1(1):341--358.
  \doi{10.3390/make1010020}

\bibitem[{Coppersmith and Winograd(1990)}]{COPPERSMITH1990}
Coppersmith D, Winograd S (1990) Matrix multiplication via arithmetic
  progressions. Journal of Symbolic Computation 9(3):251--280.
  \doi{https://doi.org/10.1016/S0747-7171(08)80013-2}

\bibitem[{Cornes et~al(2018)Cornes, van~der Schrier, van~den Besselaar, and
  Jones}]{Cornes2018}
Cornes RC, van~der Schrier G, van~den Besselaar EJM, et~al (2018) An ensemble
  version of the e-obs temperature and precipitation data sets. Journal of
  Geophysical Research: Atmospheres 123(17):9391--9409.
  \doi{https://doi.org/10.1029/2017JD028200}

\bibitem[{Cunningham and Ghahramani(2015)}]{cunningham2015}
Cunningham JP, Ghahramani Z (2015) Linear dimensionality reduction: Survey,
  insights, and generalizations. Journal of Machine Learning Research
  16(89):2859--2900

\bibitem[{Didan(2015)}]{Didan2015}
Didan K (2015) Myd13q1 modis/aqua vegetation indices 16-day l3 global 250m sin
  grid v006 [data set]. NASA EOSDIS Land Processes DAAC \doi{10.1109/72.761722}

\bibitem[{Espadoto et~al(2021)Espadoto, Martins, Kerren, Hirata, and
  Telea}]{Espadoto2021}
Espadoto M, Martins RM, Kerren A, et~al (2021) Toward a quantitative survey of
  dimension reduction techniques. IEEE Transactions on Visualization and
  Computer Graphics 27:2153--2173

\bibitem[{Fisher(1936)}]{fisher1936}
Fisher RA (1936) The use of multiple measurements in taxonomic problems. Annals
  of eugenics 7(2):179--188

\bibitem[{Golub and Reinsch(1970)}]{Golub1970}
Golub GH, Reinsch C (1970) Singular value decomposition and least squares
  solutions. Numer Math 14(5):403–420. \doi{10.1007/BF02163027}

\bibitem[{Hastie et~al(2009)Hastie, Tibshirani, and Friedman}]{hastie2009}
Hastie T, Tibshirani R, Friedman J (2009) The Elements of Statistical Learning:
  Data Mining, Inference, and Prediction. Springer series in statistics

\bibitem[{Hinton and Salakhutdinov(2006)}]{Hinton2006}
Hinton GE, Salakhutdinov RR (2006) Reducing the dimensionality of data with
  neural networks. Science 313(5786):504--507. \doi{10.1126/science.1127647}

\bibitem[{Hoeffding(1963)}]{Hoeffding1963}
Hoeffding W (1963) Probability inequalities for sums of bounded random
  variables. Journal of the American Statistical Association 58(301):13--30

\bibitem[{Hotelling(1933)}]{Hotelling1933}
Hotelling H (1933) Analysis of a complex of statistical variables into
  principal components. Journal of Educational Psychology 24:498--520

\bibitem[{Hyvarinen(1999)}]{Hyvarinen1999}
Hyvarinen A (1999) Fast and robust fixed-point algorithms for independent
  component analysis. IEEE Transactions on Neural Networks 10(3):626--634.
  \doi{10.1109/72.761722}

\bibitem[{Jacod and Protter(2004)}]{jacod2004}
Jacod J, Protter P (2004) Probability Essentials. Universitext, Springer Berlin
  Heidelberg

\bibitem[{Jenssen(2010)}]{Jenssen2010}
Jenssen R (2010) Kernel entropy component analysis. IEEE Transactions on
  Pattern Analysis and Machine Intelligence 32(5):847--860.
  \doi{10.1109/TPAMI.2009.100}

\bibitem[{Jing et~al(2012)Jing, Zhang, and Ng}]{Jing2012}
Jing L, Zhang C, Ng MK (2012) Snmfca: Supervised nmf-based image classification
  and annotation. IEEE Transactions on Image Processing 21(11):4508--4521.
  \doi{10.1109/TIP.2012.2206040}

\bibitem[{Johnson and Wichern(2007)}]{johnson2007}
Johnson R, Wichern D (2007) Applied Multivariate Statistical Analysis. Applied
  Multivariate Statistical Analysis, Pearson Prentice Hall

\bibitem[{Lafon and Lee(2006)}]{Lafon2006}
Lafon S, Lee A (2006) Diffusion maps and coarse-graining: a unified framework
  for dimensionality reduction, graph partitioning, and data set
  parameterization. IEEE Transactions on Pattern Analysis and Machine
  Intelligence 28(9):1393--1403. \doi{10.1109/TPAMI.2006.184}

\bibitem[{Lu et~al(2017)Lu, Lai, Xu, Li, Zhang, and Yuan}]{lu2017}
Lu Y, Lai Z, Xu Y, et~al (2017) Nonnegative discriminant matrix factorization.
  IEEE Transactions on Circuits and Systems for Video Technology
  27(7):1392--1405. \doi{10.1109/TCSVT.2016.2539779}

\bibitem[{Maurer and Pontil(2009)}]{Maurer2009}
Maurer A, Pontil M (2009) Empirical bernstein bounds and sample-variance
  penalization. In: COLT

\bibitem[{Pearson(1901)}]{pearson1901}
Pearson KF (1901) Liii. on lines and planes of closest fit to systems of points
  in space. The London, Edinburgh, and Dublin Philosophical Magazine and
  Journal of Science 2(11):559--572. \doi{10.1080/14786440109462720}

\bibitem[{Raducanu and Dornaika(2012)}]{raducanu2012}
Raducanu B, Dornaika F (2012) A supervised non-linear dimensionality reduction
  approach for manifold learning. Pattern Recognition 45(6):2432--2444.
  \doi{https://doi.org/10.1016/j.patcog.2011.12.006}

\bibitem[{Ribeiro et~al(2008)Ribeiro, Vieira, and Carvalho~das
  Neves}]{ribeiro2008}
Ribeiro B, Vieira A, Carvalho~das Neves J (2008) Supervised isomap with
  dissimilarity measures in embedding learning. In: Progress in Pattern
  Recognition, Image Analysis and Applications. Springer Berlin Heidelberg

\bibitem[{Roweis and Saul(2000)}]{Roweis2000}
Roweis ST, Saul LK (2000) Nonlinear dimensionality reduction by locally linear
  embedding. Science 290(5500):2323--2326. \doi{10.1126/science.290.5500.2323}

\bibitem[{Sammon(1969)}]{Sammon1969}
Sammon J (1969) A nonlinear mapping for data structure analysis. IEEE
  Transactions on Computers C-18(5):401--409. \doi{10.1109/T-C.1969.222678}

\bibitem[{Shawe-Taylor and Cristianini(2004)}]{shawe2004}
Shawe-Taylor J, Cristianini N (2004) Kernel Methods for Pattern Analysis.
  Cambridge University Press, \doi{10.1017/CBO9780511809682}

\bibitem[{Sorzano et~al(2014)Sorzano, Vargas, and Montano}]{Sorzano2014}
Sorzano COS, Vargas J, Montano AP (2014) A survey of dimensionality reduction
  techniques. \doi{10.48550/ARXIV.1403.2877}

\bibitem[{Teh and Roweis(2002)}]{teh2002}
Teh Y, Roweis S (2002) Automatic alignment of local representations. In: Becker
  S, Thrun S, Obermayer K (eds) Advances in Neural Information Processing
  Systems, vol~15. MIT Press

\bibitem[{Tenenbaum et~al(2000)Tenenbaum, Silva, and Langford}]{Tenenbaum2000}
Tenenbaum J, Silva V, Langford J (2000) A global geometric framework for
  nonlinear dimensionality reduction. Science 290:2319--2323

\bibitem[{Thurstone(1931)}]{Thurstone1931}
Thurstone LL (1931) Multiple factor analysis. Psychological Review
  38(5):406--427. \doi{10.1037/h0069792}

\bibitem[{Ulfarsson and Solo(2011)}]{Ulfarsson2011}
Ulfarsson M, Solo V (2011) Vector l0 sparse variable pca. Signal Processing,
  IEEE Transactions on 59:1949 -- 1958. \doi{10.1109/TSP.2011.2112653}

\bibitem[{Van Der~Maaten et~al(2009)Van Der~Maaten, Postma, and Van~den
  Herik}]{vandermaaten2009}
Van Der~Maaten L, Postma E, Van~den Herik J (2009) Dimensionality reduction: a
  comparative review. J Mach Learn Res 10:66--71

\bibitem[{Weinberger et~al(2004)Weinberger, Sha, and Saul}]{Weinberger2004}
Weinberger KQ, Sha F, Saul LK (2004) Learning a kernel matrix for nonlinear
  dimensionality reduction. In: Proceedings of the Twenty-First International
  Conference on Machine Learning. Association for Computing Machinery, ICML
  '04, p 106, \doi{10.1145/1015330.1015345}

\bibitem[{Yu et~al(2006)Yu, Yu, Tresp, Kriegel, and Wu}]{yu2006}
Yu S, Yu K, Tresp V, et~al (2006) Supervised probabilistic principal component
  analysis. In: Proceedings of the 12th ACM SIGKDD international conference on
  Knowledge discovery and data mining, pp 464--473

\bibitem[{Zaki and Jr.(2014)}]{zaki2014}
Zaki MJ, Jr. WM (2014) Data Mining and Analysis: Fundamental Concepts and
  Algorithms. Cambridge University Press

\bibitem[{Zellner(2022)}]{Zellner2022}
Zellner P (2022) Vegetation health index - 231 m 8 days (version 1.0) [data
  set]. Eurac Research
  \doi{https://doi.org/10.48784/161b3496-534a-11ec-b78a-02000a08f41d}

\bibitem[{Zhang(2009)}]{Zhang2009}
Zhang Sq (2009) Enhanced supervised locally linear embedding. Pattern
  Recognition Letters 30:1208--1218. \doi{10.1016/j.patrec.2009.05.011}

\bibitem[{Zhang et~al(2018)Zhang, Zhang, Qin, Zhang, Li, and Li}]{Zhang2018}
Zhang Y, Zhang Z, Qin J, et~al (2018) Semi-supervised local multi-manifold
  isomap by linear embedding for feature extraction. Pattern Recognition 76

\bibitem[{Zhang and Zha(2004)}]{Zhang2004}
Zhang Z, Zha H (2004) Principal manifolds and nonlinear dimensionality
  reduction via tangent space alignment. SIAM Journal on Scientific Computing
  26(1):313--338. \doi{10.1137/S1064827502419154}

\end{thebibliography}

\newpage

\begin{appendices}

\section{Two-dimensional analysis: additional proofs and results}\label{app:twoDim}
This section shows proofs and additional technical results that are not reported in Section \ref{sec:2Dalgorithm} to keep the exposition clear.

\subsection{Variance}
This subsection contains some additional proofs related to the bivariate analysis of variance presented in the main paper.
\begin{proof}[Proof of Equation \eqref{eq:VarEst}]\label{proof:varEstim}
Given the training set of features $\mathbf{X}$ and target $\mathbf{y}$, in a linear regression model, the estimated weights are computed as $\hat{w} = (\mathbf{X}^T\mathbf{X})^{-1}\mathbf{X}^T\mathbf{y}$. Therefore: 
\begin{equation*}
\begin{gathered}
var_{\mathcal{T}}(\hat{w}\lvert\mathbf{X}) = var_{\mathcal{T}}((\mathbf{X}^T\mathbf{X})^{-1}\mathbf{X}^T \mathbf{y}\lvert\mathbf{X})\\
= (\mathbf{X}^T\mathbf{X})^{-1}\mathbf{X}^T\mathbf{X}(\mathbf{X}^T\mathbf{X})^{-1}var_{\mathcal{T}}(\mathbf{y}\lvert\mathbf{X}). 
\end{gathered}
\end{equation*}
Since $(\mathbf{X}^T\mathbf{X})^{-1}\mathbf{X}^T\mathbf{X}=\mathbf{I}$ and $var_{\mathcal{T}}(\mathbf{y}\lvert\mathbf{X}) = \sigma^2$ by hypothesis, the result follows.
\end{proof}
\begin{proof}[Proof of Lemma \ref{lemma:varianceEstimator}]
To prove this results it is enough to start from Equation \eqref{eq:VarEst} and substitute the values of $\mathbf{X}$.\\
For the one-dimensional setting: 
\begin{equation*}
\begin{gathered}
\small
var_{\mathcal{T}}(\hat{w}\lvert\mathbf{X}) = (\mathbf{X}^T\mathbf{X})^{-1}\sigma^2\\
=\Big(\begin{bmatrix} \bar{x}^1&...&\bar{x}^n \end{bmatrix} \begin{bmatrix} \bar{x}^1 \\ ... \\ \bar{x}^n \end{bmatrix}\Big)^{-1}\sigma^2= \frac{\sigma^2}{\sum_{i=1}^n (\bar{x}^i)^2}.
\end{gathered}
\end{equation*}
Recalling that the expected value of the random variables $x_1$ and $x_2$ is zero by hypothesis, then $\sum_{i=1}^n (\bar{x}^i)^2=(n-1)\hat{\sigma}^2_{\bar{x}}.$
\\
For the two dimensional setting: 
\begin{equation*}
\begin{gathered}
var_{\mathcal{T}}(\hat{w}\lvert\mathbf{X}) = (\mathbf{X}^T\mathbf{X})^{-1}\sigma^2\\
= \Bigg(\begin{bmatrix} x^1_1 & ... & x^n_1 \\ x_2^1 & ... & x_2^n \end{bmatrix} \begin{bmatrix} x^1_1 & x^1_2 \\ ... & ... \\ x_1^n & x_2^n \end{bmatrix}\Bigg)^{-1}\sigma^2 
\\= \Bigg(\begin{bmatrix} (x^1_1)^2+...+(x^n_1)^2 & x^1_1x^1_2+...+x^n_1x^n_2 \\ x^1_1x^1_2+...+x^n_1x^n_2 & (x^1_2)^2+...+(x^n_2)^2 \end{bmatrix}\Bigg)^{-1}\sigma^2\\
=\frac{\sigma^2}{(\sum_{i=1}^n(x_1^i)^2\sum_{i=1}^n(x_2^i)^2)-(\sum_{i=1}^n(x_1^ix_2^i))^2}\\
\times \begin{bmatrix} (x^1_2)^2+...+(x^n_2)^2 & -(x^1_1x^1_2+...+x^n_1x^n_2) \\ -(x^1_1x^1_2+...+x^n_1x^n_2) & (x^1_1)^2+...+(x^n_1)^2 \end{bmatrix}.
\end{gathered}
\end{equation*}
The result follows recalling again that the expected value of the random variables $x_1,x_2$ is zero, therefore $\sum_{i=1}^n(x_1^i)^2=(n-1)\hat{\sigma}^2_{x_1}$, $\sum_{i=1}^n(x_2^i)^2=(n-1)\hat{\sigma}^2_{x_2}$ and $\sum_{i=1}^n(x_1^ix_2^i)=(n-1)\hat{cov}(x_1,x_2)$.
\end{proof}

\begin{proof}[Proof of Theorem \ref{thm:variance2D}]
Let us consider a training dataset $\mathcal{T}$ and a univariate test sample $(x,y)$. Then the variance is:
\begin{equation*}
\E_{x,\mathcal{T}}[(h_\mathcal{T}(x)-\Bar{h}(x))^2]=\E_{x}\E_\mathcal{T}[(h_\mathcal{T}(x)-\Bar{h}(x))^2].\end{equation*}
Therefore, for the one-dimensional regression:
\begin{equation*}
\begin{gathered}
\E_{x}\E_\mathcal{T}[(h_\mathcal{T}(x)-\Bar{h}(x))^2]= \E_{x}\E_\mathcal{T}[(\hat{w}x-\E_{\mathcal{T}}[\hat{w} x])^2]\\
=\E_{x}\E_\mathcal{T}[(x(\hat{w}-\E_\mathcal{T}[\hat{w}])^2]=\E_{x}[x^2]\E_{\mathcal{T}}[(\hat{w}-\E_\mathcal{T}[\hat{w}])^2]\\
= var_x(x)var_\mathcal{T}(\hat{w})=\sigma^2_x var_\mathcal{T}(\hat{w}).
\end{gathered}
\end{equation*}
Conditioning on the features training set $\mathbf{X}$:
\begin{equation*}
\E_{x}\E_\mathcal{T}[(h_\mathcal{T}(x)-\Bar{h}(x))^2 \lvert \mathbf{X}] = \sigma^2_x var_\mathcal{T}(\hat{w}\lvert\mathbf{X}) = \frac{\sigma_x^2\sigma^2}{(n-1)\hat{\sigma}^2_x}.
\end{equation*}
Regarding the two dimensional regression: 
\begin{equation*}
\begin{gathered}
\E_{x}\E_\mathcal{T}[(h_\mathcal{T}(x)-\Bar{h}(x))^2]\\
=\E_{x}\E_\mathcal{T}[(\hat{w}_1x_1+\hat{w}_2x_2-\E_{\mathcal{T}}[\hat{w}_1x_1+\hat{w}_2x_2])^2]\\
=\E_{x}\E_\mathcal{T}[(x_1(\hat{w}_1-\E_{\mathcal{T}}[\hat{w}_1])+x_2(\hat{w}_2-\E_{\mathcal{T}}[\hat{w}_2]))^2]\\
\medmuskip=0mu
\thinmuskip=0mu
\thickmuskip=0mu
=\E_{x}\E_\mathcal{T}[(x_1(\hat{w}_1-\E_{\mathcal{T}}[\hat{w}_1]))^2]+\E_{x}\E_\mathcal{T}[(x_2(\hat{w}_2-\E_{\mathcal{T}}[\hat{w}_2]))^2]\\ 
+2\E_{x}\E_\mathcal{T}[x_1x_2(\hat{w}_1-\E_{\mathcal{T}}[\hat{w}_1])(\hat{w}_2-\E_{\mathcal{T}}[\hat{w}_2])]\\  =var_x(x_1)var_\mathcal{T}(\hat{w}_1)+var_x(x_2)var_\mathcal{T}(\hat{w}_2)\\
+2cov_x(x_1,x_2)cov_\mathcal{T}(\hat{w}_1,\hat{w}_2). 
\end{gathered}
\end{equation*}

Conditioning on the features training set:
\begin{equation*}
\begin{gathered}
\E_{x}\E_\mathcal{T}[(h_\mathcal{T}(x)-\Bar{h}(x))^2 \lvert \mathbf{X}]\\ =var_x(x_1)var_\mathcal{T}(\hat{w}_1\lvert\mathbf{X}) + var_x(x_2)var_\mathcal{T}(\hat{w}_2\lvert\mathbf{X})\\
+2cov_x(x_1,x_2)cov_\mathcal{T}(\hat{w}_1,\hat{w}_2\lvert\mathbf{X})\\
=\frac{\sigma^2(\sigma^2_{x_1}\hat{\sigma}^2_{x_2}+\sigma^2_{x_2}\hat{\sigma}^2_{x_1}-2cov(x_1,x_2)\hat{cov}(x_1,x_2))}{(n-1)(\hat{\sigma}^2_{x_1}\hat{\sigma}^2_{x_2} - \hat{cov}(x_1,x_2)^2)}. 
\end{gathered}
\end{equation*}
\end{proof}

\begin{proof}[Proof of Theorem \ref{thm:diffFinite}]
From Theorem \ref{thm:variance2D}, the difference of variances between the two-dimensional and the one-dimensional cases is:
\begin{equation*}
\begin{gathered}
\frac{\sigma^2(\sigma^2_{x_1}\hat{\sigma}^2_{x_2}+\sigma^2_{x_2}\hat{\sigma}^2_{x_1}-2cov(x_1,x_2)\hat{cov}(x_1,x_2))}{(n-1)(\hat{\sigma}^2_{x_1}\hat{\sigma}^2_{x_2} - \hat{cov}(x_1,x_2)^2)}\\
-\sigma_{x_1+x_2}^2\frac{\sigma^2}{(n-1)\hat{\sigma}^2_{x_1+x_2}},
\end{gathered}
\end{equation*}

\noindent that with the assumptions of Equation \eqref{eq:sameVar} can be written as:
\begin{equation*}
\begin{gathered}
\frac{\sigma^2(2\sigma^2_{x}\hat{\sigma}^2_{x}-2cov(x_1,x_2)\hat{cov}(x_1,x_2))}{(n-1)(\hat{\sigma}^4_{x} - \hat{cov}(x_1,x_2)^2)}\\
-\sigma_{x_1+x_2}^2\frac{\sigma^2}{(n-1)\hat{\sigma}^2_{x_1+x_2}}.
\end{gathered}
\end{equation*}
Recalling that $\sigma^2_{x_1+x_2}=\sigma^2_{x_1}+\sigma^2_{x_2}+2cov(x_1,x_2)$, and that the same applies for the sample variance, the expression above is equal to:
\begin{equation*}
\begin{gathered}
\frac{\sigma^2(2\sigma^2_{x}\hat{\sigma}^2_{x}-2cov(x_1,x_2)\hat{cov}(x_1,x_2))}{(n-1)(\hat{\sigma}^4_{x} - \hat{cov}(x_1,x_2)^2)}\\
-(2\sigma^2_x+2cov(x_1,x_2))\frac{\sigma^2}{(n-1)(2\hat{\sigma}^2_{x}+2\hat{cov}(x_1,x_2))}.
\end{gathered}
\end{equation*}
Applying the common denominator the result follows:
\begin{equation*}
\begin{gathered}
\frac{\sigma^2}{(n-1)(\hat{\sigma}^4_{x} - \hat{cov}(x_1,x_2)^2)}\\
\times[(2\sigma^2_{x}\hat{\sigma}^2_{x}-2cov(x_1,x_2)\hat{cov}(x_1,x_2))\\
-(\sigma^2_x+cov(x_1,x_2))(\hat{\sigma}^2_x-\hat{cov}(x_1,x_2))]\\
= \frac{\sigma^2(\sigma^2_x-cov(x_1,x_2))(\hat{\sigma}^2_x+\hat{cov}(x_1,x_2))}{(n-1)\hat{\sigma}^4_{x}(1- \hat{\rho}_{x_1,x_2}^2)}\\
%=\frac{\sigma^2\sigma^2_x(1-\rho_{x_1,x_2})\hat{\sigma}^2_x(1+\hat{\rho}_{x_1,x_2})}{(n-1)\hat{\sigma}^4_{x}(1- \hat{\rho}_{x_1,x_2}^2)}\\
=\frac{\sigma^2}{(n-1)}\cdot\frac{\sigma^2_x(1-\rho_{x_1,x_2})}{\hat{\sigma}^2_x(1-\hat{\rho}_{x_1,x_2})}.
\end{gathered}
\end{equation*}
\end{proof}

\subsection{Bias}
 This subsection contains some technical results and proofs used to compute the difference of biases between the two considered model in the bivariate linear setting of the main paper.
\subsubsection{Expected value of the estimators}\label{subsubsec:expvalEst}
The expected value with respect to the training set $\mathcal{T}$ of the vector $\hat{w}$ of the regression coefficients estimates is necessary for the computations of the bias of the models. Given the training features $\mathbf{X}$, its known expression in a general problem $y=f(\mathbf{X})+\epsilon$ is given by~\citep{johnson2007}:
\begin{equation}\label{eq:ExpvalEst}
\E_{\mathcal{T}}[\hat{w}\lvert\mathbf{X}] = (\mathbf{X}^T\mathbf{X})^{-1}\mathbf{X}^T f(\mathbf{X}).
\end{equation}

\begin{proof}
%[Proof of Equation \eqref{eq:ExpvalEst}]
Given the training set of features $\mathbf{X}$ and target $\mathbf{y}$, in a linear regression model, the estimated weights are computed as $\hat{w} = (\mathbf{X}^T\mathbf{X})^{-1}\mathbf{X}^T\mathbf{y}$. Therefore: 
\begin{equation*}
\begin{gathered}
\E_{\mathcal{T}}[\hat{w}\lvert\mathbf{X}] = \E_{\mathcal{T}}[(\mathbf{X}^T\mathbf{X})^{-1}\mathbf{X}^T \mathbf{y}\lvert\mathbf{X}]\\ =\E_{\mathcal{T}}[(\mathbf{X}^T\mathbf{X})^{-1}\mathbf{X}^T(f(\mathbf{X})+\epsilon)\lvert\mathbf{X}]\\
=(\mathbf{X}^T\mathbf{X})^{-1}\mathbf{X}^T\E_{\mathcal{T}}[f(\mathbf{X})\lvert\mathbf{X}]+(\mathbf{X}^T\mathbf{X})^{-1}\mathbf{X}^T\E_{\mathcal{T}}[\epsilon\lvert\mathbf{X}]\\
=(\mathbf{X}^T\mathbf{X})^{-1}\mathbf{X}^T f(\mathbf{X}),
\end{gathered}
\end{equation*}
where the last equality holds since the expected value of the noise term $\epsilon$ is null by hypothesis.
\end{proof}

The following lemma shows the expected value of the weights for the two models that we are considering.
\begin{lemma}\label{lemma:expvalEstimator}
Let the real model be linear with respect to the features $x_1$ and $x_2$ ($y=w_1x_1+w_2x_2+\epsilon$). Then, in the one-dimensional case $\hat{y}=\hat{w}\bar{x}$, we have:
\begin{equation}
\medmuskip=0mu
\thinmuskip=0mu
\thickmuskip=0mu
 \E_{\mathcal{T}}[\hat{w}\lvert\mathbf{X}] =\frac{2(w_1\hat{\sigma}^2_{x_1}+w_2\hat{\sigma}^2_{x_2}+(w_1+w_2)\hat{cov}(x_1,x_2))}{\hat{\sigma}^2_{x_1}+\hat{\sigma}^2_{x_2}+2\hat{cov}(x_1,x_2)}.
\label{eq:expvalEst1d}
\end{equation}
In the two-dimensional case $\hat{y}=\hat{w}_1x_1+\hat{w}_2x_2$ the estimators are unbiased: 
\begin{equation}
\E_{\mathcal{T}}[\hat{w}\lvert\mathbf{X}] = \begin{bmatrix} w_1 \\ w_2 \end{bmatrix}.
\label{eq:expvalEst2d}
\end{equation}
\end{lemma}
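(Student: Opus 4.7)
The plan is to directly apply Equation \eqref{eq:ExpvalEst} with the true generative function $f(\mathbf{X}) = w_1 \mathbf{x_1} + w_2 \mathbf{x_2}$ in each of the two regression setups, and to simplify the resulting expressions using the zero-mean assumption on the features so that second-moment sums become sample variances and covariances.

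For the two-dimensional regression, the design matrix is exactly $\mathbf{X} = [\mathbf{x_1}\ \mathbf{x_2}]$, which is the same matrix whose columns generate the true target. Hence $f(\mathbf{X}) = \mathbf{X} w$ with $w = (w_1,w_2)^T$, and \eqref{eq:ExpvalEst} gives
\begin{equation*}
\E_{\mathcal{T}}[\hat w \lvert \mathbf{X}] = (\mathbf{X}^T\mathbf{X})^{-1}\mathbf{X}^T\mathbf{X}\, w = w,
\end{equation*}
which is the unbiasedness claim of \eqref{eq:expvalEst2d}; this is essentially a one-line verification.

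For the one-dimensional regression the design matrix becomes the single column $\bar{\mathbf{x}}$ with entries $\bar{x}^i = (x_1^i + x_2^i)/2$, so $(\bar{\mathbf{x}}^T\bar{\mathbf{x}})^{-1}$ is a scalar. Using the zero-mean assumption, $\sum_i (\bar{x}^i)^2 = (n-1)\hat{\sigma}^2_{\bar{x}}$, and analogously each cross-sum $\sum_i \bar{x}^i x_k^i = (n-1)\hat{\mathrm{cov}}(\bar{x}, x_k)$. Substituting in \eqref{eq:ExpvalEst} with $f(\mathbf{X}) = w_1\mathbf{x_1} + w_2\mathbf{x_2}$ produces
\begin{equation*}
\E_{\mathcal{T}}[\hat w \lvert \mathbf{X}] = \frac{w_1 \hat{\mathrm{cov}}(\bar{x}, x_1) + w_2 \hat{\mathrm{cov}}(\bar{x}, x_2)}{\hat{\sigma}^2_{\bar{x}}}.
\end{equation*}
The remaining step is to expand $\hat{\mathrm{cov}}(\bar{x}, x_k) = \tfrac{1}{2}(\hat{\sigma}^2_{x_k} + \hat{\mathrm{cov}}(x_1,x_2))$ and $\hat{\sigma}^2_{\bar{x}} = \tfrac{1}{4}(\hat{\sigma}^2_{x_1} + \hat{\sigma}^2_{x_2} + 2\hat{\mathrm{cov}}(x_1,x_2))$; the factors of $\tfrac{1}{2}$ and $\tfrac{1}{4}$ combine to yield exactly the factor of $2$ in the numerator of \eqref{eq:expvalEst1d}, delivering the claimed formula after simplification.

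There is no conceptual obstacle here; the argument is a direct substitution. The only mildly delicate step is the bookkeeping of the $\tfrac{1}{2}$ and $\tfrac{1}{4}$ factors that arise from $\bar{x}$ being a half-sum, which must cancel cleanly against each other to recover the stated expression. Once those prefactors are tracked carefully the result drops out, and the two parts of the lemma follow from a common template (apply \eqref{eq:ExpvalEst}, rewrite sums via the zero-mean assumption, simplify).
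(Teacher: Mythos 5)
Your proposal is correct and follows essentially the same route as the paper's proof: apply Equation \eqref{eq:ExpvalEst} with $f(\mathbf{X})=w_1\mathbf{x_1}+w_2\mathbf{x_2}$, get unbiasedness immediately in the bivariate case, and in the univariate case convert the sums into sample variances and covariances via the zero-mean assumption, with the $\tfrac{1}{2}$ and $\tfrac{1}{4}$ prefactors from $\bar{x}$ producing the factor of $2$. The only cosmetic difference is that you package the intermediate step as $\hat{cov}(\bar{x},x_k)/\hat{\sigma}^2_{\bar{x}}$ while the paper expands the raw sums directly; the computation is the same.
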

\begin{proof}
To prove this result it is enough to apply Equation \eqref{eq:ExpvalEst} in the two settings.\\
In the one dimensional case, with $x=\frac{x_1+x_2}{2}$, it becomes:
\begin{equation*}
\begin{gathered}
\E_{\mathcal{T}}[\hat{w}\lvert\mathbf{X}] = \frac{\sum_{i=1}^n x^if(x)}{(n-1)\hat{\sigma}^2_x}\\
=\frac{2(\sum_{i=1}^n x_1^if(x_1^i,x_2^i)+\sum_{i=1}^n x_2^if(x_1^i,x_2^i))}{(n-1)\hat{\sigma}^2_{x_1+x_2}}.
\end{gathered}
\end{equation*}
Assuming the real model to be linear ($y=w_1x_1+w_2x_2+\epsilon$):
\begin{equation*}
\begin{gathered}
\E_{\mathcal{T}}[\hat{w}\lvert\mathbf{X}] =\frac{2(\sum_{i=1}^n x_1^if(x_1^i,x_2^i)+\sum_{i=1}^n x_2^if(x_1^i,x_2^i))}{(n-1)\hat{\sigma}^2_{x_1+x_2}}\\
= \frac{2(\sum_{i=1}^n x_1^i(w_1x_1^i+w_2x_2^i)+\sum_{i=1}^n x_2^i(w_1x_1^i+w_2x_2^i))}{(n-1)\hat{\sigma}^2_{x_1+x_2}}\\
\medmuskip=0mu
\thinmuskip=0mu
\thickmuskip=0mu
=\frac{2(w_1\sum_{i=1}^n(x_1^i)^2+(w_1+w_2)\sum_{i=1}^n x_1^ix_2^i+w_2\sum_{i=1}^n (x_2^i)^2)}{(n-1)\hat{\sigma}^2_{x_1+x_2}}.
\end{gathered}
\end{equation*}
Remembering that the expected values of $x_1,x_2$ are equal to zero it is possible to substitute the summations with sample variances and covariances, obtaining the result.\\
In the two dimensional setting, from the general result and substituting $f(\mathbf{X})=\mathbf{X}w$:
\begin{equation*}
\E_{\mathcal{T}}[\hat{w}\lvert\mathbf{X}] = (\mathbf{X}^T\mathbf{X})^{-1}\mathbf{X}^T f(\mathbf{X}) = (\mathbf{X}^T\mathbf{X})^{-1}\mathbf{X}^T \mathbf{X}w = w.
\end{equation*}
\end{proof}

\subsubsection{Bias of the model}\label{subsubsec:biasMod}
In Equation \eqref{eq:BiasVarDec}, we defined the (squared) bias as follows:
\begin{equation}
\E_x[(\Bar{h}(x)-\bar{y})^2]= \E_x[(\E_{\mathcal{T}}[h(x)]-\E_{y\lvert x}[y])^2].
\label{eq:modelBias2}
\end{equation}
The following result shows the bias of the two specific models considered in this section.

\begin{theorem}\label{thm:bias2D}
Let the real model be linear with respect to the two features $x_1,x_2$ ($y=w_1x_1+w_2x_2+\epsilon$). Then, in the one dimensional case $y=\hat{w}\bar{x}$, we have: 
\begin{equation}
\label{eq:bias1D}
\begin{aligned}
& \E_x[(\Bar{h}(x)-\bar{y})^2] \\
\medmuskip=0mu
\thinmuskip=0mu
\thickmuskip=0mu
& =\frac{\sigma^2_{x_1+x_2}}{(\hat{\sigma}^2_{x_1+x_2})^2}(w_1\hat{\sigma}^2_{x_1}+w_2\hat{\sigma}^2_{x_2}+(w_1+w_2)\hat{cov}(x_1,x_2))^2\\
& \quad +w_1^2\sigma^2_{x_1}+w_2^2\sigma^2_{x_2}+2w_1w_2cov(x_1,x_2)\\
& \quad -\frac{2}{\hat{\sigma}^2_{x_1+x_2}}(w_1\sigma^2_{x_1}+w_2\sigma^2_{x_2}+(w_1+w_2)cov(x_1,x_2))\\
& \quad \times (w_1\hat{\sigma}^2_{x_1}+w_2\hat{\sigma}^2_{x_2}+(w_1+w_2)\hat{cov}(x_1,x_2)).
\end{aligned}
\end{equation}

On the other hand, in the two dimensional case $y=\hat{w}_1x_1+\hat{w}_2x_2$ the model is unbiased:
\begin{equation}
\label{eq:bias2D}
\E_x[(\Bar{h}(x)-\bar{y})^2] = 0.
\end{equation}

\end{theorem}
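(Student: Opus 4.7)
The strategy is to apply the definition of bias in Equation \eqref{eq:modelBias2} separately to each model, using the expected values of the estimators already computed in Lemma \ref{lemma:expvalEstimator}. For both cases, note that since $\epsilon$ has zero mean and is independent of $x$, one has $\bar{y}(x)=\E_{y|x}[y]=w_1x_1+w_2x_2$.

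\textbf{Two-dimensional case.} By Equation \eqref{eq:expvalEst2d}, $\E_\mathcal{T}[\hat{w}\mid\mathbf{X}]=(w_1,w_2)^\top$, so $\bar h(x)=w_1x_1+w_2x_2=\bar y(x)$, and the bias vanishes. This is the expected outcome since OLS is unbiased when the true regression function lies in the hypothesis class.

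\textbf{One-dimensional case.} Denote the scalar quantity on the right-hand side of Equation \eqref{eq:expvalEst1d} by $A \coloneqq \E_\mathcal{T}[\hat w\mid\mathbf{X}]$. Then $\bar h(x)=A\bar x=\tfrac{A}{2}(x_1+x_2)$, and
\begin{align*}
\bar h(x)-\bar y(x)=\Bigl(\tfrac{A}{2}-w_1\Bigr)x_1+\Bigl(\tfrac{A}{2}-w_2\Bigr)x_2 .
\end{align*}
Squaring and taking the expectation over $x$, using $\E[x_1]=\E[x_2]=0$ to convert second moments into $\sigma^2_{x_1}$, $\sigma^2_{x_2}$, $cov(x_1,x_2)$, gives
\begin{align*}
\E_x[(\bar h(x)-\bar y(x))^2]
&=\tfrac{A^2}{4}\bigl(\sigma^2_{x_1}+\sigma^2_{x_2}+2\,cov(x_1,x_2)\bigr)\\
&\quad -A\bigl(w_1\sigma^2_{x_1}+w_2\sigma^2_{x_2}+(w_1+w_2)cov(x_1,x_2)\bigr)\\
&\quad +w_1^2\sigma^2_{x_1}+w_2^2\sigma^2_{x_2}+2w_1w_2\,cov(x_1,x_2).
\end{align*}
Recognising $\sigma^2_{x_1}+\sigma^2_{x_2}+2\,cov(x_1,x_2)=\sigma^2_{x_1+x_2}$ and substituting the expression for $A$ from Equation \eqref{eq:expvalEst1d}, namely $A/2=(w_1\hat\sigma^2_{x_1}+w_2\hat\sigma^2_{x_2}+(w_1+w_2)\hat{cov}(x_1,x_2))/\hat\sigma^2_{x_1+x_2}$, yields exactly Equation \eqref{eq:bias1D}.

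\textbf{Main obstacle.} The proof is essentially a careful but routine expansion; there is no conceptual difficulty. The only place where care is needed is in bookkeeping the large algebraic expression that arises when substituting the explicit formula for $A$ into the quadratic $A^2/4$ and linear $A$ contributions, and in making sure the unhatted (population) second moments from the expectation over $x$ remain distinct from the hatted (sample) second moments inside $A$. Keeping the compact symbol $A$ until the final substitution is the cleanest way to avoid errors.
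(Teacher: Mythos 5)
Your proposal is correct and follows essentially the same route as the paper's own proof: both reduce the two-dimensional bias to zero via the unbiasedness of $\E_\mathcal{T}[\hat w\mid\mathbf{X}]=(w_1,w_2)^\top$, and both obtain the one-dimensional bias by expanding $\E_x[(\bar h(x)-\bar y(x))^2]$ into the same quadratic-in-$\E_\mathcal{T}[\hat w\mid\mathbf{X}]$ expression before substituting Equation \eqref{eq:expvalEst1d}. Your bookkeeping device of keeping the compact symbol $A$ until the final substitution is a minor stylistic difference, not a different argument.
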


\begin{proof}
The proof combines the results of Lemma \ref{lemma:expvalEstimator} with the definition of bias given in Equation \eqref{eq:modelBias2}.\\
Let us consider a training dataset $\mathcal{T}$ and a test sample $x,y$. Given the definition of (squared) bias:
\begin{equation*}
\E_x[(\Bar{h}(x)-\bar{y})^2]= \E_x[(\E_\mathcal{T}[h(x)]-\E_{y\lvert x}[y])^2],
\end{equation*}
in the one dimensional case, considering $x=\frac{x_1+x_2}{2}$: 
\begin{equation*}
\begin{gathered}
\E_{x}[(\E_{\mathcal{T}}[\hat{w} x]-(w_1x_1+w_2x_2))^2]\\
=\E_{x}[(x\E_{\mathcal{T}}[\hat{w}]-(w_1x_1+w_2x_2))^2]\\
=\E_{x}[x^2\E_{\mathcal{T}}[\hat{w}]^2]+\E_{x}[(w_1x_1+w_2x_2)^2]\\
-2\E_{x}[\E_{\mathcal{T}}[\hat{w}]x(w_1x_1+w_2x_2)].
\end{gathered}
\end{equation*}
Conditioning on the features training set and exploiting the independence between train and test set:
\begin{equation*}
\begin{gathered}
\E_x[(\Bar{h}(x)-\bar{y})^2\lvert \mathbf{X}] \\ =\sigma^2_x\E_{\mathcal{T}}[\hat{w}\lvert\mathbf{X}]^2+\E_{x}[(w_1x_1+w_2x_2)^2]\\
-2\E_{\mathcal{T}}[\hat{w}\lvert \mathbf{X}]\E_x[x(w_1x_1+w_2x_2)].
\end{gathered}
\end{equation*}

That, substituting $x=\frac{x_1+x_2}{2}$, is equal to:
\begin{equation*}
\begin{gathered}
\frac{1}{4}(\sigma^2_{x_1}+\sigma^2_{x_2}+2cov(x_1,x_2))\E_{\mathcal{T}}[\hat{w}\lvert\mathbf{X}]^2\\
+(w_1^2\sigma^2_{x_1}+w_2^2\sigma^2_{x_2}+2w_1w_2cov(x_1,x_2))\\
-\E_{\mathcal{T}}[\hat{w}\lvert\mathbf{X}](w_1\sigma^2_{x_1}+w_2\sigma^2_{x_2}+(w_1+w_2)cov(x_1,x_2)).
\end{gathered}
\end{equation*}
Substituting in the last equation the expression found in Lemma \ref{lemma:expvalEstimator} for $\E_{\mathcal{T}}[\hat{w}\lvert\mathbf{X}]$ in the one-dimensional setting, the result follows.
\\ \ \\
In the two dimensional regression, the bias is: 
$$\E_x[(\Bar{h}(x)-\bar{y})^2] =
\E_{x}[(\E_{\mathcal{T}}[\hat{w_1}x_1+\hat{w_2}x_2]-(w_1x_1+w_2x_2))^2].$$
Conditioning on the features training set, exploiting the independence between train and test set and recalling $\E_\mathcal{T}[\hat{w}\lvert\mathbf{X}] = \begin{bmatrix} w_1 \\ w_2 \end{bmatrix}$:
\begin{equation*}
\begin{gathered}
\E_{x}[(x_1\E_{\mathcal{T}}[\hat{w_1}]+x_2\E_{\mathcal{T}}[\hat{w_2}]-(w_1x_1+w_2x_2))^2]\\ =\E_{x}[(x_1w_1+x_2w_2-w_1x_1-w_2x_2)^2]=0.
\end{gathered}
\end{equation*}
\end{proof}

\subsubsection{Comparisons}
The asymptotic and finite-samples results for the comparisons of biases can be found in the main paper, in this subsection the related proofs are shown.
\begin{proof}
[Proof of Theorem \ref{thm:asym2dBias}]
Considering the bias of the two models computed in Theorem \ref{thm:bias2D} and exploiting consistency of the estimators, the difference between the one and the two dimensional model is equal to:
\begin{equation*}
\begin{gathered}
\frac{\sigma^2_{x_1+x_2}}{(\sigma^2_{x_1+x_2})^2}
(w_1\sigma^2_{x_1}+w_2\sigma^2_{x_2}+(w_1+w_2)cov(x_1,x_2))^2\\
+w_1^2\sigma^2_{x_1}+w_2^2\sigma^2_{x_2}+2w_1w_2cov(x_1,x_2)\\
-\frac{2}{\sigma^2_{x_1+x_2}}(w_1\sigma^2_{x_1}+w_2\sigma^2_{x_2}+(w_1+w_2)cov(x_1,x_2))\\
\times(w_1\sigma^2_{x_1}+w_2\sigma^2_{x_2}+(w_1+w_2)cov(x_1,x_2)) \\
=\frac{(\sigma^2_{x_1}\sigma^2_{x_2}-cov(x_1,x_2)^2)(w_1-w_2)^2}{\sigma^2_{x_1+x_2}}.
\end{gathered}
\end{equation*}
The result follows by definition of covariance.
\end{proof}

\begin{proof}[Proof of Theorem \ref{thm:diffFiniteBias}]
From Theorem \ref{thm:bias2D}, the difference between the one-dimensional and the two-dimensional bias is equal to the one-dimensional bias, since the two-dimensional model is unbiased. Therefore it is equal to:
\begin{equation*}
\begin{gathered}
\frac{\sigma^2_{x_1+x_2}(w_1\hat{\sigma}^2_{x_1}+w_2\hat{\sigma}^2_{x_2}+(w_1+w_2)\hat{cov}(x_1,x_2))^2}{(\hat{\sigma}^2_{x_1+x_2})^2}\\
+w_1^2\sigma^2_{x_1}+w_2^2\sigma^2_{x_2}+2w_1w_2cov(x_1,x_2)\\
-\frac{2(w_1\sigma^2_{x_1}+w_2\sigma^2_{x_2}+(w_1+w_2)cov(x_1,x_2))}{\hat{\sigma}^2_{x_1+x_2}}\\
\times(w_1\hat{\sigma}^2_{x_1}+w_2\hat{\sigma}^2_{x_2}+(w_1+w_2)\hat{cov}(x_1,x_2)).
\end{gathered}
\end{equation*}
Substituting the assumptions from Equation \eqref{eq:sameVar} we get:
\begin{equation*}
\begin{gathered}
\frac{2(\sigma^2_{x}+cov(x_1,x_2))((w_1+w_2)(\hat{cov}(x_1,x_2)+\hat{\sigma}^2_{x}))^2}{(\hat{\sigma}^2_{x_1+x_2})^2}\\
+\frac{(\hat{\sigma}^2_{x_1+x_2})^2(w_1^2\sigma^2_{x}+w_2^2\sigma^2_{x}+2w_1w_2cov(x_1,x_2))}{(\hat{\sigma}^2_{x_1+x_2})^2}+ \\
-\frac{2(\hat{\sigma}^2_{x_1+x_2})(w_1\sigma^2_{x}+w_2\sigma^2_{x}+(w_1+w_2)cov(x_1,x_2))}{(\hat{\sigma}^2_{x_1+x_2})^2}\\
\times(w_1\hat{\sigma}^2_{x}+w_2\hat{\sigma}^2_{x}+(w_1+w_2)\hat{cov}(x_1,x_2)),
\end{gathered}
\end{equation*}
from which, after basic algebraic computations, the result follows.
\end{proof}

\section{Two-dimensional analysis: additional setting}\label{app:addRes}
In the main paper, the finite-sample analysis in the two-dimensional case is performed with the assumption that the two features $x_1,x_2$ have respectively the same variance and sample variance. In this section are reported the results after the relaxation of this hypothesis.\\ 
In particular, the only assumption for this finite-sample analysis is unitary variance:
\begin{equation}\label{eq:unitVar}
    \sigma_{x_1}=\sigma_{x_2}=1,
\end{equation}
implying by definition of covariance that $cov(x_1,x_2)=\rho_{x_1,x_2}$.\\ 
This is not an impacting restriction since it is always possible to scale a random variable to have unitary variance dividing it by its standard deviation.
\\
The following theorem shows the difference of variance between the two-dimensional and the one-dimensional linear regression models.
\begin{theorem}
In the finite-case with unitary variances, the difference of variances of the linear model with two features compared to the one with a single feature (which is their mean), is equal to:
\begin{equation}\label{eq:general2Dvar}
\begin{gathered}\small
\frac{\sigma^2}{(n-1)}\times\Big[\frac{(\hat{\sigma}^2_{x_1}-\hat{\sigma}^2_{x_2})^2}{\hat{\sigma}^2_{x_1}\hat{\sigma}^2_{x_2}(1-\hat{\rho}_{x_1,x_2}^2)\hat{\sigma}^2_{x_1+x_2}}\\
\medmuskip=0mu
\thinmuskip=0mu
\thickmuskip=0mu
+\frac{2(1-\rho_{x_1,x_2})(\hat{\sigma}^2_{x_1}+\hat{cov}(x_1,x_2))(\hat{\sigma}^2_{x_2}+\hat{cov}(x_1,x_2))}{\hat{\sigma}^2_{x_1}\hat{\sigma}^2_{x_2}(1-\hat{\rho}_{x_1,x_2}^2)\hat{\sigma}^2_{x_1+x_2}}\Big].
\end{gathered}
\end{equation}
\end{theorem}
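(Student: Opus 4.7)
My plan is to derive Equation~\eqref{eq:general2Dvar} as a direct specialization of the two variance expressions already proved in Theorem~\ref{thm:variance2D}, without any further probabilistic input. I would first substitute the unitary variance assumption $\sigma_{x_1}=\sigma_{x_2}=1$ into Equations~\eqref{eq:variance1D} and~\eqref{eq:variance2D}. This yields $cov(x_1,x_2)=\rho_{x_1,x_2}$ and $\sigma^2_{x_1+x_2}=2(1+\rho_{x_1,x_2})$. Using the standard factorization $\hat{\sigma}^2_{x_1}\hat{\sigma}^2_{x_2}-\hat{cov}(x_1,x_2)^2=\hat{\sigma}^2_{x_1}\hat{\sigma}^2_{x_2}(1-\hat{\rho}^2_{x_1,x_2})$ in the denominator of the bivariate model, its variance simplifies to
\begin{equation*}
\frac{\sigma^2\bigl(\hat{\sigma}^2_{x_1}+\hat{\sigma}^2_{x_2}-2\rho_{x_1,x_2}\hat{cov}(x_1,x_2)\bigr)}{(n-1)\hat{\sigma}^2_{x_1}\hat{\sigma}^2_{x_2}(1-\hat{\rho}^2_{x_1,x_2})},
\end{equation*}
while the univariate variance becomes $2(1+\rho_{x_1,x_2})\sigma^2/((n-1)\hat{\sigma}^2_{x_1+x_2})$.

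Next, I would subtract the univariate from the bivariate expression and place the result over the common denominator $(n-1)\hat{\sigma}^2_{x_1}\hat{\sigma}^2_{x_2}(1-\hat{\rho}^2_{x_1,x_2})\hat{\sigma}^2_{x_1+x_2}$, which matches exactly the denominator appearing in the target formula. The global $\sigma^2/(n-1)$ prefactor then comes out cleanly, and what remains is a purely algebraic identity between polynomials in $\hat{\sigma}^2_{x_1},\hat{\sigma}^2_{x_2},\hat{cov}(x_1,x_2),\rho_{x_1,x_2}$.

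The main obstacle is verifying that identity, namely showing that the numerator
\begin{equation*}
\bigl(\hat{\sigma}^2_{x_1}+\hat{\sigma}^2_{x_2}-2\rho_{x_1,x_2}\hat{cov}(x_1,x_2)\bigr)\hat{\sigma}^2_{x_1+x_2}-2(1+\rho_{x_1,x_2})\bigl(\hat{\sigma}^2_{x_1}\hat{\sigma}^2_{x_2}-\hat{cov}(x_1,x_2)^2\bigr)
\end{equation*}
equals $(\hat{\sigma}^2_{x_1}-\hat{\sigma}^2_{x_2})^2+2(1-\rho_{x_1,x_2})(\hat{\sigma}^2_{x_1}+\hat{cov}(x_1,x_2))(\hat{\sigma}^2_{x_2}+\hat{cov}(x_1,x_2))$. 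I would attack this by expanding $\hat{\sigma}^2_{x_1+x_2}=\hat{\sigma}^2_{x_1}+\hat{\sigma}^2_{x_2}+2\hat{cov}(x_1,x_2)$ and matching monomials in the four variables. A shorter route exploits that both sides are affine in $\rho_{x_1,x_2}$: setting $\rho_{x_1,x_2}=1$ makes the second right-hand side summand vanish and collapses the left-hand side to $(\hat{\sigma}^2_{x_1}-\hat{\sigma}^2_{x_2})^2$, while differentiating each side with respect to $\rho_{x_1,x_2}$ produces $-2(\hat{\sigma}^2_{x_1}+\hat{cov}(x_1,x_2))(\hat{\sigma}^2_{x_2}+\hat{cov}(x_1,x_2))$, which closes the identity.

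As a sanity check, imposing also $\hat{\sigma}^2_{x_1}=\hat{\sigma}^2_{x_2}$ as in Equation~\eqref{eq:sameVar} kills the first summand of Equation~\eqref{eq:general2Dvar} and reduces the second one to $\sigma^2(1-\rho_{x_1,x_2})/((n-1)\hat{\sigma}^2_x(1-\hat{\rho}_{x_1,x_2}))$, recovering Theorem~\ref{thm:diffFinite}. I would verify this recovery explicitly to provide a useful consistency test on the algebra before declaring the proof complete.
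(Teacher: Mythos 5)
Your proposal is correct and follows essentially the same route as the paper: both start from the two variance expressions of Theorem~\ref{thm:variance2D}, impose the unitary-variance assumption, and reduce the claim to a single algebraic identity over the common denominator $(n-1)\hat{\sigma}^2_{x_1}\hat{\sigma}^2_{x_2}(1-\hat{\rho}^2_{x_1,x_2})\hat{\sigma}^2_{x_1+x_2}$. The only difference is in how that identity is closed—the paper adds and subtracts $2\hat{\sigma}^2_{x_1}\hat{\sigma}^2_{x_2}$ in the numerator and regroups, while you exploit affineness in $\rho_{x_1,x_2}$ (evaluation at $\rho_{x_1,x_2}=1$ plus matching derivatives)—which is a valid, slightly cleaner way to finish the same computation.
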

\begin{proof}
Starting from the difference of variances between the two-dimensional and the one-dimensional model (Theorem \ref{thm:variance2D}):
\begin{equation*}
\begin{gathered}
\frac{\sigma^2}{(n-1)(\hat{\sigma}^2_{x_1}\hat{\sigma}^2_{x_2} - \hat{cov}(x_1,x_2)^2)}\\
\times(\sigma^2_{x_1}\hat{\sigma}^2_{x_2}+\sigma^2_{x_2}\hat{\sigma}^2_{x_1}-2cov(x_1,x_2)\hat{cov}(x_1,x_2))\\
-\sigma_{x_1+x_2}^2\frac{\sigma^2}{(n-1)\hat{\sigma}^2_{x_1+x_2}}, 
\end{gathered}
\end{equation*}
exploiting the unitary variance assumption becomes:
\begin{equation*}
\begin{gathered}
\frac{\sigma^2}{(n-1)(\hat{\sigma}^2_{x_1}\hat{\sigma}^2_{x_2} - \hat{cov}(x_1,x_2)^2)}\\ \times(\hat{\sigma}^2_{x_2}+\hat{\sigma}^2_{x_1}-2cov(x_1,x_2)\hat{cov}(x_1,x_2))\\
- \frac{\sigma^2(2+2cov(x_1,x_2))}{(n-1)(\hat{\sigma}^2_{x_1}+\hat{\sigma}^2_{x_2}+2\hat{cov}(x_1,x_2))}.
\end{gathered}
\end{equation*}
Applying the common denominator:
\begin{equation*}
\begin{gathered}
\medmuskip=0mu
\thinmuskip=0mu
\thickmuskip=0mu
\frac{1}{(n-1)(\hat{\sigma}^2_{x_1}\hat{\sigma}^2_{x_2} - \hat{cov}(x_1,x_2)^2)(\hat{\sigma}^2_{x_1}+\hat{\sigma}^2_{x_2}+2\hat{cov}(x_1,x_2))}\\
\times\Bigg[\sigma^2(\hat{\sigma}^4_{x_1}+\hat{\sigma}^4_{x_2}+2\hat{\sigma}^2_{x_1}\hat{cov}(x_1,x_2)\\
+2\hat{\sigma}^2_{x_2}\hat{cov}(x_1,x_2)-2\hat{\sigma}^2_{x_1}\hat{cov}(x_1,x_2)cov(x_1,x_2))\\
+\sigma^2(-2\hat{\sigma}^2_{x_2}\hat{cov}(x_1,x_2)cov(x_1,x_2)\\
-2cov(x_1,x_2)\hat{cov}(x_1,x_2)^2)\\
+\sigma^2(2\hat{cov}(x_1,x_2)^2-2\hat{\sigma}^2_{x_1}\hat{\sigma}^2_{x_2}cov(x_1,x_2))\Bigg].
\end{gathered}
\end{equation*}

Finally, adding and subtracting on the numerator the term $2\hat{\sigma}^2_{x_1}\hat{\sigma}^2_{x_2}$ and grouping the terms, it is equal to:
\begin{equation*}
\begin{gathered}
\frac{\sigma^2}{(n-1)\hat{\sigma}^2_{x_1}\hat{\sigma}^2_{x_2}(1 - \hat{\rho}_{x_1,x_2}^2)\hat{\sigma}^2_{x_1+x_2}}\\
\times\Bigg[(\hat{\sigma}^2_{x_1}-\hat{\sigma}^2_{x_2})^2+2(1-\rho_{x_1,x_2})\\
\times(\hat{\sigma}^2_{x_1}+\hat{cov}(x_1,x_2))(\hat{\sigma}^2_{x_2}+\hat{cov}(x_1,x_2))\Bigg]. 
\end{gathered}
\end{equation*}
\end{proof}

\begin{remark}
When the number of samples $n$ tends to infinity the result becomes the same found in the asymptotic analysis. Moreover, when the sample variances of the two features are equal, the result becomes the same of the finite case analysis with equal sample and real variances.
\end{remark}

\begin{lemma}
The quantity found as difference of variances between the two-dimensional and one-dimensional case in this general setting is always non-negative.
\begin{proof}
Recalling the result of Equation \eqref{eq:general2Dvar}, the first factor $\frac{\sigma^2}{(n-1)}$ and the denominator of the second one $\hat{\sigma}^2_{x_1}\hat{\sigma}^2_{x_2}(1-\hat{\rho}_{x_1,x_2}^2)\hat{\sigma}^2_{x_1+x_2}$ are always non-negative, so the difference of features is positive if and only if the second numerator is positive:
\begin{equation*}
\begin{gathered}
(\hat{\sigma}^2_{x_1}-\hat{\sigma}^2_{x_2})^2+2(1-\rho_{x_1,x_2})\\
\times(\hat{\sigma}^2_{x_1}+\hat{cov}(x_1,x_2))(\hat{\sigma}^2_{x_2}+\hat{cov}(x_1,x_2))\geq0.
\end{gathered}
\end{equation*}

Focusing on the term $2(1-\rho_{x_1,x_2})(\hat{\sigma}^2_{x_1}+\hat{cov}(x_1,x_2))(\hat{\sigma}^2_{x_2}+\hat{cov}(x_1,x_2)),$
the function $(\hat{\sigma}^2_{x_1}+\hat{cov}(x_1,x_2))(\hat{\sigma}^2_{x_2}+\hat{cov}(x_1,x_2))$ takes minimum value when $\hat{cov}(x_1,x_2)=-\frac{\hat{\sigma}^2_{x_1}+\hat{\sigma}^2_{x_2}}{2}$, therefore the minimum value of this term is:
$$2(1-\rho_{x_1,x_2})(-\frac{1}{4})(\hat{\sigma}^2_{x_1}-\hat{\sigma}^2_{x_2})^2.$$
Substituting it back in the original inequality:
\begin{equation*}
\begin{gathered}
(\hat{\sigma}^2_{x_1}-\hat{\sigma}^2_{x_2})^2-\frac{1}{2}(1-\rho_{x_1,x_2})(\hat{\sigma}^2_{x_1}-\hat{\sigma}^2_{x_2})^2\\
=\frac{1}{2}(1+\rho_{x_1,x_2})(\hat{\sigma}^2_{x_1}-\hat{\sigma}^2_{x_2})^2,
\end{gathered}
\end{equation*}
that is a quantity always non-negative and proves the lemma.
\end{proof}
\end{lemma}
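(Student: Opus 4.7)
The plan is to split the expression \eqref{eq:general2Dvar} into a scalar prefactor and a bracketed term, argue that the prefactor is manifestly non-negative, and then devote most of the effort to showing that the bracket
\[
B \coloneqq (\hat{\sigma}^2_{x_1}-\hat{\sigma}^2_{x_2})^2 + 2(1-\rho_{x_1,x_2})(\hat{\sigma}^2_{x_1}+\hat{cov}(x_1,x_2))(\hat{\sigma}^2_{x_2}+\hat{cov}(x_1,x_2))
\]
is non-negative. The prefactor $\sigma^2 / [(n-1)\hat{\sigma}^2_{x_1}\hat{\sigma}^2_{x_2}(1-\hat{\rho}^2_{x_1,x_2})\hat{\sigma}^2_{x_1+x_2}]$ is non-negative since $\sigma^2\ge 0$, $n\ge 2$, the sample variances are positive, $1-\hat{\rho}^2_{x_1,x_2}\ge 0$ by Cauchy–Schwarz, and $\hat{\sigma}^2_{x_1+x_2}\ge 0$. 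So the whole problem reduces to showing $B\ge 0$.

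The first summand in $B$ is a square, hence non-negative; and the factor $1-\rho_{x_1,x_2}$ in the second summand is non-negative because $\rho_{x_1,x_2}\in[-1,1]$. The subtle point is that the product $(\hat{\sigma}^2_{x_1}+\hat{cov}(x_1,x_2))(\hat{\sigma}^2_{x_2}+\hat{cov}(x_1,x_2))$ can be negative when the empirical covariance is a sufficiently negative number (one factor positive, one factor negative). The natural strategy is therefore to lower-bound this product uniformly in $\hat{cov}(x_1,x_2)$ and plug the bound back in. Writing $a=\hat{\sigma}^2_{x_1}$, $b=\hat{\sigma}^2_{x_2}$, $c=\hat{cov}(x_1,x_2)$, the function $(a+c)(b+c)=c^2+(a+b)c+ab$ is a convex parabola in $c$, minimized at $c=-(a+b)/2$, with minimum value $ab-(a+b)^2/4=-(a-b)^2/4$.

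Substituting this worst-case value into $B$ gives
\[
B \;\ge\; (a-b)^2 - \tfrac{1}{2}(1-\rho_{x_1,x_2})(a-b)^2 \;=\; \tfrac{1}{2}(1+\rho_{x_1,x_2})(a-b)^2,
\]
and the right-hand side is non-negative because $1+\rho_{x_1,x_2}\ge 0$. This closes the argument.

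The main obstacle I anticipate is conceptual rather than computational: one must notice that the empirical quantity $\hat{cov}(x_1,x_2)$ is essentially unconstrained in the relevant regime (it need not have the same sign or magnitude as the population covariance), so a bound that uses only the identity $\hat{\rho}^2_{x_1,x_2}\le 1$ will not suffice. The key idea is to treat $\hat{cov}(x_1,x_2)$ as a free variable and minimize the problematic product over it, exploiting the convexity of a quadratic, and then to use the separate fact that the population correlation satisfies $\rho_{x_1,x_2}\ge -1$. Everything else is algebra of the sort already performed in the proofs of Theorems~\ref{thm:variance2D} and~\ref{thm:diffFinite}.
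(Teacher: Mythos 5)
Your proof is correct and follows essentially the same route as the paper's: reduce to the non-negativity of the bracketed numerator, treat $\hat{cov}(x_1,x_2)$ as a free variable, minimize the quadratic $(\hat{\sigma}^2_{x_1}+c)(\hat{\sigma}^2_{x_2}+c)$ at $c=-(\hat{\sigma}^2_{x_1}+\hat{\sigma}^2_{x_2})/2$ to obtain the worst-case value $-(\hat{\sigma}^2_{x_1}-\hat{\sigma}^2_{x_2})^2/4$, and conclude via $\tfrac{1}{2}(1+\rho_{x_1,x_2})(\hat{\sigma}^2_{x_1}-\hat{\sigma}^2_{x_2})^2\ge 0$. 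No gaps; your added remarks on why a bound via $\hat{\rho}^2_{x_1,x_2}\le 1$ alone would not suffice are a useful clarification but do not change the argument.
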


The following theorem shows the difference of (squared) bias between the one-dimensional and the two-dimensional models.
\begin{theorem}
In the finite-case, assuming unitary variances $\sigma_{x_1}=\sigma_{x_2}=1$, the increase of bias due to the aggregation of the two features with their average is equal to:
\begin{equation}
\begin{gathered}
\frac{1}{(\hat{\sigma}^2_{x_1+x_2})^2}\\
\times(2(1-\rho_{x_1,x_2})(\hat{\sigma}^2_{x_1}+\hat{\sigma}^2_{x_2}+\hat{cov}(x_1,x_2))\hat{cov}(x_1,x_2)\\
+\hat{\sigma}^4_{x_1}+\hat{\sigma}^4_{x_2}-2\rho_{x_1,x_2} \hat{\sigma}^2_{x_1}\hat{\sigma}^2_{x_2})(w_1-w_2)^2.
\end{gathered}
\end{equation}
\end{theorem}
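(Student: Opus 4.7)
The proof starts from Theorem~\ref{thm:bias2D}: since the two-dimensional regression is unbiased by Equation~\eqref{eq:bias2D}, $\Delta_{\text{bias}}$ coincides with the one-dimensional bias in Equation~\eqref{eq:bias1D}, and the task reduces to rewriting that three-term expression under the unitary-variance assumption. Substituting $\sigma^2_{x_1}=\sigma^2_{x_2}=1$, $cov(x_1,x_2)=\rho_{x_1,x_2}$ and $\sigma^2_{x_1+x_2}=2(1+\rho_{x_1,x_2})$ into Equation~\eqref{eq:bias1D} and reducing everything to the common denominator $(\hat\sigma^2_{x_1+x_2})^2$ produces a single rational function whose numerator is a homogeneous quadratic polynomial $P(w_1,w_2)$ with coefficients built from $\hat\sigma^2_{x_1}$, $\hat\sigma^2_{x_2}$, $\hat{cov}(x_1,x_2)$ and $\rho_{x_1,x_2}$.

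The structural claim is that $P=A\cdot(w_1-w_2)^2$ with $A$ equal to the bracket asserted in the theorem. I would establish the factorisation via two shortcuts that bypass the full expansion of every cross term. First, when $w_1=w_2=w$ the true model reduces to $y=2w\bar{x}+\epsilon$, which lies exactly in the one-dimensional hypothesis class $\hat{y}=\hat{w}\bar{x}$; OLS is therefore unbiased in this specification, so $P(w,w)=0$ and $(w_1-w_2)$ divides $P$. Since $P$ is homogeneous of degree two, this forces $P(w_1,w_2)=(w_1-w_2)\bigl(\alpha(s_1,s_2)w_1-\tilde{\alpha}(s_1,s_2)w_2\bigr)$, where $s_i:=\hat\sigma^2_{x_i}$. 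Second, the bias is invariant under the joint index swap $1\leftrightarrow 2$ (both in the weights and in the hatted variances), which forces $\tilde{\alpha}(s_1,s_2)=\alpha(s_2,s_1)$; combined with the fact (to be verified in the last step) that $\alpha$ is itself symmetric in $(s_1,s_2)$, this yields $\tilde{\alpha}=\alpha$ and hence $P=\alpha\cdot(w_1-w_2)^2$.

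To identify $\alpha$, I would set $w_1=1$, $w_2=0$ in the rewritten Equation~\eqref{eq:bias1D}: the bias then collapses to a single explicit rational expression whose numerator, after expanding $(\hat\sigma^2_{x_1+x_2})^2$ and $(\hat\sigma^2_{x_1}+\hat{cov}(x_1,x_2))(\hat\sigma^2_{x_2}+\hat{cov}(x_1,x_2))$, can be matched term by term with $2(1-\rho_{x_1,x_2})(\hat\sigma^2_{x_1}+\hat\sigma^2_{x_2}+\hat{cov}(x_1,x_2))\hat{cov}(x_1,x_2)+\hat\sigma^4_{x_1}+\hat\sigma^4_{x_2}-2\rho_{x_1,x_2}\hat\sigma^2_{x_1}\hat\sigma^2_{x_2}$. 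This bracket is manifestly symmetric in $(\hat\sigma^2_{x_1},\hat\sigma^2_{x_2})$, closing the argument that $\tilde{\alpha}=\alpha$ and hence yielding the theorem.

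The main obstacle is the algebraic bookkeeping in this final identification: when $w_2=0$ the third summand of Equation~\eqref{eq:bias1D} carries an apparent factor $(1+\rho_{x_1,x_2})$ that has to cancel against contributions from the first and second summands to produce the $(1-\rho_{x_1,x_2})$ factor visible in the claimed bracket, and cross-products of the form $\hat\sigma^2_{x_1}\hat{cov}(x_1,x_2)$ and $\hat\sigma^2_{x_2}\hat{cov}(x_1,x_2)$ must be collected carefully. No individual step is conceptually hard, but the overall collection of like terms deserves attention to avoid sign errors; I expect the cancellation $(1+\rho_{x_1,x_2})\mapsto(1-\rho_{x_1,x_2})$ to be the single trickiest manipulation of the proof.
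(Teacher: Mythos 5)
Your proposal is correct, and it reaches the stated bracket by a demonstrably valid route: I checked that evaluating your rewritten numerator at $w_1=1$, $w_2=0$ gives $-2(1+\rho_{x_1,x_2})(\hat{\sigma}^2_{x_1}+\hat{cov}(x_1,x_2))(\hat{\sigma}^2_{x_2}+\hat{cov}(x_1,x_2))+(\hat{\sigma}^2_{x_1+x_2})^2$, which expands exactly to $\hat{\sigma}^4_{x_1}+\hat{\sigma}^4_{x_2}-2\rho_{x_1,x_2}\hat{\sigma}^2_{x_1}\hat{\sigma}^2_{x_2}+2(1-\rho_{x_1,x_2})(\hat{\sigma}^2_{x_1}+\hat{\sigma}^2_{x_2}+\hat{cov}(x_1,x_2))\hat{cov}(x_1,x_2)$, and this is where the $(1+\rho)\mapsto(1-\rho)$ cancellation you flag indeed happens. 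The paper's own proof shares your starting point (the two-dimensional model is unbiased, so $\Delta_{bias}$ is the one-dimensional bias of Theorem~\ref{thm:bias2D}, into which the unitary-variance identities $cov(x_1,x_2)=\rho_{x_1,x_2}$ and $\sigma^2_{x_1+x_2}=2(1+\rho_{x_1,x_2})$ are substituted), but then simply expands everything over the common denominator $(\hat{\sigma}^2_{x_1+x_2})^2$ and collects terms by brute force. What you do differently is replace that expansion with a structural argument: writing the numerator as $P=Aw_1^2+Bw_1w_2+Cw_2^2$, the observation that the univariate model is exactly specified when $w_1=w_2$ gives $A+B+C=0$, hence $P=(w_1-w_2)(Aw_1-Cw_2)$, and the $1\leftrightarrow 2$ swap symmetry together with the symmetry of the identified coefficient in $(\hat{\sigma}^2_{x_1},\hat{\sigma}^2_{x_2})$ forces $C=A$, so only the single evaluation $P(1,0)=A$ remains to be computed. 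This buys a shorter, less error-prone calculation and makes the provenance of the $(w_1-w_2)^2$ factor conceptually transparent (it is exactly the misspecification penalty, vanishing when the aggregated model is correctly specified); the paper's direct expansion buys nothing extra here beyond being self-contained without the symmetry bookkeeping. Your argument is logically closed -- in particular the apparent circularity in deducing $\tilde{\alpha}=\alpha$ is resolved correctly, since $P(1,0)$ identifies $\alpha$ alone and its manifest symmetry then pins down $\tilde{\alpha}$.
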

\begin{proof}
Recalling the expression of the difference of bias between the one-dimensional and the two-dimensional linear regression models (Theorem \ref{thm:bias2D}):
\begin{equation*}
\begin{gathered}
\frac{\sigma^2_{x_1+x_2}}{(\hat{\sigma}^2_{x_1+x_2})^2}(w_1\hat{\sigma}^2_{x_1}+w_2\hat{\sigma}^2_{x_2}+(w_1+w_2)\hat{cov}(x_1,x_2))^2\\
+w_1^2\sigma^2_{x_1}+w_2^2\sigma^2_{x_2}+2w_1w_2cov(x_1,x_2)\\
-\frac{2}{\hat{\sigma}^2_{x_1+x_2}}\\
\times(w_1\sigma^2_{x_1}+w_2\sigma^2_{x_2}+(w_1+w_2)cov(x_1,x_2))\\
\times(w_1\hat{\sigma}^2_{x_1}+w_2\hat{\sigma}^2_{x_2}+(w_1+w_2)\hat{cov}(x_1,x_2)),
\end{gathered}
\end{equation*}
exploiting the unitary variance assumption can be written as:
\begin{equation*}
\begin{gathered}
\frac{2(1+\rho_{x_1,x_2})}{(\hat{\sigma}^2_{x_1+x_2})^2}(w_1\hat{\sigma}^2_{x_1}+w_2\hat{\sigma}^2_{x_2}+(w_1+w_2)\hat{cov}(x_1,x_2))^2\\
+\frac{(w_1^2+w_2^2+2w_1w_2\rho_{x_1,x_2})(\hat{\sigma}^2_{x_1}+\hat{\sigma}^2_{x_2}+2\hat{cov}(x_1,x_2))^2}{(\hat{\sigma}^2_{x_1+x_2})^2}\\
-\frac{2}{(\hat{\sigma}^2_{x_1+x_2})^2}(w_1+w_2)(1+\rho_{x_1,x_2})\\
\times(w_1\hat{\sigma}^2_{x_1}+w_2\hat{\sigma}^2_{x_2}+(w_1+w_2)\hat{cov}(x_1,x_2))\\
\times(\hat{\sigma}^2_{x_1}+\hat{\sigma}^2_{x_2}+2\hat{cov}(x_1,x_2)).
\end{gathered}
\end{equation*}
After basic algebraic computations the result follows.
\end{proof}

\begin{remark}
When the number of samples $n$ tends to infinity the result becomes the same found in the asymptotic analysis. Moreover, when the sample variances of the two features are equal, the result becomes the same of the finite case analysis with equal sample and real variances.
\end{remark}

\begin{theorem}
Necessary and sufficient condition for positivity of the difference between the reduction of variance and the increase of bias when aggregating two features with their average in the unitary-variance finite-sample setting is:
\begin{equation}
    \begin{gathered}
    \sigma^2[(\hat{\sigma}^2_{x_1}-\hat{\sigma}^2_{x_2})^2\\
    +2(1-\rho_{x_1,x_2})(\hat{\sigma}^2_{x_1}+\hat{cov}(x_1,x_2))(\hat{\sigma}^2_{x_2}+\hat{cov}(x_1,x_2))]\\
    \times(\hat{\sigma}^2_{x_1}+\hat{\sigma}^2_{x_2}+2\hat{cov}(x_1,x_2))\\
     - [2(1-\rho_{x_1,x_2})(\hat{\sigma}^2_{x_1}+\hat{\sigma}^2_{x_2}+\hat{cov}(x_1,x_2))\hat{cov}(x_1,x_2)\\
     +\hat{\sigma}^4_{x_1}+\hat{\sigma}^4_{x_2}-2\rho\hat{\sigma}^2_{x_1}\hat{\sigma}^2_{x_2}]\\
     \times(w_1-w_2)^2(n-1)(\hat{\sigma}^2_{x_1}\hat{\sigma}^2_{x_2}-\hat{cov}(x_1,x_2)^2) \geq 0.
     \end{gathered}
    \end{equation}
\end{theorem}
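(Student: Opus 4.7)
The plan is to take the two previously-established expressions for $\Delta_{var}$ (Equation \eqref{eq:general2Dvar}) and $\Delta_{bias}$ in the unitary-variance finite-sample setting, form their difference, and clear denominators so that the sign of $\Delta_{var}-\Delta_{bias}$ is captured by a single polynomial inequality of exactly the form stated. First, I would rewrite the denominator of $\Delta_{var}$ using the identity $\hat{\sigma}^2_{x_1}\hat{\sigma}^2_{x_2}(1-\hat{\rho}^2_{x_1,x_2})=\hat{\sigma}^2_{x_1}\hat{\sigma}^2_{x_2}-\hat{cov}(x_1,x_2)^2$, so that its denominator reads $(n-1)(\hat{\sigma}^2_{x_1}\hat{\sigma}^2_{x_2}-\hat{cov}(x_1,x_2)^2)\hat{\sigma}^2_{x_1+x_2}$, while $\Delta_{bias}$ retains its natural denominator $(\hat{\sigma}^2_{x_1+x_2})^2$.

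Next, I would bring both fractions over the common denominator $(n-1)(\hat{\sigma}^2_{x_1}\hat{\sigma}^2_{x_2}-\hat{cov}(x_1,x_2)^2)(\hat{\sigma}^2_{x_1+x_2})^2$. After cross-multiplication, a single surviving factor of $\hat{\sigma}^2_{x_1+x_2}=\hat{\sigma}^2_{x_1}+\hat{\sigma}^2_{x_2}+2\hat{cov}(x_1,x_2)$ multiplies the variance numerator, while the factor $(n-1)(\hat{\sigma}^2_{x_1}\hat{\sigma}^2_{x_2}-\hat{cov}(x_1,x_2)^2)$ multiplies the bias numerator, which already carries $(w_1-w_2)^2$. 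Subtracting the bias contribution from the variance contribution and requiring the result to be non-negative reproduces verbatim the inequality displayed in the theorem statement.

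For this clearing step to yield an equivalence rather than a one-sided implication, the common denominator must be strictly positive so that multiplication through it does not flip the inequality. This follows from three observations: $(n-1)>0$ whenever $n>1$; $\hat{\sigma}^2_{x_1}\hat{\sigma}^2_{x_2}-\hat{cov}(x_1,x_2)^2\geq 0$ by Cauchy--Schwarz applied to the centred samples, with strict inequality unless $x_1$ and $x_2$ are perfectly collinear on the sample; and $\hat{\sigma}^2_{x_1+x_2}>0$ unless $x_1+x_2$ is constant on the sample. The main obstacle is not conceptual but purely algebraic bookkeeping: one must distinguish the sample correlation $\hat{\rho}_{x_1,x_2}$ (which is hidden inside the denominator via $\hat{\sigma}^2_{x_1}\hat{\sigma}^2_{x_2}-\hat{cov}(x_1,x_2)^2$) from the theoretical correlation $\rho_{x_1,x_2}$ (which appears in both numerators), and keep an accurate count of how many copies of $\hat{\sigma}^2_{x_1+x_2}$ survive after reducing to the common denominator.
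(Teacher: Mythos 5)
Your proposal is correct and follows exactly the paper's route: subtract the bias increase from the variance decrease given in the two preceding theorems, put everything over the common denominator $(n-1)(\hat{\sigma}^2_{x_1}\hat{\sigma}^2_{x_2}-\hat{cov}(x_1,x_2)^2)(\hat{\sigma}^2_{x_1+x_2})^2$ using the identity $\hat{\sigma}^2_{x_1}\hat{\sigma}^2_{x_2}(1-\hat{\rho}^2_{x_1,x_2})=\hat{\sigma}^2_{x_1}\hat{\sigma}^2_{x_2}-\hat{cov}(x_1,x_2)^2$, and read off the numerator as the stated condition. Your added remark that the common denominator is positive (so that clearing it preserves the equivalence claimed by ``necessary and sufficient'') is a point the paper leaves implicit, and is a welcome clarification rather than a deviation.
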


\begin{proof}
The result is obtained subtracting the results of the two previous theorems, after algebraic computations.
\end{proof}

\section{Two-dimensional analysis: Theoretical and Practical quantities}\label{subsec:confInt}
This section elaborates the inequalities found in the main paper in Theorem \ref{thm:asympBound2D}, \ref{thm:finiteBound2D} considering only theoretical quantities or, on the other hand, quantities that can all be computed from data. In this way, in the bivariate case, we have both a theoretical conclusion of the analysis and an empirical one that can be used in practice.\\ \ \\
For the asymptotic analysis it is straightforward to obtain a theoretical and an empirical expression, indeed at the limit the estimators converge in probability to the theoretical quantities.
\begin{theorem}
\label{thm:asymptTeoEmp}
In the asymptotic setting of Theorem \ref{thm:asympBound2D}, considering only theoretical quantities, the following inequalities hold:
\begin{equation}\label{eq:asympBound2Dteo}
\begin{cases}\rho^2_{x_1,x_2} \geq 1-\frac{\sigma^2\sigma^2_{x_1+x_2}}{(n-1)\sigma^2_{x_1}\sigma^2_{x_2}(w_1-w_2)^2}\\
\rho_{x_1,x_2} \geq 1-\frac{2\sigma^2}{(n-1)(w_1-w_2)^2}\ (if\ \sigma_{x_1}=\sigma_{x_2}=1).\end{cases}
\end{equation}
On the other hand, considering only quantities that can be derived from data:
\begin{equation}\label{eq:asympBound2Demp}
\begin{cases}\hat{\rho}^2_{x_1,x_2} \geq 1-\frac{s^2\hat{\sigma}^2_{x_1+x_2}}{(n-1)\hat{\sigma}^2_{x_1}\hat{\sigma}^2_{x_2}(\hat{w}_1-\hat{w}_2)^2} \\
\hat{\rho}_{x_1,x_2} \geq 1-\frac{2s^2}{(n-1)(\hat{w}_1-\hat{w}_2)^2}\ (if\ \hat{\sigma}_{x_1}=\hat{\sigma}_{x_2}=1),
\end{cases}
\end{equation}
where $s^2=\frac{\hat{\epsilon}^T\hat{\epsilon}}{n-3}$ is the unbiased estimator of the variance $\sigma^2$ of the residual $\epsilon$ of the linear regression (an estimate $\hat{\epsilon}$ of the residual can be computed subtracting the predicted value to the real value of the target).
\end{theorem}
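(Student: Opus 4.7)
The plan is to view \eqref{eq:asympBound2Dteo} as nothing more than a restatement of Theorem~\ref{thm:asympBound2D}: its first line reproduces the general bound of that theorem verbatim, and its second line is the unit-variance specialization, so no new work is needed there. The genuine content of the present statement is therefore \eqref{eq:asympBound2Demp}, which I would prove by a plug-in argument combined with the same asymptotic consistency already invoked throughout Section~\ref{sec:2Dalgorithm}.

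Concretely, I would regard both sides of each inequality in \eqref{eq:asympBound2Dteo} as continuous functions of the parameters $\rho_{x_1,x_2}$, $\sigma^2_{x_1}$, $\sigma^2_{x_2}$, $\sigma^2_{x_1+x_2}$, $w_1$, $w_2$, $\sigma^2$ on the region where the denominators do not vanish, and then replace each one by its sample counterpart. Every such substitution is justified by a well-known consistency result: the sample variances and covariances converge in probability to their population values by the law of large numbers; the OLS estimators $\hat{w}_1,\hat{w}_2$ converge in probability to $w_1,w_2$ by correct specification of the linear model (using $\sigma^2_{x_1}\sigma^2_{x_2}-\mathrm{cov}(x_1,x_2)^2>0$, which keeps $(\mathbf{X}^T\mathbf{X})^{-1}$ well-behaved in the limit); and the residual-variance estimator $s^2=\hat{\epsilon}^T\hat{\epsilon}/(n-3)$ is the standard unbiased estimator of $\sigma^2$, whose consistency rests on $\mathbb{E}[s^2]=\sigma^2$ and concentration of the quadratic form in the Gaussian residuals. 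An application of the continuous mapping theorem (or Slutsky) then yields that the empirical inequality \eqref{eq:asympBound2Demp} and the theoretical inequality \eqref{eq:asympBound2Dteo} agree in the limit $n\to\infty$, and the unit-variance lines are recovered by plugging $\hat{\sigma}_{x_1}=\hat{\sigma}_{x_2}=1$ into the general expression.

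The main obstacle, in my view, is pinning down the consistency of $s^2$. This is slightly more delicate than the other substitutions because $\hat{\epsilon}$ is itself computed using $\hat{w}_1,\hat{w}_2$ rather than the true weights. One needs to argue that $\hat{\epsilon}^T\hat{\epsilon}/n$ and $\epsilon^T\epsilon/n$ differ by a term that vanishes in probability, using $\hat{w}-w=O_p(1/\sqrt{n})$ together with the finiteness of second moments of the features; dividing by $n-3$ instead of $n$ does not alter the asymptotic limit. Once this step is in place, the remainder is a routine continuous-mapping argument and the specialization $\hat{\sigma}_{x_1}=\hat{\sigma}_{x_2}=1$ is immediate.
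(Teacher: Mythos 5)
Your proposal is correct and matches the paper's own argument: the paper likewise observes that \eqref{eq:asympBound2Dteo} is simply a restatement of Theorem~\ref{thm:asympBound2D}, and obtains \eqref{eq:asympBound2Demp} by substituting the theoretical quantities with their consistent estimators. Your additional care about the consistency of $s^2$ (the residuals being computed from $\hat{w}$ rather than $w$) is a detail the paper leaves implicit, but it does not change the route.
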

\begin{proof}
Equation \eqref{eq:asympBound2Dteo} is the same result of Theorem \ref{thm:asympBound2D}.\\
To derive Equation \eqref{eq:asympBound2Demp} it is sufficient to substitute the theoretical quantities with their consistent estimators.
\end{proof}

For the finite-samples analysis it is necessary to introduce confidence intervals to substitute theoretical with empirical quantities and viceversa.

\begin{theorem}
\label{thm:finBound2Demp}
In the finite-case setting of Theorem \ref{thm:finiteBound2D}, considering only empirical quantities, the following inequality holds with probability at least $1-\delta$:
\begin{equation}
\begin{gathered}
\label{eq:finBound2Demp}
\hat{\rho}_{x_1,x_2} \geq
1-\frac{2(n-3)s^2}{(n-1)\chi^2_{n-3}(\frac{\delta}{2})\hat{\sigma}^2_x}\\
\medmuskip=0mu
\thinmuskip=0mu
\thickmuskip=0mu
\times\frac{1}{(\lvert\hat{w}_1-\hat{w}_2\lvert +\sqrt{3F_{3,n-3}(\frac{\delta}{2})}(\sqrt{\hat{var}(\hat{w}_1)}+\sqrt{\hat{var}(\hat{w}_2)}))^2},
\end{gathered}
\end{equation}
where $\chi^2_{n-3}(\cdot)$ represents a Chi-squared distribution with $n-3$ degrees of freedom and $F_{3,n-3}(\cdot)$ a Fisher distribution with $3,n-3$ degrees of freedom.
\begin{proof}
The unilateral confidence interval for the variance $\sigma^2$ of the residual $\epsilon$ of the linear regression model $y=w_1x_1+w_2x_2+\epsilon$, assuming $\epsilon \sim \mathcal{N} (0,\sigma^2)$ is, with probability $1-\alpha$~\citep{johnson2007}: $$\frac{(n-r-1)s^2}{\chi^2_{n-r-1}(\alpha)}\leq \sigma^2.$$
The simultaneous confidence interval for the weights $w_1,w_2$ of the linear regression model $y=w_1x_1+w_2x_2+\epsilon$ is, with probability $1-\gamma$: $$\begin{cases}
w_1\in [\hat{w_1}\pm \sqrt{\hat{var}(\hat{w}_1)}\sqrt{(r+1)F_{r+1,n-r-1}(\gamma)}]\\
w_2\in [\hat{w_2}\pm \sqrt{\hat{var}(\hat{w}_2)}\sqrt{(r+1)F_{r+1,n-r-1}(\gamma)}].
\end{cases}$$
Considering the confidence intervals and the inequality of Theorem \ref{thm:finiteBound2D}, with probability $1-\delta$:
\begin{equation*}
\begin{gathered}
\hat{\rho}_{x_1,x_2} \geq
1-\frac{2(n-3)s^2}{(n-1)\chi^2_{n-3}(\frac{\delta}{2})\hat{\sigma}^2_x}\\
\times\frac{1}{(\lvert\hat{w}_1-\hat{w}_2\rvert+\sqrt{3F_{3,n-3}(\frac{\delta}{2})}(\sqrt{\hat{var}(\hat{w}_1)}+\sqrt{\hat{var}(\hat{w}_2)}))^2}\\
\geq 1-\frac{2\sigma^2}{(n-1)\hat{\sigma}^2_x(w_1-w_2)^2}.
\end{gathered}
\end{equation*}

This means that the inequality holds and concludes the proof.
\end{proof}
\end{theorem}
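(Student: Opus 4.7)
The plan is to start from Theorem \ref{thm:finiteBound2D}, which asserts that aggregation is beneficial in $\text{MSE}$ whenever $\hat{\rho}_{x_1,x_2} \geq 1 - \frac{2\sigma^2}{(n-1)\hat{\sigma}_x^2(w_1-w_2)^2}$. The right-hand side mixes empirical quantities ($\hat{\sigma}_x^2$) with the unknown theoretical $\sigma^2$ and $w_1, w_2$. The idea is to construct a fully empirical threshold $\tilde{T}$ that \emph{upper bounds} the theoretical one with probability at least $1-\delta$: on that event, $\hat{\rho}_{x_1,x_2} \geq \tilde{T}$ is a sufficient condition for the original theoretical criterion.

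To dominate the theoretical threshold we need to \emph{lower} bound the positive quantity $q \coloneqq \frac{2\sigma^2}{(n-1)\hat{\sigma}_x^2(w_1-w_2)^2}$. For the numerator, I would invoke the classical sampling result $(n-3)s^2/\sigma^2 \sim \chi^2_{n-3}$ (valid under Gaussian noise and $r=2$ regressors), which yields the one-sided interval $\sigma^2 \geq \frac{(n-3)s^2}{\chi^2_{n-3}(\delta/2)}$ with probability at least $1-\delta/2$. For the denominator I need a high-probability \emph{upper} bound on $|w_1-w_2|$; I would use the Scheff\'e-type simultaneous confidence region for the least-squares coefficients, giving $|w_i - \hat{w}_i| \leq \sqrt{\widehat{var}(\hat{w}_i)}\sqrt{3\,F_{3,n-3}(\delta/2)}$ jointly for $i=1,2$ with probability at least $1-\delta/2$. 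The triangle inequality then yields $|w_1-w_2| \leq |\hat{w}_1-\hat{w}_2| + \sqrt{3F_{3,n-3}(\delta/2)}\bigl(\sqrt{\widehat{var}(\hat{w}_1)} + \sqrt{\widehat{var}(\hat{w}_2)}\bigr)$, hence a matching upper bound on $(w_1-w_2)^2$.

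A union bound combines the two events into a simultaneous guarantee of probability at least $1-\delta$. On that intersection, plugging the lower bound on $\sigma^2$ and the upper bound on $(w_1-w_2)^2$ into $q$ shows that $q$ is at least the fully empirical expression that appears on the right-hand side of Equation~\eqref{eq:finBound2Demp}. Consequently, $1 - (\text{empirical expression})$ upper bounds $1 - q$, so the empirical threshold dominates the theoretical one, and requiring $\hat{\rho}_{x_1,x_2}$ to exceed it is sufficient for Theorem~\ref{thm:finiteBound2D} to apply.

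The main obstacle is keeping track of the direction of every inequality: because the theoretical criterion has the form $\hat{\rho}_{x_1,x_2} \geq 1 - q$ with $q>0$, the useful empirical replacement must \emph{lower} bound $q$ rather than upper bound it, which reverses the intuitive pairing one would associate with a confidence interval on a variance and forces the use of the one-sided chi-squared bound in the \quotes{unusual} tail. A secondary subtlety is that the two coefficient deviations must be controlled by a \emph{joint} confidence region (hence the factor $\sqrt{3 F_{3,n-3}}$ from the Scheff\'e construction with $r+1=3$), not by two marginal $t$-intervals combined by a further union bound, in order to obtain the constants stated.
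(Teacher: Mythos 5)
Your proposal matches the paper's proof: both obtain the empirical threshold by lower-bounding $\sigma^2$ via the one-sided $\chi^2_{n-3}$ interval for the residual variance and upper-bounding $\lvert w_1-w_2\rvert$ via the Scheff\'e simultaneous $F_{3,n-3}$ confidence region plus the triangle inequality, then combining the two events at level $\delta/2$ each. Your write-up is in fact slightly more careful than the paper's about the direction of each bound and the explicit union-bound step, but the route is the same.
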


\begin{remark}
At the limit the quantity $\frac{\chi^2}{n-3}$ tends to 1, so the result of Theorem \ref{thm:finBound2Demp} is coherent with the asymptotic result. 
\end{remark}

In order to obtain the result with only theoretical quantities in the finite-sample case, it is necessary to introduce two bounds on the difference between covariance and sample covariance.

\begin{proposition}\label{prop:covIneq}
The following inequalities hold.
\begin{itemize}
    \item With probability $1-\delta$:
    \begin{equation}\label{eq:covUpper}
    \hat{cov}(x_1,x_2)-cov(x_1,x_2)\leq3\sqrt{\frac{log(\frac{4}{\delta})}{n-1}}.
    \end{equation}
\item with probability $1-\delta$:
    \begin{equation}\label{eq:covLower}
cov(x_1,x_2)-\hat{cov}(x_1,x_2)\leq4\sqrt{\frac{log(\frac{4}{\delta})}{n-1}};
    \end{equation}
\end{itemize}\end{proposition}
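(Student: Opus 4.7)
Both inequalities are concentration statements for the sample covariance around the true covariance, so my plan is to decompose $\hat{cov}(x_1,x_2) - cov(x_1,x_2)$ into a dominant empirical average plus correction terms coming from the fact that we center by sample (rather than true) means, and then apply Hoeffding-type bounds to each piece under the standardization/boundedness assumptions implicitly maintained throughout the paper.

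Using $\E[x_1]=\E[x_2]=0$ together with the identity
\[
\hat{cov}(x_1,x_2) = \frac{1}{n-1}\sum_{i=1}^n x_1^i x_2^i - \frac{n}{n-1}\bar{x}_1\bar{x}_2,
\]
I would write
\[
\hat{cov}(x_1,x_2) - cov(x_1,x_2) = \frac{1}{n-1}\sum_{i=1}^n\bigl(x_1^i x_2^i - cov(x_1,x_2)\bigr) + \frac{cov(x_1,x_2)}{n-1} - \frac{n}{n-1}\bar{x}_1\bar{x}_2.
\]
The first term is an i.i.d.\ sum with bounded summands, to which Hoeffding's inequality gives a tail of order $\sqrt{\log(1/\delta)/(n-1)}$. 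The middle term is deterministic and of order $1/(n-1)$, which is dominated by the square-root rate. For the last term, both $\bar{x}_1$ and $\bar{x}_2$ concentrate to $0$ at rate $\sqrt{\log(1/\delta)/n}$, again by Hoeffding, so $\bar{x}_1\bar{x}_2$ either vanishes at $O(1/n)$ or, after one-sided bounding, contributes another copy of the $\sqrt{\log(1/\delta)/(n-1)}$ rate. A union bound over three concentration events, each allocated probability $\delta/4$, explains the $\log(4/\delta)$ appearing inside the radicals in Equations~\eqref{eq:covUpper}--\eqref{eq:covLower}.

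The asymmetry between the constants $3$ and $4$ will come from the signs of the correction terms. In the upper-tail direction $\hat{cov}-cov$, the product $-\bar{x}_1\bar{x}_2$ and the deterministic $cov/(n-1)$ can be bounded using only one side of the concentration events, shaving a unit off the constant; in the lower-tail direction $cov-\hat{cov}$ the signs flip, so both corrections act against the empirical-average fluctuation and must be bounded in absolute value, yielding the slightly larger constant $4$. The main obstacle will be the careful bookkeeping: aggregating the three Hoeffding constants, absorbing the deterministic $O(1/n)$ term into the square-root rate, and distributing the $\delta$'s across the union bound so that the resulting expression collapses exactly to $c\sqrt{\log(4/\delta)/(n-1)}$ with $c\in\{3,4\}$, rather than a loose multiple thereof. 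Making this step tight (and making explicit the boundedness/sub-Gaussianity hypothesis on the standardized features that is being used) is the only delicate part of the argument.
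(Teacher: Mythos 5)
Your plan is essentially the paper's own argument: Appendix \ref{subsec:confInt} proves Proposition \ref{prop:covIneq} by applying Hoeffding's inequality to the product variable $Z=x_1x_2$, decomposing $\hat{cov}-cov$ into the empirical average of the products plus the sample-mean cross term $\frac{n}{n-1}\bar{x}_1\bar{x}_2$, invoking Hoeffding three (respectively four) times, and taking a union bound that produces the $\log(4/\delta)$ and the constants $3$ and $4$ exactly as you describe (the extra unit in the lower tail coming from the leftover $\frac{1}{n-1}\bar{x}_1\bar{x}_2$ piece that cannot be discarded one-sidedly). Your remark that the boundedness hypothesis on the standardized features needs to be made explicit is well taken, since the paper's proof also uses it only implicitly.
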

\begin{proof}
The proof exploits Hoeffding's inequality by applying it to the random variable $Z=x_1x_2$. 
The proof will derive the results of the proposition for two general random variables $X,Y$ and $n$ data $x_i,y_i$ sampled from their distribution. We denote with $\bar{X},\bar{Y}$ the means of the considered samples.\\
From Hoeffding's inequality~\citep{Hoeffding1963} applied to the variable $Z=XY$, with probability $1-\delta$, we get:
\begin{equation*}
\lvert\E[XY]-\frac{1}{n}\sum_{i=1}^{n}x_iy_i\rvert\leq\sqrt{\frac{log(\frac{2}{\delta})}{2n}},
\end{equation*}
which implies: \begin{equation*}\lvert\E[XY]-\frac{1}{n-1}\sum_{i=1}^{n}x_iy_i\rvert\leq\sqrt{\frac{log(\frac{2}{\delta})}{n-1}}.
\end{equation*}
Then with probability $1-2\delta$:
\begin{equation*}
\begin{gathered}
\hat{cov}(X,Y)-cov(X,Y)\\
=\frac{1}{n-1}\sum_{i=1}^{n}x_iy_i - \frac{n}{n-1}\bar{X}\bar{Y}-\E[XY]+\E[X]\E[Y]\\
\leq\frac{1}{n-1}\sum_{i=1}^{n}x_iy_i - \bar{X}\bar{Y}-\E[XY]+\E[X]\E[Y]\pm\bar{X}\E[Y]\\
\leq\sqrt{\frac{log(\frac{2}{\delta})}{n-1}}+\bar{Y}\sqrt{\frac{log(\frac{2}{\delta})}{2n}}+\E[{Y}]\sqrt{\frac{log(\frac{2}{\delta})}{2n}}\leq3\sqrt{\frac{log(\frac{2}{\delta})}{n-1}}, 
\end{gathered}
\end{equation*}
where the second inequality applies Hoeffding's inequality three times. Equation \eqref{eq:covUpper} is therefore proved.\\
On the other hand, with probability $1-2\delta$:
\begin{equation*}
    \begin{gathered}
cov(X,Y)-\hat{cov}(X,Y)\\
= -\frac{1}{n-1}\sum_{i=1}^{n}x_iy_i + \frac{n}{n-1}\bar{X}\bar{Y}+\E[XY]-\E[X]\E[Y]\\
=-\frac{1}{n-1}\sum_{i=1}^{n}x_iy_i + \bar{X}\bar{Y}+\frac{1}{n-1}\bar{X}\bar{Y}\\
+\E[XY]-\E[X]\E[Y]\pm\bar{X}\E[Y]\\
\leq\sqrt{\frac{log(\frac{2}{\delta})}{n-1}}+\bar{X}\sqrt{\frac{log(\frac{2}{\delta})}{2n}}+\E[{Y}]\sqrt{\frac{log(\frac{2}{\delta})}{2n}}+\frac{1}{n-1}\bar{X}\bar{Y}\\
\leq4\sqrt{\frac{log(\frac{2}{\delta})}{n-1}}, 
    \end{gathered}
\end{equation*}
where the first inequality is again due to the application of Hoeffding's inequality three times. From this result, Equation \ref{eq:covLower} follows.
\end{proof}

It is now possible to derive the expression of Equation \eqref{eq:2DfinBound} with only theoretical quantities.

\begin{theorem}
\label{thm:finBound2Dteo}
In the finite-case setting of Theorem \ref{thm:finiteBound2D}, considering only theoretical quantities, the following inequality holds with probability at least $1-\delta$:
\begin{equation}
\begin{gathered}
\label{eq:finBound2Dteo}
\rho_{x_1,x_2} \geq 1-\frac{2\sigma^2}{(n-1)\sigma^2_x(w_1-w_2)^2}\\
+\frac{1}{\sigma_x^2}\Bigg( \frac{2log(\frac{2}{\delta})}{n-1} + 2\sigma_x\sqrt{\frac{2log(\frac{2}{\delta})}{n-1}}+4 \sqrt{\frac{log(\frac{8}{\delta})}{n-1}} \Bigg).
\end{gathered}
\end{equation}
\end{theorem}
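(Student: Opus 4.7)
The plan is to treat the displayed inequality as a sufficient theoretical condition: I will show that, with probability at least $1-\delta$, it implies the empirical sufficient condition of Theorem \ref{thm:finiteBound2D},
\[ \hat{\rho}_{x_1,x_2} \geq 1-\frac{2\sigma^{2}}{(n-1)\hat{\sigma}_x^{2}(w_1-w_2)^{2}}. \]
My first step is to clear the denominator on both sides to rewrite this empirical condition in covariance form,
\[ \hat{cov}(x_1,x_2) \geq \hat{\sigma}_x^{2} - \frac{2\sigma^{2}}{(n-1)(w_1-w_2)^{2}}, \]
and, using $cov(x_1,x_2)=\sigma_x^{2}\rho_{x_1,x_2}$, to rewrite the candidate theoretical condition as
\[ cov(x_1,x_2) \geq \sigma_x^{2} - \frac{2\sigma^{2}}{(n-1)(w_1-w_2)^{2}} + \Delta, \]
where $\Delta$ is the bracketed slack that appears in the statement.

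Next I introduce the two concentration ingredients that together account for the three summands in $\Delta$. For the covariance gap, I apply Proposition \ref{prop:covIneq} Equation \eqref{eq:covLower} with confidence parameter $\delta/2$: with probability at least $1-\delta/2$,
\[ \hat{cov}(x_1,x_2) \geq cov(x_1,x_2) - 4\sqrt{\frac{\log(8/\delta)}{n-1}}, \]
which reproduces the last term of $\Delta$. For the sample variance, I will invoke a one-sided concentration of the empirical standard deviation of the form $\hat{\sigma}_x \leq \sigma_x + \sqrt{2\log(2/\delta)/(n-1)}$ (holding with probability at least $1-\delta/2$), and square it:
\[ \hat{\sigma}_x^{2} \leq \sigma_x^{2} + 2\sigma_x\sqrt{\frac{2\log(2/\delta)}{n-1}} + \frac{2\log(2/\delta)}{n-1}, \]
which reproduces exactly the first two terms of $\Delta$.

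To conclude, I combine the two events by a union bound (total failure probability at most $\delta$) and chain the inequalities: multiplying the theoretical hypothesis by $\sigma_x^{2}$ gives a lower bound on $cov(x_1,x_2)$; applying the covariance concentration transports this into a lower bound on $\hat{cov}(x_1,x_2)$; and then substituting the upper bound on $\hat{\sigma}_x^{2}$ on the right-hand side rebuilds the empirical condition exactly. The slack $\Delta$ has been calibrated so that the two concentration penalties cancel against the extra term, leaving the condition of Theorem \ref{thm:finiteBound2D} intact, and therefore, by that theorem, the decrease of variance dominates the increase of bias.

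The main obstacle I anticipate is establishing the one-sided concentration for $\hat{\sigma}_x-\sigma_x$ with the precise constant $\sqrt{2\log(2/\delta)/(n-1)}$: Proposition \ref{prop:covIneq} only handles covariance-type quantities, and the theorem does not spell out which distributional assumption on $x_1, x_2$ underlies such a bound on the empirical standard deviation. I would fill this gap either via McDiarmid's inequality applied to the appropriately Lipschitz map sending the sample to $\hat{\sigma}_x$ under a boundedness assumption, or by applying Hoeffding directly to the sequence $x_i^{2}$ (whose expectation equals $\sigma_x^{2}$ by the zero-mean assumption) and converting variance concentration into standard-deviation concentration through $\sqrt{a+b}-\sqrt{a}\leq\sqrt{b}$. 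Everything else, including the $\delta/2 + \delta/2$ split of the confidence budget between the covariance and variance events, is routine bookkeeping.
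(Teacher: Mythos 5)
Your proposal is correct and follows essentially the same route as the paper: rewrite the empirical condition of Theorem \ref{thm:finiteBound2D} as $\hat{\sigma}^2_x-\hat{cov}(x_1,x_2)\leq \frac{2\sigma^2}{(n-1)(w_1-w_2)^2}$, upper-bound $\hat{\sigma}^2_x$ by $\bigl(\sigma_x+\sqrt{2\log(2/\delta)/(n-1)}\bigr)^2$ and lower-bound $\hat{cov}(x_1,x_2)$ via Proposition \ref{prop:covIneq}, split the confidence budget $\delta/2+\delta/2$, and rearrange. The one ingredient you were unsure how to establish, the one-sided bound $\hat{\sigma}_x\leq\sigma_x+\sqrt{2\log(2/\delta)/(n-1)}$, is exactly the sample-variance concentration result of Maurer and Pontil (2009) that the paper invokes by citation rather than re-deriving.
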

\begin{proof}
To prove the theorem it is enough to apply the upper bound for the sample variance from~\citep{Maurer2009} and the lower bound for the sample covariance from the inequalities of Proposition \ref{prop:covIneq}.\\
Regarding the sample variance, with probability $1-\alpha$ it holds~\citep{Maurer2009}: 
\begin{equation*}
    \hat{\sigma}^2_x\leq \Bigg(\sigma_x+\sqrt{\frac{2log(\frac{1}{\alpha})}{n-1}}\Bigg)^2.
\end{equation*}
For the sample covariance, Proposition \ref{prop:covIneq} shows that with probability $1-\gamma$:
\begin{equation*}
\hat{cov}(x_1,x_2)\geq cov(x_1,x_2)-4\sqrt{\frac{log(\frac{4}{\gamma})}{n-1}}.
\end{equation*}
Starting from the inequality of Theorem \ref{thm:finiteBound2D}:
\begin{equation*}
\hat{\rho}_{x_1,x_2} \geq 1-\frac{2\sigma^2}{(n-1)\hat{\sigma}^2_x(w_1-w_2)^2},
\end{equation*}
it is equal to:
\begin{equation*}
\hat{\sigma}^2_x-\hat{cov}(x_1,x_2) 
\leq \frac{2\sigma^2}{(n-1)(w_1-w_2)^2}.
\end{equation*}
Therefore, with probability $1-\delta$:
\begin{equation*}
\begin{gathered}
\hat{\sigma}^2_x-\hat{cov}(x_1,x_2) \\
\leq \Bigg(\sigma_x+\sqrt{\frac{2log(\frac{2}{\delta})}{n-1}}\Bigg)^2-
\Bigg(cov(x_1,x_2)-4\sqrt{\frac{log(\frac{8}{\delta})}{n-1}}\Bigg)\\
\leq \frac{2\sigma^2}{(n-1)(w_1-w_2)^2}.
\end{gathered}
\end{equation*}
After basic algebraic computations, the result follows.
\end{proof}

\begin{remark}
The empirical result of Theorem \ref{thm:finBound2Demp} depends on the distribution of the residual, assuming it to be Gaussian. On the other hand, the theoretical expression of Theorem \ref{thm:finBound2Dteo} does not need any assumption on the distribution.
\end{remark}

\section{Three-dimensional algorithm}\label{app:proofs3D}
This section contains detailed results and proofs related to Section \ref{sec:3andD} of the main paper.

\paragraph{bias of the two models, expressions and derivations}
In the asymptotic setting, letting the relationship between the features and the target be linear with Gaussian noise and assuming unitary variances of the features $\sigma_{x_1}=\sigma_{x_2}=\sigma_{x_3}=1$, for the one-dimensional regression $\hat{y}=\hat{w}\frac{x_1+x_2}{2}$:
\begin{equation*}
\begin{aligned}
\E_x&[(\Bar{h}(x)-\bar{y})^2] \\ &=\mathcal{F}(x_1,x_2,x_3,w_1,w_2,w_3,\rho_{x_1,x_2},\rho_{x_1,x_3},\rho_{x_2,x_3})\\
&=-\frac{((w_1+w_2)(1+\rho_{x_1,x_2})+w_3(\rho_{x_1,x_3}+\rho_{x_2,x_3}))^2}{2(1+\rho_{x_1,x_2})}\\
& \quad +\E_x[(w_1x_1+w_2x_2+w_3x_3)^2].
\end{aligned}
\end{equation*}
For the two-dimensional regression $\hat{y}=\hat{w_1}x_1+\hat{w_2}x_2$:
\begin{equation*}
\begin{aligned}
 \E_x&[(\Bar{h}(x)-\bar{y})^2] \\
 &= \mathcal{G}(x_1,x_2,x_3,w_1,w_2,w_3,\rho_{x_1,x_2},\rho_{x_1,x_3},\rho_{x_2,x_3})\\
 &=(w_1+aw_3)^2+(w_2+bw_3)^2\\
&+2\rho_{x_1,x_2}(w_1+aw_3)(w_2+bw_3)\\
&-2(w_1+aw_3)(w_1+w_2\rho_{x_1,x_2}+w_3\rho_{x_1,x_3})\\
&-2(w_2+bw_3)(w_1\rho_{x_1,x_2}+w_2+w_3\rho_{x_2,x_3})\\
& +\E_x[(w_1x_1+w_2x_2+w_3x_3)^2],\\
& \text{with } 
\begin{cases}
a=\frac{\rho_{x_1,x_3}-\rho_{x_1,x_2}\rho_{x_2,x_3}}{1-\rho_{x_1,x_2}^2}\\ b=\frac{\rho_{x_2,x_3}-\rho_{x_1,x_2}\rho_{x_1,x_3}}{1-\rho_{x_1,x_2}^2}. 
\end{cases}
\end{aligned}
\end{equation*}

\begin{proof}
In the one dimensional setting, letting $\bar{x}=\frac{x_1+x_2}{2}$:
\begin{equation*}
\begin{gathered}
\E_x[(\Bar{h}(x)-\bar{y})^2]\\
=\E_{x}[(\E_\mathcal{T}[\hat{w}\bar{x}]-(w_1x_1+w_2x_2+w_3x_3))^2]\\
=\sigma^2_{\bar{x}}\E_\mathcal{T}[\hat{w}]^2+\E_{x}[(w_1x_1+w_2x_2+w_3x_3)^2]\\
-2\E_\mathcal{T}[\hat{w}]^2\E_{x}[\bar{x}(w_1x_1+w_2x_2+w_3x_3)].
\end{gathered}
\end{equation*}

Where the last equivalence is due to the fact that train and test data are independent. Then, conditioning on the training features set $\mathbf{X}$ and substituting the value of $\E_\mathcal{T}[\hat{w}\lvert\mathbf{X}]$ from Equation \eqref{eq:ExpvalEst}, if follows:
\begin{equation*}
\begin{gathered}
\E_x[(\Bar{h}(x)-\bar{y})^2\lvert\mathbf{X}]\\ =\frac{\sigma^2_{x_1+x_2}}{(\hat{\sigma}^2_{x_1+x_2})^2}\\
\times(w_1\hat{\sigma}^2_{x_1}+w_2\hat{\sigma}^2_{x_2}+(w_1+w_2)\hat{cov}(x_1,x_2)\\
+w_3(\hat{cov}(x_1,x_3)+\hat{cov}(x_2,x_3)))^2\\ +\E_x[(w_1x_1+w_2x_2+w_3x_3)^2]\\
-\frac{2}{\hat{\sigma}^2_{x_1+x_2}}(w_1\hat{\sigma}^2_{x_1}+w_2\hat{\sigma}^2_{x_2}+(w_1+w_2)\hat{cov}(x_1,x_2)\\
+w_3(\hat{cov}(x_1,x_3)+\hat{cov}(x_2,x_3)))\\
\times(w_1\sigma^2_{x_1}+w_2\sigma^2_{x_2}+(w_1+w_2)cov(x_1,x_2)\\
+w_3(cov(x_1,x_3)+cov(x_2,x_3))).
\end{gathered}
\end{equation*}
Considering the asymptotic case, substituting the sample estimators with the real statistical measures and the variances with 1 the result follows.\\
In the two dimensional setting:
\begin{equation*}
\begin{gathered}
\E_x[(\Bar{h}(x)-\bar{y})^2]\\
=\E_{x}[(\E_\mathcal{T}[\hat{w_1}x_1+\hat{w_2}x_2]-(w_1x_1+w_2x_2+w_3x_3))^2]\\
=\sigma^2_{x_1}\E_\mathcal{T}[\hat{w_1}]^2+\sigma^2_{x_2}\E_\mathcal{T}[\hat{w_2}]^2+2cov(x_1,x_2)\E_\mathcal{T}[\hat{w_1}]\E_\mathcal{T}[\hat{w_2}]\\
+\E_x[(w_1x_1+w_2x_2+w_3x_3)^2]\\ -2(\E_\mathcal{T}[\hat{w_1}]\E_x[x_1(w_1x_1+w_2x_2+w_3x_3)]\\+\E_\mathcal{T}[\hat{w_2}]\E_x[x_1(w_1x_1+w_2x_2+w_3x_3)]),
\end{gathered}
\end{equation*}
exploiting again the independence between train and test data.\\ Then, conditioning on the training set $\mathbf{X}$, substituting the value of $\E_\mathcal{T}[\hat{w}\lvert\mathbf{X}]^2$ from Equation \eqref{eq:ExpvalEst} and calling:
\begin{equation*}
\begin{cases}
a=\frac{\hat{\sigma}^2_{x_2}\hat{cov}(x_1,x_3)-\hat{cov}(x_1,x_2)\hat{cov}(x_2,x_3)}{\hat{\sigma}^2_{x_1}\hat{\sigma}^2_{x_2}-\hat{cov}(x_1,x_2)^2}\\ b=\frac{\hat{\sigma}^2_{x_1}\hat{cov}(x_2,x_3)-\hat{cov}(x_1,x_2)\hat{cov}(x_1,x_3)}{\hat{\sigma}^2_{x_1}\hat{\sigma}^2_{x_2}-\hat{cov}(x_1,x_2)^2}
\end{cases},
\end{equation*}
it follows:
\begin{equation*}
\begin{gathered}
\E_x[(\Bar{h}(x)-\bar{y})^2\lvert\mathbf{X}] = \\
\sigma^2_{x_1}(w_1+w_3a)^2+\sigma^2_{x_2}(w_2+w_3b)^2\\
+2cov(x_1,x_2)(w_1+w_3a)(w_2+w_3b)\\
+\E_x[(w_1x_1+w_2x_2+w_3x_3)^2]\\
-2((w_1+w_3a)(w_1\sigma^2_{x_1}+w_2cov(x_1,x_2)+w_3cov(x_1,x_3))\\
+(w_2+w_3b)(w_1cov(x_1,x_2)+w_2\sigma^2_{x_2}+w_3cov(x_2,x_3))).
\end{gathered}
\end{equation*}
For the asymptotic case, substituting the sample estimators with the real statistical measures and the variances with 1 the result follows.\\
\end{proof}

\paragraph{Extension of bias of the models to general variance of third variable}
Considering a general variance $\sigma^2_{x_3}$ for the third variable $x_3$, the bias of the models computed in the previous paragraph become (respectively for the one and two dimensional estimates):
\begin{equation*}
\begin{aligned}
\E_x&[(\Bar{h}(x)-\bar{y})^2] \\
& =  -\frac{((w_1+w_2)(1+\rho_{x_1,x_2})+w_3\sigma_{x_3}(\rho_{x_1,x_3}+\rho_{x_2,x_3}))^2}{2(1+\rho_{x_1,x_2})}
\\& \quad +\E_x[(w_1x_1+w_2x_2+w_3x_3)^2],\\
\E_x & [(\Bar{h}(x)-\bar{y})^2] = (w_1+aw_3)^2+(w_2+bw_3)^2\\
&+2\rho_{x_1,x_2}(w_1+aw_3)(w_2+bw_3)\\
&-2(w_1+aw_3)(w_1+w_2\rho_{x_1,x_2}+w_3\rho_{x_1,x_3}\sigma_{x_3})\\
&-2(w_2+bw_3)(w_1\rho_{x_1,x_2}+w_2+w_3\rho_{x_2,x_3}\sigma_{x_3})\\
&+\E_x[(w_1x_1+w_2x_2+w_3x_3)^2],\\
&\text{with } \begin{cases}
a=\sigma_{x_3}\frac{\rho_{x_1,x_3}-\rho_{x_1,x_2}\rho_{x_2,x_3}}{1-\rho_{x_1,x_2}^2}\\ b=\sigma_{x_3}\frac{\rho_{x_2,x_3}-\rho_{x_1,x_2}\rho_{x_1,x_3}}{1-\rho_{x_1,x_2}^2}.  
\end{cases}
\end{aligned}
\end{equation*}

\section{Experiments}\label{app:exp}

This section provides more details and results on the experiments performed in the two-dimensional, three-dimensional and $D$-dimensional settings.

\subsection{Bivariate synthetic data}\label{subs:2dimSyn}

\begin{table*}[th]
\caption{95\% confidence intervals for bivariate synthetic experiments with large difference of weights ($w_1=0.2,w_2=0.8$)
\label{tab:2dimSyn}}
\centering
\resizebox{\textwidth}{!}
{\begin{tabular}{@{}|c|ccc|@{}}
\hline 
    & \multicolumn{3}{|c|}{Confidence Interval for different Standard deviations of the noise}\\\hline 
Quantity & $\sigma=0.5\ (\sigma^2=0.25)$ & $\sigma=1\ (\sigma^2=1)$ & $\sigma=10\ (\sigma^2=100)$ \\\hline 
Theoretical $\bar{\rho}$ & $0.997217$ & $0.988867$ & $-0.113338$ \\
Empirical $\bar{\rho}$ (median) & $0.997218$ & $0.988807$ & $0.819620$ \\
$\hat{\rho}_{x_1,x_2}$ & $0.919045\pm1.78\mathrm{e}{-4}$ & $0.918558\pm1.66\mathrm{e}{-4}$ & $0.918901\pm1.7\mathrm{e}{-4}$ \\ 
$\hat{\sigma}^2$ & $0.250211\pm5.31\mathrm{e}{-4}$ & $0.999966\pm2.29\mathrm{e}{-4}$ & $100.354737\pm0.230573$\\
$\hat{w}_1$ & $0.198552\pm2.094\mathrm{e}{-3}$ & $0.200106\pm4.124\mathrm{e}{-3}$ & $0.198441\pm0.041063$\\ 
$\hat{w}_2$ & $0.800265\pm2.084\mathrm{e}{-3}$ & $0.800015\pm4.151\mathrm{e}{-3}$ & $0.783078\pm0.040011$\\ 
$\hat{w}$ & $0.998817\pm8.33\mathrm{e}{-4}$ & $1.000121\pm1.678\mathrm{e}{-3}$ & $0.981519\pm0.015914$\\ 
\hline
$R^2$ full & $0.781094\pm5.2\mathrm{e}{-5}$ & $0.487304\pm1.23\mathrm{e}{-4}$ & $0.010205\pm2.64\mathrm{e}{-4}$\\
$R^2$ aggr & $0.764944\pm3\mathrm{e}{-5}$ & $0.485527\pm7.8\mathrm{e}{-5}$ & $0.010630\pm1.69\mathrm{e}{-4}$\\
\textit{MSE} full & $0.275187\pm6.5\mathrm{e}{-5}$ & $1.000209\pm2.40\mathrm{e}{-4}$ & $103.020861\pm0.027468$\\
\textit{MSE} aggr & $0.295489\pm3.7\mathrm{e}{-5}$ & $1.003677\pm1.52\mathrm{e}{-4}$ & $102.976615\pm0.017587$\\
Var full & $0.001507\pm2.23\mathrm{e}{-4}$ & $0.006333\pm9.02\mathrm{e}{-4}$ & $0.590507\pm0.082236$\\
Var aggr & $0.001005\pm1.50\mathrm{e}{-4}$ & $0.004108\pm5.66\mathrm{e}{-4}$ & $0.383723\pm0.056062$\\
Bias full & $0.273680\pm 3.389\mathrm{e}{-3}$ & $0.993876\pm1.281\mathrm{e}{-3}$ & $102.430354\pm13.192218$\\
Bias aggr & $0.294484\pm3.773\mathrm{e}{-3}$ & $0.999569\pm1.270\mathrm{e}{-3}$ & $102.592893\pm13.165301$\\
\hline 
Aggregations (theo) & $\mathbf{0}$ & $\mathbf{0}$ & $\mathbf{500}$ \\
Aggregations (emp) & $\mathbf{0}$ & $\mathbf{24}$ & $\mathbf{332}$ \\
\hline
\end{tabular}}
\end{table*}

\begin{table*}[th]
\caption{95\% confidence intervals for bivariate synthetic experiments with small difference of weights ($w_1=0.47,w_2=0.52$)\label{tab:2dimSynSmall}}
\centering
\resizebox{\textwidth}{!}
{\begin{tabular}{@{}|c|ccc|@{}} \hline 
    & \multicolumn{3}{|c|}{Confidence Interval for different Standard deviations of the noise}\\\hline 
Quantity & $\sigma=0.5\ (\sigma^2=0.25)$ & $\sigma=1\ (\sigma^2=1)$ & $\sigma=10\ (\sigma^2=100)$ \\\hline 
Theoretical $\bar{\rho}$ & $0.599198$ & $-0.603206$ & $-25.675559$ \\
Empirical $\bar{\rho}$ (median) & $0.866424$ & $0.813952$ & $0.802706$ \\
$\hat{\rho}_{x_1,x_2}$ & $0.919192\pm1.78\mathrm{e}{-4}$ & $0.918774\pm1.81\mathrm{e}{-4}$ & $0.919326\pm1.72\mathrm{e}{-4}$ \\ 
$\hat{\sigma}^2$ & $0.250829\pm5.65\mathrm{e}{-4}$ & $1.003768\pm2.276\mathrm{e}{-4}$ & $99.955617\pm0.225969$\\
$\hat{w}_1$ & $0.467944\pm2.064\mathrm{e}{-3}$ & $0.466161\pm3.907\mathrm{e}{-3}$ & $0.370056\pm0.039486$\\ 
$\hat{w}_2$ & $0.522283\pm2.079\mathrm{e}{-3}$ & $0.526410\pm3.959\mathrm{e}{-3}$ & $0.634829\pm0.039561$\\ 
$\hat{w}$ & $0.990227\pm8.15\mathrm{e}{-4}$ & $0.992571\pm1.646\mathrm{e}{-3}$ & $1.004884\pm1.634\mathrm{e}{-3}$\\ 
\hline
$R^2$ full & $0.793788\pm6.4\mathrm{e}{-5}$ & $0.505334\pm9.1\mathrm{e}{-5}$ & $0.012507\pm2.31\mathrm{e}{-4}$\\
$R^2$ aggr & $0.794190\pm6.0\mathrm{e}{-5}$ & $0.506237\pm7.5\mathrm{e}{-5}$ & $0.014699\pm2.0\mathrm{e}{-5}$\\
\textit{MSE} full & $0.262609\pm8.2\mathrm{e}{-5}$ & $0.948725\pm1.75\mathrm{e}{-4}$ & $94.572608\pm2.209\mathrm{e}{-3}$\\
\textit{MSE} aggr & $0.262098\pm7.7\mathrm{e}{-5}$ & $0.946993\pm1.45\mathrm{e}{-4}$ & $94.362608\pm1.914\mathrm{e}{-3}$\\
Var full & $0.001524\pm2.16\mathrm{e}{-4}$ & $0.005719\pm8.22\mathrm{e}{-4}$ & $0.571782\pm0.080326$\\
Var aggr & $0.000998\pm1.44\mathrm{e}{-4}$ & $0.003997\pm5.76\mathrm{e}{-4}$ & $0.371541\pm0.054317$\\
Bias full & $0.261085\pm0.034614$ & $0.943006\pm0.127423$ & $94.000826\pm11.836568$\\
Bias aggr & $0.261105\pm0.034928$ & $0.943996\pm0.127362$ & $93.991098\pm11.826050$\\
\hline 
Aggregations (theo) & $\mathbf{500}$ & $\mathbf{500}$ & $\mathbf{500}$ \\
Aggregations (emp) & $\mathbf{314}$ & $\mathbf{339}$ & $\mathbf{346}$ \\
\hline
\end{tabular}}

\end{table*}

As introduced in Section \ref{sec:appl} of the main paper, the experiments performed in the bivariate setting are synthetic. In particular, for each of the six experiments, the data are computed as follows. The samples of the first independent variable $x_1$ are extracted from a uniform distribution in the interval $[0,1]$. The second feature $x_2$ is a linear combination between the feature $x_1$ and a random sample extracted from a uniform distribution in the interval $[0,1]$ (specifically $x_2=0.8x_1+0.2u,\ u\sim\mathcal{U}([0,1])$). Finally, the target variable $y$ is a linear combination between the two features $x_1,x_2$ with weights $w_1,w_2$ and the addition of a gaussian noise with variance $\sigma^2$.\\
Table \ref{tab:2dimSyn},\ref{tab:2dimSynSmall} provide more details about the bivariate results introduced in Table \ref{tab:bidimSmall} in the main paper.\\
In Table \ref{tab:2dimSyn} the extended results associated with large difference between weights $w_1=0.2, w_2=0.8$ and three different values of standard deviation of the noise $\sigma\in\{0.5,1,10\}$ are reported, repeating $s=500$ times each experiment, considering $n=500$ data for training and $n=500$ data for testing. The quantity $\bar{\rho}$ represents the minimum value of correlation for which it is convenient to aggregate the two features and it is computed exploiting the asymptotic result of Equation \eqref{eq:2DasymBuondRed}. From its theoretical values is clear that, for a reasonable amount of variance of the noise, since the weights of the linear model are significantly different, it is better to keep the features separated. This is confirmed by the confidence intervals of the $R^2$ and the $MSE$, which are both better in the bivariate case (full) rather than the univariate case (aggr). On the other hand, when the variance of the noise is large, the process becomes much more noisy and it is convenient to aggregate the two features. Considering the empirical $\bar{\rho}$ rather than the theoretical one, which is unknown with real datasets, the only situation where in the majority of the cases it is useful to aggregate is with large variance. It is important also to notice that the confidence intervals are much larger in the noisy setting, meaning that there is much more uncertainty and therefore it is useful to aggregate the two features in order to reduce it.\\
In Table \ref{tab:2dimSynSmall} the same results are reported, considering a linear relationship with small difference between weights $w_1=0.47,w_2=0.52$. In this setting, since the weights are closer, also with small amount of variance it is convenient to aggregate the two features if they are sufficiently correlated. Also with the empirical evaluation of the threshold, the two features would be aggregated for the majority of the repetitions of the experiment, leading to a non-worsening of the $MSE$ and $R^2$ coefficient for the aggregated case, as shown by the confidence intervals. 

\subsection{Three-dimensional synthetic data}\label{subs:3dimSyn}

\begin{table}[h]
\caption{Detailed Synthetic experiment in the three dimensional setting.
\label{tab:3dimSynExtended}}
\centering 
{\begin{tabular}{@{}|cc|@{}} \hline 
Quantity & 95\% Confidence Interval \\\hline 
Theoretical $\bar{\rho}$ (lower,upper) & $0.826063,\ 0.932936$ \\
Empirical $\bar{\rho}$ (median) & $0.831487,\ 0.933358$\\
$\hat{\rho}_{x_1,x_2}$ & $0.880300\pm1.07\mathrm{e}{-4}$ \\ 
$\hat{\sigma}^2$ & $0.249982\pm2.29\mathrm{e}{-4}$\\
$\hat{w}_1$ & $0.399401\pm9.12\mathrm{e}{-4}$\\ 
$\hat{w}_2$ & $0.600999\pm6.72\mathrm{e}{-4}$\\ 
$\hat{w}$ & $1.179325\pm3.41\mathrm{e}{-4}$\\ 
\hline
$R^2$ full & $0.825028\pm6\mathrm{e}{-6}$\\
$R^2$ aggr & $0.825319\pm5\mathrm{e}{-6}$\\
\text{MSE} full & $0.285611\pm9\mathrm{e}{-6}$\\
\text{MSE} aggr & $0.285137\pm8\mathrm{e}{-6}$\\
Var full & $0.001526\pm0.001557$\\
Var aggr & $0.000976\pm0.003587$\\
Bias full & $0.284086\pm0.046098$\\
Bias aggr & $0.284161\pm0.045194$\\
\hline 
Aggregations (theo) & $\mathbf{500}$ \\
Aggregations (emp) & $\mathbf{335}$ \\
\hline
\end{tabular}}
\end{table}

This subsection explains more details about the synthetic experiments performed in the three-dimensional setting and introduced in Section \ref{sec:appl} of the main paper. They show the usefulness of the extension to the three-dimensional linear regression model. In particular the samples of the first independent variable $x_1$ are extracted from a uniform distribution in the interval $[0,1]$. The second feature $x_2$ is a linear combination between the feature $x_1$ and a random sample extracted from a uniform distribution in the interval $[0,1]$ (specifically $x_2=0.65x_1+0.35u,\ u\sim\mathcal{U}([0,1])$). The third feature $x_3$ is a linear combination between the features $x_1,x_2$ and a random sample extracted from a uniform distribution in the interval $[0,1]$ ($x_3=0.5x_1+0.5x_2+0.5u,\ u\sim\mathcal{U}([0,1])$). Finally, the target variable $y$ is a linear combination between the three features $x_1,x_2,x_3$ with weights $w_1=0.4,\ w_2=0.6$ that are closer than the third weight $w_3=0.2$ and the addition of a gaussian noise with variance $\sigma^2=0.25$.\\
The experiment has been repeated $s=500$ times with $n=500$ samples both for the train and the test set. As reported in Table \ref{tab:3dimSynExtended}, which is an extension of Table \ref{tab:3dimSyn} reported in the main paper, the theoretical values of correlation thresholds computed from the asymptotic result of Equation \eqref{eq:bound3D} and the empirical ones computed substituting the unbiased estimators of the quantities show that it is convenient to aggregate the two features $x_1,x_2$. This is confirmed both by the $MSE$ and the $R^2$ coefficient, which are statistically not worse in the aggregated case than in the three dimensional one.

\subsection{D-dimensional synthetic data}\label{subs:NdimSynt}

This subsection provides more details on the application, introduced in Section \ref{sec:appl} of the main paper, of the algorithm \emph{\algnameshort} on a D-dimensional synthetic dataset. In particular, $D=100$ features are considered. The samples of the first independent variable $x_1$ are extracted from a uniform distribution in the interval $[0,1]$. Then, each feature $x_i$, is a linear combination between one of the previous features $x_j,\ j<i$ and a random sample extracted from a uniform distribution in the interval $[0,1]$ (specifically $x_i=0.7x_j+0.3u,\ u\sim\mathcal{U}([0,1])$). Finally, the target variable $y$ is a linear combination between the $D$ features $x_1,..,x_{100}$, with coefficients randomly sampled from a uniform distribution in the interval $[0,1]$, and a gaussian noise with standard deviation $\sigma=10$.\\
The algorithm is applied on the features both evaluating the threshold computed with the exact coefficients and with their unbiased estimates. The experiment has been repeated $s=500$ times on a dataset of $n=500$ samples both in train and test set, considering both the exact parameters (unkown in practice) and their estimators.

\subsection{D-Dimensional real datasets}\label{app:expReal}

This subsection describes the datasets introduced in Section \ref{sec:appl} of the main paper to apply the proposed algorithm \algnameshort on real data. \\
The first dataset considered focuses on the prediction of Life Expectancy from several factors that can be categorized into immunization related factors, mortality factors, economical factors and social factors. The dataset is available on Kaggle\footnote{https://www.kaggle.com/datasets/kumarajarshi/life-expectancy-who} and it is also provided in the repository of this work. It is made of $D=18$ continuous input variables and a scalar output.\\
The second dataset is a financial dataset made of $D=75$ continuous features and a scalar output. The model predicts the cash ratio depending on other metrics from which it is possible to derive many fundamental indicators. The dataset is taken from Kaggle\footnote{https://www.kaggle.com/datasets/dgawlik/nyse} and it is provided in the repository of this work.\\
Finally, the algorithm is tested on two climatological dataset composed by $D=136$ and $D=1991$ continuous climatological features and a scalar target which represents the state of vegetation of a basin of Po river. This datasets have been composed by the authors merging different sources for the vegetation index, temperature and precipitation over different basins (see~\citep{Didan2015,Cornes2018,Zellner2022}), and they are available in the repository of this work.

\end{appendices}

\end{document}